\newcommand{\myfnsymbol}[1]{%
  \expandafter\@myfnsymbol\csname c@#1\endcsname
}
\newcommand{\@myfnsymbol}[1]{%
  \ifcase #1
    % 0
  \or 1% 1
  \or 2% 2
  \or 3% 3
  \or \TextOrMath{\textasteriskcentered}{*}% 4
  \or \TextOrMath{\textdagger}{\dagger}% 5
  \fi
}
\newcommand{\affiliationA}{\@myfnsymbol{1}}
\newcommand{\affiliationB}{\@myfnsymbol{2}}
\newcommand{\affiliationC}{\@myfnsymbol{3}}
\newcommand{\correspondingA}{\@myfnsymbol{4}}
\newcommand{\correspondingB}{\@myfnsymbol{5}}
\numberwithin{equation}{section}
\theoremstyle{plain}
\newtheorem{theorem}{Theorem}[section]
\newtheorem{proposition}[theorem]{Proposition}
\newtheorem{lemma}[theorem]{Lemma}
\newtheorem{corollary}[theorem]{Corollary}
\theoremstyle{definition}
\newtheorem{definition}[theorem]{Definition}
\newtheorem{assumption}[theorem]{Assumption}
\theoremstyle{remark}
\def\eqref#1{equation~\ref{#1}}
\def\1{\bm{1}}
\DeclareMathAlphabet{\mathsfit}{\encodingdefault}{\sfdefault}{m}{sl}
\SetMathAlphabet{\mathsfit}{bold}{\encodingdefault}{\sfdefault}{bx}{n}
\newcommand{\E}{\mathbb{E}}
\newcommand{\R}{\mathbb{R}}
\newcommand{\KL}{D_{\mathrm{KL}}}
\newcommand{\Var}{\mathrm{Var}}
\DeclareMathOperator*{\argmin}{arg\,min}
\renewcommand{\eqref}[1]{(\ref{#1})}
\renewcommand{\P}{{\mathbb{P}}}
\begin{document}

\title{Semi-Implicit Functional Gradient Flow for Efficient Sampling}

\author{
Shiyue Zhang\textsuperscript{\affiliationA,\correspondingA},
Ziheng Cheng\textsuperscript{\affiliationB,\correspondingA},
Cheng Zhang\textsuperscript{\affiliationC,\correspondingB}
}

\date{
}

\renewcommand{\thefootnote}{\myfnsymbol{footnote}}
\maketitle

\footnotetext[1]{School of Mathematical Sciences, Peking University. Email: zhangshiyue@stu.pku.edu.cn}%
\footnotetext[2]{Department of Industrial Engineering and Operations Research, University of California, Berkeley. \\Email: ziheng\_cheng@berkeley.edu}%
\footnotetext[3]{School of Mathematical Sciences and Center for Statistical Science, Peking University. Email: chengzhang@math.pku.edu.cn}
\footnotetext[4]{Equal contribution.}%
\footnotetext[5]{Corresponding author.}%

\setcounter{footnote}{0}% Restart footnote counter
% Footnotes for rest of document uses \fnsymbol (or whatever you choose)
\renewcommand{\thefootnote}{\fnsymbol{footnote}}

%%%%%%%%%%%%%%%%%%%%%%%%%%%%%%%%%%%%%%%%%%%%%%%%%%%%%%%%%%%%%

\begin{abstract}
Particle-based variational inference methods (ParVIs) use nonparametric variational families represented by particles to approximate the target distribution according to the kernelized Wasserstein gradient flow for the Kullback-Leibler (KL) divergence.
Although functional gradient flows have been introduced to expand the kernel space for better flexibility, the deterministic updating mechanism may limit exploration and require expensive repetitive runs for new samples.
In this paper, we propose Semi-Implicit Functional Gradient flow (SIFG), a functional gradient ParVI method that uses perturbed particles with Gaussian noise as the approximation family.
We show that the corresponding functional gradient flow, which can be estimated via denoising score matching with neural networks, exhibits strong theoretical convergence guarantees
due to a higher-order smoothness brought to the approximation family via Gaussian perturbation.
In addition, we present an adaptive version of our method that automatically selects the appropriate noise magnitude during sampling, striking a good balance between exploration efficiency and approximation accuracy. 
Extensive experiments on both simulated and real-world datasets demonstrate the effectiveness and efficiency of the proposed framework.
\end{abstract}

\noindent%
{\bf Keywords:}  particle-based variational inference, semi-implicit distribution, functional gradient flow, denoising score matching.

\section{Introduction}
Bayesian inference provides powerful tools for modeling complex data and reasoning under uncertainty, which is a fundamental and challenging task in modern statistics and machine learning, with a wide range of applications in molecular dynamics \citep{krauth2006mech}, inverse problems \citep{dashti2013inverse}, and diffusion models \citep{song2020score}.
One of the core problems of modern Bayesian inference is to estimate the posterior distribution, which is often intractable for complex models, necessitating the use of approximation methods. 
Two common approaches to address this intractability are Markov Chain Monte Carlo (MCMC) and Variational Inference (VI).
By reformulating the inference problem into an optimization problem, VI seeks to find an approximation within a specific distribution family that minimizes the Kullback-Leibler (KL) divergence to the posterior \citep{Jordan1999AnIT, Wainwright08, Blei2017VariationalIA, yao2022mean}.
While VI allows fast training and scales effectively to large datasets with efficient optimization algorithms, the choice of variational distribution family may limit its approximation power.
On the other hand, MCMC simulates a Markov chain that satisfies the detailed balance condition to directly draw samples from the posterior \citep{duane87, MALA02, Neal2011-yo, SGLD, SGHMC}.
Although MCMC is asymptotically unbiased, it often suffers from slow convergence, and assessing convergence can be challenging.

Recently, there has been a surge of interest in the gradient flow formulation of both MCMC and VI, leading to the development of particle-based variational inference methods (ParVI) that combine the strengths of both approaches \citep{Liu2016SVGD, Chen2018unified, Liu2019Understanding, di2021neural, fan2022variational, alvarez-melis2022optimizing}.
From a variational perspective, ParVIs adopt a nonparametric approach, where the approximating distribution is represented by a set of particles.
These particles are iteratively updated in the steepest descent direction to minimize the KL divergence to the posterior, following the gradient flow in a space of distributions equipped with specific geometries (e.g., the Wasserstein metric).
This nonparametric nature enhances the flexibility of ParVIs compared to classical parametric VI methods, while the interactions between particles improve particle efficiency relative to MCMC.
One of the most prominent particle-based VI methods is Stein Variational Gradient Descent (SVGD) \citep{Liu2016SVGD}, where the steepest descent direction has a close form when confined within the unit ball of a reproducing kernel Hilbert space (RKHS) \citep{Liu2017SVGF, Chewi2020chi-squared}.
However, the performance of SVGD is highly dependent on the choice of the kernel function, and the quadratic computational complexity of the kernel matrix also hinders practical usage of a large number of particles.

To address these challenges, functional gradient flow methods have been introduced that expand the kernel space to enable more flexible gradient flow approximations \citep{Hu18, grathwohl2020, di2021neural, dong2023particlebased, cheng2023gwg}.
By leveraging richer function classes, such as neural networks, these functional gradient approaches have shown improved performance over vanilla SVGD while avoiding the need for costly kernel computations. However, the deterministic updating mechanism in these approaches often leads to limited exploration and can result in mode collapse when dealing with non-convex, multi-modal distributions. Furthermore, multiple runs are typically required to generate diverse sets of samples, adding to the computational burden.

In this work, we introduce Semi-Implicit Functional Gradient flow (SIFG), a stochastic ParVI method that enables both efficient exploration and diverse sampling.
The core idea is to use perturbed particles with Gaussian noise, rather than the original noiseless particles, as the variational approximation.
The introduction of Gaussian noise provides two key benefits: (i) it turns the deterministic updating mechanism into a stochastic one, facilitating exploration and aiding the discovery of new modes, and (ii) it prevents convergence to a single optimal set of particles, allowing the method to generate different samples on the fly.
As the variational distribution represented by the perturbed particles has a natural hierarchical structure, akin to those used in semi-implicit variational inference, we name our method Semi-Implicit Functional Gradient flow.
Moreover, due to this hierarchical structure, the corresponding Wasserstein gradient flow can be efficiently estimated via denoising score matching \citep{vincent2011connection} with neural networks, which scales well in high-dimensional settings \citep{ho2020ddpm}.
Given accurate neural network approximations and some mild smoothness conditions,
we establish a convergence guarantee for SIFG to a stationary point of the KL objective function.
This extends the convergence theory of variational inference with fixed-variance mixture Gaussians \citep{tom2024theoretical} to semi-implicit variational distributions, with a rate independent of the number of mixtures.
We also provide statistical guarantees for the sample complexity required to satisfy the neural network approximation assumption.
In addition, we propose an adaptive procedure that can automatically adjust the noise magnitude, balancing between sample accuracy and diversity. Extensive numerical experiments on both simulated and real-world datasets demonstrate the advantages of our method over existing ParVI approaches.

The rest of the paper is organized as follows. Section \ref{sec:back} provides a brief overview
of particle-based variational inference, and the more advanced variants, functional gradient flows. 
In Section \ref{sec:method}, we introduce our method semi-implicit functional gradient flow.
Section \ref{sec:convergence_parvi} presents the convergence analysis, including both optimization and statistical guarantees.
Numerical experiments are presented in Section \ref{section:exp}.
Finally, Section \ref{sec:concl} concludes the paper.

\section{Background}\label{sec:back}

\paragraph{Notations}
Let $\mathcal{P}(\mathbb{R}^d)$ denote all probability distributions on $\mathbb{R}^d$ that are absolutely continuous with respect to the Lebesgue measure. We do not distinguish a probabilistic distribution from its density function.
The same notation $\|\cdot\|$ is used for the standard Euclidean norm of a vector and the operator norm of a matrix or high-dimensional tensor.
We denote the inner product in $\R^d$ (or $L^2(\R^d)$) and a certain Hilbert space $\mathcal{H}$ by $\left\langle \cdot,\cdot \right\rangle$ and $\left\langle \cdot,\cdot \right\rangle_{\mathcal{H}}$, respectively. 
For a distribution $\mu(t,x)$, we use $\dot{\mu}$ and $\nabla\mu$ to denote the time and space derivatives of $\mu(t,x)$. Let $\mathcal{F}(\mu): \mathcal{P}(\mathbb{R}^d)\to \mathbb{R}$ be the energy functional of $\mu$. 
We use $\frac{\delta \mathcal{F}}{\delta \mu}$ to denote the first variation of the functional $\mathcal{F}$ and $\text{D}\mathcal{F}$ to denote the Fr$\acute{\text{e}}$chet sub-differential of the functional $\mathcal{F}$. We use $\nabla_{W_2}\mathcal{F}:=\nabla \frac{\delta \mathcal{F}}{\delta \mu}$ to denote the Wasserstein-2 gradient of $\mathcal{F}$.
We use the $\mathcal{O}(\cdot)$ notation to denote an upper bound that omits constant factors and lower-order terms.

\subsection{Particle-based Variational Inference}\label{subsec:wassgf}

Let $\pi\in\mathcal{P}(\mathbb{R}^d)$ denote the target probability distribution that we aim to sample from.
Unlike MCMC methods, VI tackles this problem by identifying the closest member $\mu^*$ from a family of candidate distributions $\mathcal{Q}$ (the variational family) that minimizes some statistical distance to the target distribution $\pi$, typically the Kullback-Leibler (KL) divergence:
\begin{equation}\label{eq:vari}    \mu^*:=\argmin_{\mu\in\mathcal{Q}} \KL (\mu\|\pi).
\end{equation}

Instead of assuming a parametric form for $\mathcal{Q}$ as in standard VI \citep{Jordan1999AnIT, bishop2000variational, Blei2017VariationalIA}, particle-based variational inference takes a nonparametric approach where $\mathcal{Q}$ is represented as a set of particles.
Let $\mu_0\in\mathcal{P}(\mathbb{R}^d)$ be the initial probability distribution that we can easily sample from (e.g., a Gaussian distribution).
Starting from $\mu_0$ and the initial particles $z_0^1,\ldots,z_0^M\sim\mu_0$, for $t>0$, we update the particle $z_t^{i}, i=1,\ldots,M$ according to an ODE system $dz_t=v_t(z_t)dt$, where $v_t$ is the velocity field at time $t$.
The distribution $\mu_t$ then evolves according to the continuity equation $\dot{\mu}_t + \nabla \cdot (\mu_t v_t)=0$, and the KL divergence decreases at the following rate:
\begin{equation}\label{eq:dKL}
    \frac{d}{dt} \KL(\mu_t\|\pi)=-\langle \log\frac{\mu_t}{\pi}, \nabla\cdot(\mu_t v_t)\rangle= -\E_{\mu_t} \langle \nabla\log\frac{\pi}{\mu_t}, v_t\rangle.
\end{equation}
The key challenge in particle-based VI is to find an appropriate $v_t$ that effectively drives the particles toward the target distribution $\pi$ (e.g., by decreasing the KL).
Given a Hilbert space $\mathcal{H}$, one can find such a $v_t$ in $\mathcal{H}$ by minimizing the following objective:
\begin{equation}\label{eq:optim}
    \min_{v_t\in \mathcal{H}} -\E_{\mu_t} \langle \nabla\log\frac{\pi}{\mu_t}, v_t\rangle + \frac{1}{2} \|v_t\|^2_{\mathcal{H}}.
\end{equation}
Although \eqref{eq:optim} takes a quadratic form, $v_t$ remains difficult to estimate as the score function $\nabla\log\mu_t$ of the current particle distribution is intractable.
This issue has been resolved in the celebrated Stein Variational Gradient Descent (SVGD) \citep{Liu2016SVGD}, where $\mathcal{H}$ is chosen as a Reproducing Kernel Hilbert Space (RKHS) with a kernel function $k(\cdot, \cdot)$.
Leveraging Stein's identity, it can be shown that the optimal velocity field for \eqref{eq:optim} takes the following form
\begin{equation}\label{eq:svgd1}
v_t^*(\cdot) = \E_{\mu_t}[k(\cdot, x) \nabla\log \pi(x) + \nabla_xk(\cdot, x)],
\end{equation}
which can be estimated using the current particles $z_t^1,\ldots,z_t^M\sim\mu_t$.

\subsection{Functional Gradient Flow}\label{sec:FGF}

From a geometric perspective, particle-based variational inference approaches (ParVIs) can be interpreted as gradient flows of probability distributions
that minimize a specific energy functional, typically the KL divergence.
Intuitively, a gradient flow describes a dynamical system that evolves to dissipate the objective energy functional $\mathcal{F}=\mathcal{F}(\mu)$ as efficiently as possible, where the rate of dissipation is governed by the geometric structure of the probability space.
Different geometric structures lead to different types of gradient flows.
 
In Euclidean space, given $u_t\in \mathbb{R}^d$ and an energy function $F:\mathbb{R}^d\to\mathbb{R}$, the gradient flow equation takes the form: $\dot{u}_t=-\nabla F(u_t)$. From the perspective of the variational principle, this equation can be rewritten as: $\nabla_{\dot{u}}(\frac{1}{2}\|\dot{u}_t\|^2)+\nabla_u F(u_t)=0$. Here $\frac{1}{2}\|\cdot\|^2$ depicts the standard Euclidean geometry dissipation.
A similar approach can be applied in the space of probability distributions.
One of the most commonly used geometries in this context is the 2-Wasserstein geometry, which can be derived from the Benamou-Brenier dynamic formulation of the 2-Wasserstein distance \citep{benamou2000wass}:
\begin{equation}
    \frac{1}{2}W^2_2(\mu_0,\mu_h)=\inf \{\int_0^h \frac{1}{2}\|v_t\|^2_{L^2(\mu_t)}dt:\dot{\mu}_t=-\nabla\cdot(\mu_t v_t)\}.
\end{equation}
The 2-Wasserstein dissipation is defined as $\mathfrak{R}_{W_2}(\mu_t,\dot{\mu}_t)=\inf \{\frac{1}{2}\|v_t\|^2_{L^2(\mu_t)}:\dot{\mu}_t=-\nabla\cdot(\mu_t v_t)\}$.
Similar to the Euclidean case, we define the Wasserstein gradient flow equation as: $\text{D}_{\dot{\mu}}\mathfrak{R}_{W_2}(\mu_t,\dot{\mu}_t)+\text{D}_{\mu}\mathcal{F}(\mu_t)=0$.
The solution of this equation leads to the velocity field $v_t=-\nabla\frac{\delta\mathcal{F}}{\delta\mu}(\mu_t):=-\nabla_{W_2}\mathcal{F}(\mu_t)$,
and thus the gradient flow equation becomes: $\dot{\mu}_t=\nabla\cdot(\mu_t\nabla\frac{\delta\mathcal{F}}{\delta\mu}(\mu_t))$.
Specifically, when $\mathcal{F}(\mu)=\KL(\mu\|\pi)$, the Wasserstein gradient flow equation simplifies to $\dot{\mu}_t=\nabla\cdot(\mu_t\nabla\log\frac{\mu_t}{\pi})$.
Please refer to Appendix \ref{app:wgf_basics} for further derivations on different gradient flows for various dissipation geometries.

Now suppose the kernel is square-integrable
$\|k\|_{L^2(\mu)}^2:=\int\int |k(x, x')|^2 d \mu(x)d \mu(x')<\infty$
with respect to a probability distribution $\mu$.
Under this assumption, 
the inclusion map from the associated RKHS $\mathcal{H}$ to $L^2(\mu)$, \(\text{Id} : \mathcal{H} \rightarrow L^2(\mu)\), is continuous.
Its adjoint operator, \(\mathcal{T}_{k,\mu}: L^2(\mu) \rightarrow \mathcal{H}\) is defined as
\begin{equation}\label{kernelop}
   \mathcal{T}_{k,\mu} g(x):=\int k\left(x, x^{\prime}\right) g\left(x^{\prime}\right) d \mu\left(x^{\prime}\right), \quad g \in L^2(\mu),
\end{equation}
which induces the integral operator $\mathcal{K}_{\mu}:= \text{Id} \circ \mathcal{T}_{k,\mu} :
L^2(\mu) \to L^2(\mu)$.
When $\mu$ is the standard Lebesgue measure, we denote $\mathcal{K}_{\mu}$ as $\mathcal{K}$.
Let the objective functional $\mathcal{F}$ be the KL divergence.
The SVGD dynamics can be expressed as a gradient flow $\dot{\mu}_t=\nabla\cdot(\mu_t\mathcal{K}_{\mu_t}\nabla\frac{\delta \mathcal{F}}{\delta \mu}(\mu_t))$ under the kernelized Wasserstein geometry, also named as the Stein geometry \citep{zhu2024rao}. 
Other gradient flows have also been explored using different geometric structures, such as the Fisher-Rao gradient flow, which is based on the Fisher-Rao metric \citep{carles2023fisher}.

While SVGD provides an effective approximation of gradient flows, its performance is sensitive to the choice of kernel function, and its computational cost scales quadratically with the number of particles due to the kernel matrix computation.
This has motivated recent research into functional gradient flows, which replace kernel-based representations with neural network parameterizations, allowing for more scalable and adaptive approximations of the gradient flow.
More specifically, functional gradient flow methods approximate the gradient flow by solving
\begin{equation}\label{eq:sm_full}
    v^*_t = \argmin_{ v\in\mathcal{{S}}}\E_{\mu_t} \left[-\langle \nabla\log \frac{\pi}{\mu_t}, v\rangle + g(v)\right] =\argmin_{ v\in\mathcal{{S}}}\E_{\mu_t} \left[-\langle \nabla\log \pi, v\rangle - \nabla\cdot v+ g(v)\right],
\end{equation}
where $\mathcal{S}$ is the neural network family.
The regularization term $g$ can be any Young function \citep{cheng2023gwg} that is  strictly convex and non-negative (e.g., $g(v)=\frac12\|v\|^2$), 
and the second equation follows from Stein's identity.

Existing functional gradient flow methods mainly differ in their choice of regularization, which corresponds to different dissipation geometries.
For example, \citet{di2021neural} proposed using neural networks instead of kernels to approximate the Wasserstein gradient with an $L_2$ regularization.
More recent works, such as \citet{dong2023particlebased} and \citet{cheng2023gwg}, have generalized this approach by considering quadratic and $L_p$ regularization terms, respectively.
Please refer to Appendix \ref{app:wgf_basics} for a detailed explanation.

\begin{figure*}[t]
   \centering
   \subfigure{
   \begin{minipage}[t]{0.24\linewidth}
   \centering
   \includegraphics[width=1\textwidth]{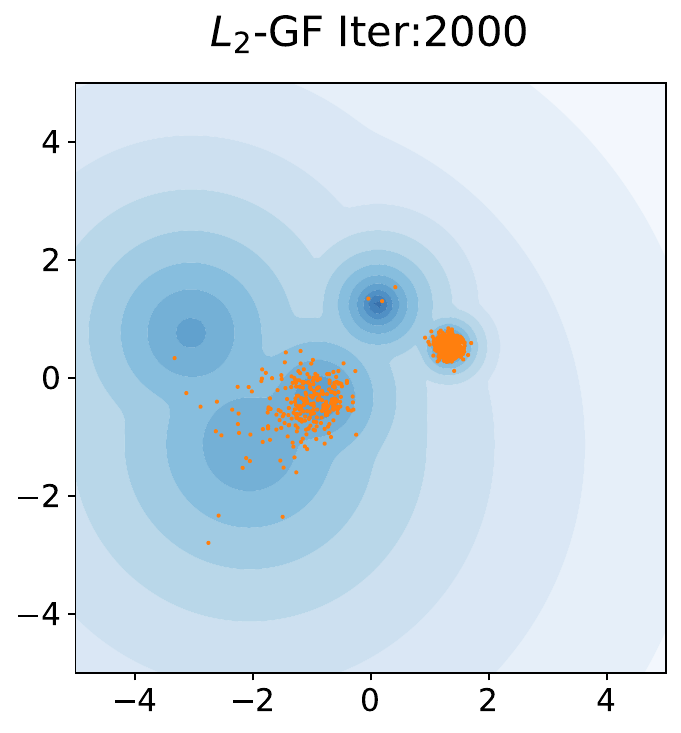}
   \end{minipage}%
   }%
   \hfill
   \subfigure{
   \begin{minipage}[t]{0.24\linewidth}
   \centering
   \includegraphics[width=1\textwidth]{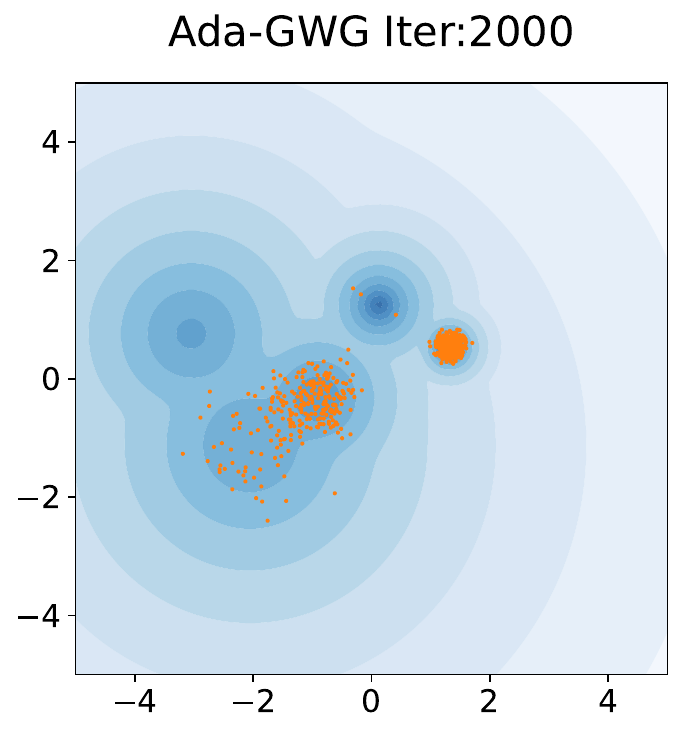}
   \end{minipage}%
   }%
   \hfill
   \subfigure{
   \begin{minipage}[t]{0.24\linewidth}
   \centering
   \includegraphics[width=1\textwidth]{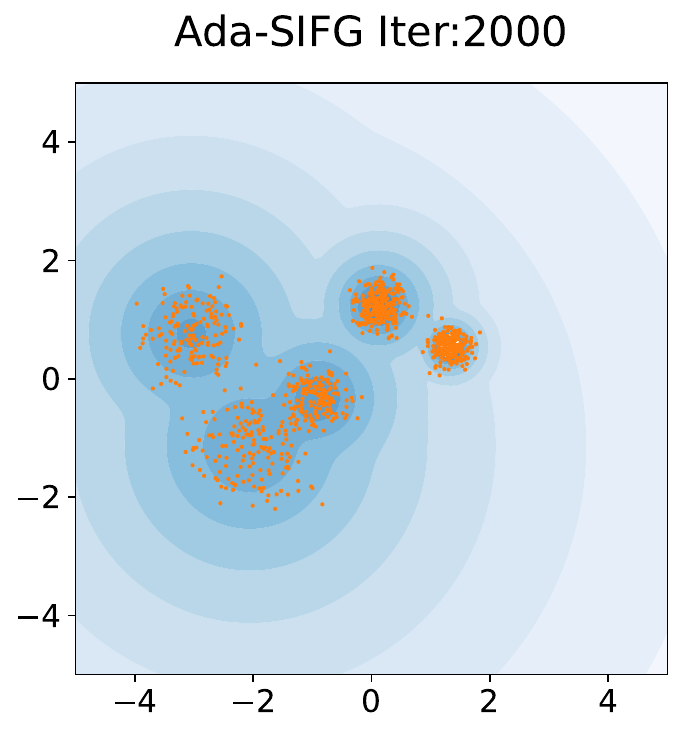}
   \end{minipage}%
   }%
   \hfill
   \subfigure{
   \begin{minipage}[t]{0.24\linewidth}
   \centering
   \includegraphics[width=1\textwidth]{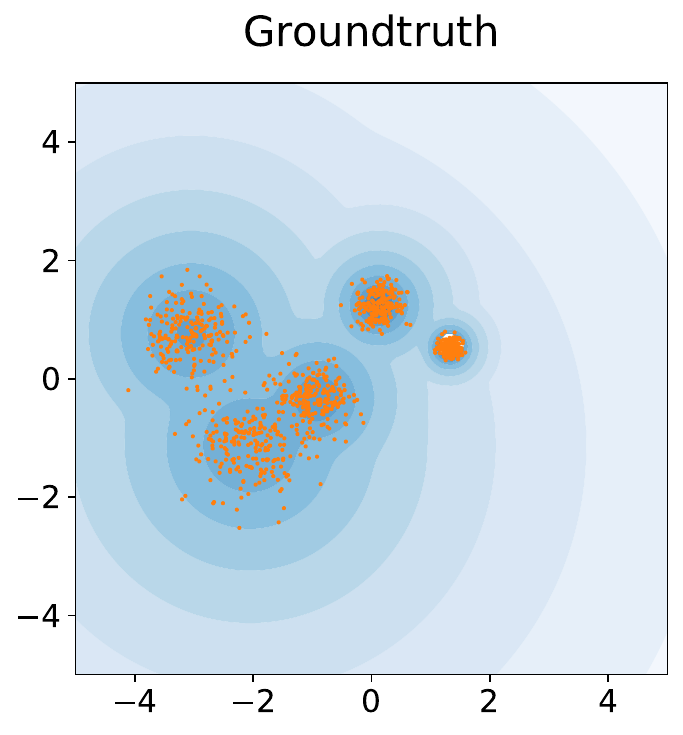}
   \end{minipage}%
   }%
   \centering
   \caption{Comparison of the sampled particles from different methods at the 2000-th iteration (sufficient for convergence of all methods) against the ground truth samples on a 2D Gaussian mixture model. 
   }
   \vspace{2em}
   \label{figure:toy2dexpl}
\end{figure*}

\section{Semi-Implicit Functional Gradient Flow}\label{sec:method}

While functional gradient flow methods offer greater flexibility and scalability due to the expressiveness of neural networks, their deterministic updates can hinder exploration, causing particles to become trapped in local modes and requiring multiple runs for diverse samples. To address these issues, we propose semi-implicit functional gradient flow (SIFG), a framework that integrates Gaussian perturbations into functional gradient flows. These perturbations facilitate escape from local minima, enhance exploration, and improve sample diversity, allowing a single run to generate multiple distinct sets of samples from the same approximate distribution on the fly.
Figure \ref{figure:toy2dexpl} illustrates the improved exploration of our method compared to existing approaches.

We begin in Section \ref{subsec:method1} by formulating the semi-implicit particle-based variational distribution, which establishes the connection between the perturbed particle distribution and the broader semi-implicit variational framework.
In Section \ref{subsec:sifggf}, we construct the Wasserstein gradient flow to minimize the KL divergence between the semi-implicit particle distribution and the target distribution,
deriving a practical algorithm that leverages denoising score matching \citep{vincent2011connection} for score function estimation.
Finally, Section \ref{subsec:method3} introduces a noise magnitude tuning scheme to balance approximation accuracy and sample diversity, ensuring effective exploration while maintaining fidelity to the target distribution.

\subsection{Semi-implicit Particle Distribution}\label{subsec:method1}

Suppose $\mu_t$ is the current particle distribution at time $t$.
To enhance exploration, we introduce a perturbed distribution by injecting noise into the particles.
This perturbation follows a hierarchical structure, similar to semi-implicit distributions used in semi-implicit variational inference \citep{yin2018semi, titsias2019unbiased, yu2023semi, cheng2024kernel}.
Accordingly, we refer to the resulting distribution as the semi-implicit particle distribution.
In this paper, we focus on isotropic Gaussian noise with variance $\sigma^2$, leading to the following semi-implicit particle distribution:
\begin{equation}\label{eq:sifgfamily}
\hat{\mu}_t(x)=\int q_{\sigma}(x|z)d\mu_t(z),
\end{equation}
where $q_{\sigma}(x|z)\propto e^{-\frac{\|x-z\|^2}{2\sigma^2}}$ is a Gaussian transition kernel with variance $\sigma^2$.
To establish a connection with the general gradient flow system introduced in Section \ref{subsec:wassgf}, we consider $k(x,z)=q_{\sigma}(x|z)$ to be the Gaussian kernel. Then $\hat{\mu}_t=\mathcal{K}\mu_t$, where $\mathcal{K}$ is the operator defined in (\ref{kernelop}) with the base measure $\mu$ being the Lebesgue measure.

A similar variational family has also been proposed in \citet{jen2024particle} for a new type of semi-implicit variational inference method.
However, our approach differs in several key aspects: (1) In their approach, the kernel is parameterized by neural networks, leading to a non-coercive optimization objective that requires additional regularization for the existence of minimizers, hence inducing bias. Our method avoids this by using Gaussian kernels, eliminating the need for explicit regularization. (2) As a functional gradient flow method, we estimate the Wasserstein gradient using neural networks, directly leveraging the smoothness of the particle distribution.
\citet{jen2024particle}, on the other hand, uses a Gaussian mixture approximation that misses the smoothness of the particle distribution 
%CZ: cite Korba's paper here
and essentially resembles standard variational inference methods \citep{tom2024theoretical}. (3) Our methods provide end-to-end theoretical guarantees on the convergence rate and sample complexity. In contrast, \citet{jen2024particle} only offer guarantees regarding the existence and uniqueness of solutions, along with an asymptotic justification for using particles.

\subsection{Functional Gradient Flow Estimation}\label{subsec:sifggf}

Now we consider minimizing the KL divergence between the perturbed density $\hat{\mu}_t$ and the target distribution $\pi$:
\begin{align}
    \mathcal{F}(\hat{\mu}_t):=\KL(\hat{\mu}_t\|\pi)=\int \log \frac{\hat{\mu}_t(x)}{\pi(x)} d\hat{\mu}_t(x),
\end{align}
which can also be regarded as a functional of $\mu_t$ defined as $\hat{\mathcal{F}}(\mu_t):=\mathcal{F}(\hat{\mu}_t)=\mathcal{F}(\mathcal{K}\mu_t)$.
To update the particles, it suffices to compute the Wasserstein gradient flow of the energy function $\hat{\mathcal{F}}$.
The following theorem shows that this gradient flow takes a simple form when the transition kernel satisfies a certain skew-symmetric condition.

\begin{theorem}\label{mol_wgf}
If the gradient of the transition kernel is skew-symmetric, i.e., $\nabla_x k(x,z)=-\nabla_z k(x,z)$, then the Wasserstein gradient flow of the energy functional $\mathcal{\hat{F}}$ is 
\begin{equation}
     \dot{\mu}_t(z)=-\nabla\cdot\Big(\mu_t(z) \E_{k(x, z)} \nabla\log \frac{\pi(x)}{\hat{\mu}_t(x)}\Big)=-\nabla\cdot(\mu_t \mathcal{K}\nabla\log \frac{\pi}{\mathcal{K}\mu_t}),
\end{equation}
and the Wasserstein gradient is $\nabla_{W_2}\mathcal{\hat{F}}(\mu_t)(z)=-\E_{k(x,z)} \nabla\log \frac{\pi(x)}{\hat{\mu}_t(x)}=-\mathcal{K}\nabla\log \frac{\pi}{\mathcal{K}\mu_t}$.
\end{theorem}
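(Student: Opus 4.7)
The plan is to compute the first variation of $\hat{\mathcal{F}}$ with respect to $\mu$ via the functional chain rule, take its spatial gradient to obtain the Wasserstein gradient, use the skew-symmetry hypothesis together with an integration by parts to rewrite the $z$-derivative of $k(x,z)$ as an $x$-derivative acting on $\log(\pi/\hat{\mu}_t)$, and finally plug into the generic Wasserstein continuity equation $\dot{\mu}_t = \nabla\cdot(\mu_t\,\nabla_{W_2}\hat{\mathcal{F}}(\mu_t))$ recalled in Section \ref{sec:FGF}.

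First I would compute $\frac{\delta \hat{\mathcal{F}}}{\delta \mu}$. Since $\hat{\mu}(x) = \int k(x,z)\,d\mu(z)$ is linear in $\mu$, the chain rule gives
\begin{equation*}
\frac{\delta \hat{\mathcal{F}}}{\delta \mu}(z) \;=\; \int \frac{\delta \mathcal{F}}{\delta \hat{\mu}}(x)\, k(x,z)\,dx.
\end{equation*}
For $\mathcal{F}(\hat{\mu}) = \KL(\hat{\mu}\|\pi)$ the standard first variation is $\frac{\delta \mathcal{F}}{\delta \hat{\mu}}(x) = \log(\hat{\mu}(x)/\pi(x)) + 1$, which (up to the constant $1$, which is annihilated by $\nabla_z$) yields
\begin{equation*}
\nabla_{W_2}\hat{\mathcal{F}}(\mu_t)(z) \;=\; \nabla_z \int \log\frac{\hat{\mu}_t(x)}{\pi(x)}\, k(x,z)\,dx \;=\; \int \log\frac{\hat{\mu}_t(x)}{\pi(x)}\, \nabla_z k(x,z)\,dx.
\end{equation*}

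Next I would apply the skew-symmetry $\nabla_z k(x,z) = -\nabla_x k(x,z)$, followed by an integration by parts in $x$ (the boundary terms vanish for the Gaussian kernel against any $\hat{\mu}_t$ with mild tail and smoothness, so this step needs only a brief justification). This converts the expression into
\begin{equation*}
\nabla_{W_2}\hat{\mathcal{F}}(\mu_t)(z) \;=\; \int \nabla_x \log\frac{\hat{\mu}_t(x)}{\pi(x)}\, k(x,z)\,dx \;=\; -\,\mathbb{E}_{k(x,z)}\!\left[\nabla\log\frac{\pi(x)}{\hat{\mu}_t(x)}\right],
\end{equation*}
which under the symmetry $k(x,z)=k(z,x)$ (true for the isotropic Gaussian used in \eqref{eq:sifgfamily}) equals $-\mathcal{K}\nabla\log(\pi/\mathcal{K}\mu_t)(z)$, matching the claimed form of $\nabla_{W_2}\hat{\mathcal{F}}$.

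Finally, plugging this into the continuity equation $\dot{\mu}_t = \nabla\cdot(\mu_t\,\nabla_{W_2}\hat{\mathcal{F}}(\mu_t))$ from Section \ref{sec:FGF} immediately gives the stated PDE for $\mu_t$. The main technical obstacle is not any deep computation but the rigorous justification of (i) exchanging differentiation and integration when computing $\nabla_z$ inside the integral and (ii) vanishing of boundary terms during integration by parts; both are standard for the Gaussian transition kernel provided that $\log(\pi/\hat{\mu}_t)$ grows at most polynomially, and I would state these as mild regularity assumptions on $\pi$ and $\mu_t$.
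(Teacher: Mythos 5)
Your proposal is correct, and its computational core coincides with the paper's: both hinge on exchanging $\nabla_z$ with the integral, invoking the skew-symmetry $\nabla_z k(x,z)=-\nabla_x k(x,z)$, and then integrating by parts in $x$ to move the derivative onto $\log(\pi/\hat{\mu}_t)$, arriving at $-\E_{k(x,z)}\nabla\log\frac{\pi(x)}{\hat{\mu}_t(x)}$. The only real difference is how the Wasserstein gradient is identified: you compute the first variation of the composite functional $\hat{\mathcal{F}}=\mathcal{F}\circ\mathcal{K}$ directly via the linearity of $\mathcal{K}$ and the chain rule, then apply the paper's own definition $\nabla_{W_2}\hat{\mathcal{F}}=\nabla\frac{\delta\hat{\mathcal{F}}}{\delta\mu}$, whereas the paper's Appendix B proceeds dynamically, differentiating $\mathcal{F}(\hat{\mu}_t)$ in time along an arbitrary velocity field, substituting the continuity equation, and reading off the gradient from the dissipation identity $\frac{d}{dt}\hat{\mathcal{F}}=-\E_{\mu_t}\|\nabla_{W_2}\hat{\mathcal{F}}\|^2$ at the optimal velocity. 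Your static route is arguably more direct and makes transparent where the additive constant $+1$ in the KL first variation disappears; the paper's dynamic route has the mild advantage of simultaneously exhibiting the velocity field $v_t^*$ that drives the particle update, which is what the algorithm actually uses. Your closing remarks on justifying differentiation under the integral and the vanishing of boundary terms for the Gaussian kernel (and the symmetry $k(x,z)=k(z,x)$ needed to write the result as $\mathcal{K}\nabla\log\frac{\pi}{\mathcal{K}\mu_t}$) address regularity points the paper leaves implicit, so nothing is missing.
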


Please refer to Appendix \ref{app:wgf_proof} for a detailed derivation.
Now, considering the semi-implicit variational distribution $\hat{\mu}_t=\mathcal{K}\mu_t$, the above gradient flow for $\mu_t$ can be transformed into a new type of gradient flow for $\hat{\mu}_t$. 
Since $\mu_t=\mathcal{K}^{-1}\hat{\mu}_t$, $\dot{\mu}_t=\mathcal{K}^{-1}\dot{\hat{\mu}}_t$ and $\nabla\frac{\delta \mathcal{F}}{\delta \hat{\mu}_t}=\nabla\log\frac{\hat{\mu}_t}{\pi}$, we have $\dot{\hat{\mu}}_t=\mathcal{K}\nabla\cdot(\mathcal{K}^{-1}\hat{\mu}_t\cdot\mathcal{K}\nabla\frac{\delta \mathcal{F}}{\delta \hat{\mu}_t})$.
Note that this equation is not the standard Wasserstein gradient flow equation of $\mathcal{F}$.
Instead, it can be interpreted as the semi-implicit functional gradient flow under the semi-implicit dissipation geometry. Please refer to Appendix \ref{app:wgf_basics} for detailed information. 
As long as the gradient of the transition kernel is skew-symmetric,  
the Wasserstein gradient flow of $\hat{\mathcal{F}}$ can be equivalently transformed into the semi-implicit functional gradient flow of $\mathcal{F}$.
In this paper, we only consider Gaussian transition kernel and leave the exploration of other kernel choices for future work.

We use a neural network $f_\gamma$ to approximate $ \nabla\log \frac{\pi}{\hat{\mu}_t}$.
As introduced in Section \ref{sec:FGF}, the typical method is to minimize the following objective function
\begin{align}
\mathcal{L}(\gamma)&:=\E_{\hat{\mu}_t}[\frac{1}{2}\|f_\gamma\|_2^2-\langle \nabla\log \frac{\pi}{\hat{\mu}_t}, f_\gamma\rangle ]\\
&=\E_{\hat{\mu}_t}[\frac{1}{2}\|f_\gamma\|_2^2-(\nabla \log \pi)^{T}f_\gamma-\nabla\cdot f_\gamma].
\end{align}
The exact computation of the divergence term $\nabla\cdot f_\gamma$ requires $\mathcal{O}(d)$ back-propagations, where $d$ is the dimension of $x$.
While previous works use Hutchinson's estimator \citep{hutchinson1989stochastic} to reduce computation in high dimensions, this approach often suffers from large variance.
Taking advantage of the semi-implicit structure of $\hat{\mu}_t$, it is feasible to directly estimate $ \nabla\log \hat{\mu}_t$ using denoising score matching (DSM) \citep{vincent2011connection} by minimizing
\begin{align}
\mathcal{L}_{\mathrm{DSM}}(\gamma):&=\E_{\mu_t(z)}\E_{q_{\sigma}(x|z)}[\|f_{\gamma}(x)-\nabla_x\log q_{\sigma}(x|z)\|^2]\\
&=\E_{\mu_t(z)}\E_{q_{\sigma}(x|z)}[\|f_{\gamma}(x)+\frac{x-z}{\sigma^2}\|^2].
\end{align}
Compared to Hutchinson's estimator, DSM provides more stable and accurate gradient estimates while scaling efficiently in high dimensions.
This highlights the advantages of semi-implicit particle distributions for functional gradient computation.
We refer to our method as Semi-Implicit Functional Gradient Flow (SIFG) and outline the full sampling procedure in Algorithm \ref{alg:sifg}.

\begin{algorithm}[t]
    \caption{SIFG: semi-implicit functional gradient flow}
    \label{alg:sifg}
    \begin{algorithmic}
        \REQUIRE{Unnormalized target distribution $\pi$, initial particles $\{z_0^i\}_{i=1}^{n}$, initial parameter $\gamma_0$, iteration number $N, N'$, particle step size $h$, parameter step size $\eta$}, noise magnitude $\sigma$.
        \FOR{$k=0, \cdots, N-1$}
            \STATE{Assign $\gamma_k^0 = \gamma_k$}
            \STATE{Obtain perturbed samples $x_k^i = z_k^i+\epsilon_k^i$}, where $\epsilon_k^i\sim \mathcal{N}(0,\sigma^2)$
            \FOR{$t=0, \cdots, N'-1$}
                \STATE{Compute 
                    \begin{equation}\label{eq:semidsm}
                        \widehat{\mathcal{L}}(\gamma) = \frac{1}{n}\sum_{i=1}^{n} \|f_\gamma(x_k^i)-\frac{x_k^i-z_k^i}{\sigma^2}\|^2 
                    \end{equation}
                }
                \STATE{Update $\gamma_k^{t+1} = \gamma_k^{t} - \eta \nabla_\gamma \widehat{\mathcal{L}}(\gamma_k^t)$}
            \ENDFOR 
            \STATE{Update $\gamma_{k+1} = \gamma_k^{N'}$}
            \STATE{Update particles $z_{k+1}^{i} = z_{k}^{i} + h(\nabla \log{\pi(x_k^i)}-f_{\gamma_{k+1}}(x_{k}^{i}))$ for $i=1, \cdots, n$}
        \ENDFOR
        \STATE{Obtain perturbed samples $x_N^i = z_N^i+\epsilon_N^i$}, where $\epsilon_N^i\sim \mathcal{N}(0,\sigma^2)$
        \RETURN{Particles $\{x_N^i\}_{i=1}^{n}$}
    \end{algorithmic}
\end{algorithm}

\subsection{An Adaptive Procedure for the Noise Magnitude}\label{subsec:method3}

The SIFG framework also allows adaptive tuning of the variance parameter $\sigma$ in the transition kernel, which helps to balance between sample accuracy and diversity.
Concretely, increasing $\sigma$ can enhance sample diversity but may also reduce 
the approximation ability of the variational family.
To further accelerate convergence to the target distribution, we update $\sigma$ to minimize the KL divergence.
This leads to a simple gradient descent procedure on $\sigma$ using the following gradient estimate:
\begin{align}
    \frac{d}{d \sigma} \mathcal{F}_{\sigma}(\mu_t)    
    &=\E _{z\sim \mu_t(z), w\sim \mathcal{N}(0,I)} \nabla \log \frac{\hat{\mu}_{\sigma,t}(z+\sigma w)}{\pi(z+\sigma w)}\cdot w \\ \label{adasifg:estimate}
    &\approx \E _{z\sim \mu_t(z), w\sim \mathcal{N}(0,I)} [f_{\gamma}(z+\sigma w)- \nabla \log \pi(z+\sigma w)]\cdot w,
\end{align}
where $f_\gamma$ is the estimated score function via DSM at time $t$.
Please refer to the Appendix \ref{appd:adasifg} for a detailed derivation.
In this way, we can adaptively adjust the noise level $\sigma$ on the fly.
To ensure the numerical stability of DSM, we constrain $\sigma$ within a reasonable range, $\sigma\in[\sigma_{\min},\sigma_{\max}]$ with $\sigma_{\min}>0, \sigma_{\max}<1$.
We call this adaptive version of SIFG, Ada-SIFG, and the full procedure can be found in Algorithm \ref{alg:adasifg} (Appendix \ref{appd:adasifg}).

\section{Theoretical Analysis}\label{sec:convergence_parvi}

In this section, we present the main theoretical results of SIFG, including convergence guarantee and sample complexity analysis.

\subsection{Optimization Guarantees}

We first analyze the optimization error in SIFG. The key step in our analysis is to establish a descent lemma for the functional gradient flow iterates.
Consider the discrete dynamics of the particles:
\begin{equation}
    z_{(k+1)h}=z_{ k h }+ h v_{k}(z_{k h}),
\end{equation}
along with its continuous interpolation:
\begin{equation}
    z_{(k+t)h}=z_{ k h }+ th v_{k}(z_{k h}), \text{ for } t\in[0,1],
\end{equation}
where $v_k$ is an approximation of the Wasserstein gradient $v_k^*(z)=\E_{q_{\sigma}(x|z)} [\nabla\log \pi(x)-\nabla\log \hat{\mu}_{kh}(x)]$ evaluated at time $kh$. 
To establish the descent lemma, it is essential to bound the derivatives of $\hat{\mu}_{kh}(x)$ and its score function $\nabla\log\hat{\mu}_{kh}(x)$.
For this purpose, we first present the following lemma.

\begin{lemma}\label{sec:lem_convolution}
    Let $\rho\in\mathcal{P}(\mathbb{R}^d)$ be a probability distribution with a finite second moment. Define the Gaussian transition kernel $k_\sigma(x) \propto e^{-\frac{\|x\|^2}{2\sigma^2}}$. Then for all $\theta\in \mathbb{R}^d$, $k\in \mathbb{N}_+$, there exist constants $\tilde{C}, \tilde{M}>0$ only depending on $k$, $\sigma$ and the finite second moment such that
    \begin{equation}
        \frac{\int \|\theta-y\|^k k_\sigma (\theta-y) d \rho(y)}{\int k_{\sigma}(\theta-y)d\rho(y)}\le \tilde{C}\|\theta\|^k+\tilde{M}
    \end{equation}
\end{lemma}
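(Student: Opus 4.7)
The plan is to estimate numerator and denominator separately, combining a Chebyshev-type lower bound on the denominator with a splitting argument on the numerator. First, letting $m_2:=\int\|y\|^2\,d\rho(y)$ and $R:=\sqrt{2m_2}$, Markov's inequality yields $\rho(\{\|y\|\le R\})\ge 1/2$; since $\|\theta-y\|\le\|\theta\|+R$ on this set, I obtain
\[
\int k_\sigma(\theta-y)\,d\rho(y) \;\ge\; \tfrac{1}{2}\exp\!\bigl(-(\|\theta\|+R)^2/(2\sigma^2)\bigr),
\]
up to the normalization constant of $k_\sigma$, which cancels in the ratio.

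Next I would split the numerator at a threshold $T$ (to be tuned) according to whether $\|\theta-y\|\le T$ or $>T$. The near region contributes at most $T^k$ times the denominator. For the far region, I apply the pointwise bound $a^k e^{-a^2/(2\sigma^2)} \le C_k e^{-a^2/(4\sigma^2)}$ with $C_k=(2k\sigma^2/e)^{k/2}$, followed by $e^{-a^2/(4\sigma^2)}\le e^{-T^2/(8\sigma^2)} e^{-a^2/(8\sigma^2)}$ for $a>T$; since the residual Gaussian factor integrates to at most $1$ against the probability measure $\rho$, this region contributes at most $C_k e^{-T^2/(8\sigma^2)}$. Dividing by the denominator lower bound gives
\[
\frac{\int\|\theta-y\|^k k_\sigma(\theta-y)\,d\rho(y)}{\int k_\sigma(\theta-y)\,d\rho(y)}
\;\le\; T^k + 2C_k\exp\!\Bigl(\tfrac{(\|\theta\|+R)^2}{2\sigma^2}-\tfrac{T^2}{8\sigma^2}\Bigr).
\]

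The delicate step is to choose $T$ just large enough to kill the exponential while remaining only polynomially large in $\|\theta\|$. Setting $T=2(\|\theta\|+R)+A$, the expansion $T^2 = 4(\|\theta\|+R)^2 + 4A(\|\theta\|+R) + A^2$ bounds the exponent above by $-A^2/(8\sigma^2)$, so the second term is at most $2C_k e^{-A^2/(8\sigma^2)}$, which can be forced to be $\le 1$ by taking $A$ large enough in terms of $k,\sigma$. Expanding $(2\|\theta\|+2R+A)^k$ with two applications of $(a+b)^k\le 2^{k-1}(a^k+b^k)$ then separates $\|\theta\|^k$ from constants in $k,\sigma,m_2$, yielding the desired form with $\tilde C$ depending only on $k$ and $\tilde M$ depending on $k,\sigma,m_2$. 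I expect the main obstacle to be precisely this balance: the coefficient $2$ in $T=2(\|\theta\|+R)+A$ is essential, since any smaller linear coefficient would leave the exponent in the residual term positive and cause the bound to blow up as $\|\theta\|\to\infty$.
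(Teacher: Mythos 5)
Your proof is correct and follows essentially the same route as the paper's: a lower bound on the denominator obtained by restricting $\rho$ to a ball whose radius is controlled by the second moment (you use Markov's inequality where the paper fixes $R_0,C_0$ directly), combined with a near/far split of the numerator at a threshold growing linearly in $\|\theta\|$. The only difference is technical bookkeeping in the far region---you absorb the polynomial via $\sup_{a} a^k e^{-a^2/(4\sigma^2)}$ and take the threshold $2(\|\theta\|+R)+A$, while the paper uses monotonicity of $a^k e^{-a^2/(2\sigma^2)}$ beyond $\sqrt{k}\sigma$ with threshold $\max\{R_0+\|\theta\|,\sqrt{k}\sigma\}$---which changes only the constants.
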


Please refer to Appendix 
\ref{appd:d_1_1} 
for a detailed proof. 
Lemma \ref{sec:lem_convolution} is useful for providing bounds on higher-order derivatives of the perturbed distribution and its score function, 
which serve as the foundation for our subsequent analysis.
For example, according to Lemma \ref{sec:lem_convolution}, when the second moment of $\mu_{kh}$ is finite, the growth rate of the norm of the score function $\nabla \log\hat{\mu}_{kh}(x)=\frac{1}{\sigma^2}\frac{\int (x-y) k_\sigma (x-y) d \mu_{kh}(y)}{\int k_{\sigma}(x-y)d\mu_{kh}(y)}$ is at most linear when taking $k=1$.
Notably, \cite{tom2024theoretical} also analyzes perturbed distributions under a transition kernel.
However, their analysis is limited to the cases where $\rho$ is a mixture of $n$ Diracs, rather than a general distribution with a regular density function.
Moreover, their approach does not fully utilize the tail information of the particle distribution since it does not incorporate higher-order moments (beyond the second moment). 
Consequently, the convergence rates of \cite{tom2024theoretical} depend on $n$ and may diverge to infinity as $n\rightarrow\infty$, restricting their analysis to finite mixtures of Gaussians.

Now assume $f_\gamma$ is the neural-net-estimated score function of the particle distribution via DSM, the velocity field then takes the form $v_k(z)=\E_{q_{\sigma}(x|z)} [\nabla\log \pi(x)-f_\gamma(x)]$.
To proceed with our analysis, we impose standard assumptions on the approximation accuracy of neural networks \citep{cheng2023gwg}, the regularity of the target distribution \citep{tom2024theoretical}, and some moment conditions on the sampling trajectory.

\begin{assumption} \label{assp:approx}
    For any $k$, $\E_{\hat{\mu}_{kh}} \| f_{\gamma} - \nabla \log \hat{\mu}_{kh}\|_2^2 \leq \varepsilon_k$.
\end{assumption}

\begin{assumption} \label{assp:lipscore}
    The score of the target distribution $\nabla\log \pi$ is $L$-Lipschitz, i.e., for any $x,y \in \mathbb{R}^d$, $\|\nabla\log \pi(x)-\nabla\log \pi(y)\|\le L\|x-y\|$.  
\end{assumption}

\begin{assumption} \label{assp:moment}
    For any $t$, the $\alpha$-th moment of $\mu_t$ is bounded by $m_{\alpha}$ for $\alpha\le 5$. i.e. $\int \|z\|^{\alpha} d\mu_t(z)\leq m_{\alpha}<\infty$.
\end{assumption}

\begin{assumption} \label{assp:bdvel}
    For any $z\in \mathbb{R}^d$,  $\|v_k(z)\|\le A\|z\|+B$.  
\end{assumption}

Assumption \ref{assp:approx} ensures the approximation accuracy of neural networks. 
By the triangular inequality, this assumption implies that 
\begin{equation}
    \E_{\mu_{kh}}\|v_k-v_k^*\|^2=\E_{\mu_{kh}}\|\E_{q_{\sigma}(x|z)}  (f_{\gamma}-\nabla\log\hat{\mu}_{kh})\|^2\le \varepsilon_k.
\end{equation}
In Section \ref{subsec:statgua}, we provide the neural network structure and the sample complexity required to satisfy this assumption.
Assumption \ref{assp:lipscore} addresses the smoothness of the target potential, a standard assumption in variational inference optimization convergence analysis \citep{korba2020non, tom2024theoretical}.
Assumption \ref{assp:moment} is about bounded moments, which is commonly used in stochastic optimization analysis \citep{moulines2011non, tom2024theoretical}.
We also empirically verify this assumption in our Bayesian neural network experiment.
Figure \ref{figure: bnn_moments} (Appendix \ref{sec:detail-bnn}) presents the fifth moments of the particle distributions across iterations for 6 benchmark datasets, based on 10 independent runs.
We see that Assumption \ref{assp:moment} holds for all tasks, i.e., the fifth moment of the particle distribution is bounded throughout the
discrete-time gradient flow.
Please refer to Section \ref{section:exp} and Appendix \ref{appd:exp_setup} for the detailed experimental setup.
Assumption \ref{assp:bdvel} follows \citet{cheng2023gwg} and measures the smoothness of neural nets. 
The rationality behind Assumption \ref{assp:bdvel} is based on the sub-linear growth rate property of the approximation target.

We now present our main results regarding the optimization of the energy functional $\hat{\mathcal{F}}$.

\begin{proposition}\label{prop:descent}
 Suppose Assumption \ref{assp:approx}, \ref{assp:lipscore}, \ref{assp:moment}, \ref{assp:bdvel}
hold. Then the following
inequality holds for $h<\frac{1}{A}$:
\[
\hat{\mathcal{F}}(\mu_{(k+1)h})-\hat{\mathcal{F}}(\mu_{k h})\le -\frac{1}{2}h\|\nabla_{W_2} \hat{\mathcal{F}}(\mu_{k h})\|^2_{L^2(\mu_{k h})}+\frac{1}{2}h\epsilon_k+h^2 [C m_{4}+M]+h^3[C m_{5}+M],
\]
where $C, M$ are constants that depend on $A, B, \sigma, m_5$ and do not depend on $k$ or $h$.
\end{proposition}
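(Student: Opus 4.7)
The starting point is the fundamental theorem of calculus along the continuous interpolation $\mu_{(k+t)h}=(T_t)_\#\mu_{kh}$ with $T_t(z)=z+th\,v_k(z)$. Using the continuity equation for this push-forward together with the definition of the first variation gives
\begin{equation*}
\hat{\mathcal{F}}(\mu_{(k+1)h})-\hat{\mathcal{F}}(\mu_{kh}) \;=\; h\int_0^1\!\int \nabla\tfrac{\delta\hat{\mathcal{F}}}{\delta\mu}(\mu_{(k+t)h})(T_t(z))\cdot v_k(z)\,d\mu_{kh}(z)\,dt.
\end{equation*}
The plan is to isolate the $t=0$ contribution as the leading descent term and collect the rest as a remainder $R$. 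Since Theorem \ref{mol_wgf} gives $\nabla_{W_2}\hat{\mathcal{F}}(\mu_{kh})=-v_k^*$, the leading piece equals $-h\langle v_k,v_k^*\rangle_{L^2(\mu_{kh})}$; combining the elementary identity $-\langle a,b\rangle=\tfrac12\|a-b\|^2-\tfrac12\|a\|^2-\tfrac12\|b\|^2$ with the standard consequence of Assumption \ref{assp:approx} noted in the paper, $\|v_k-v_k^*\|_{L^2(\mu_{kh})}^2\le\varepsilon_k$, produces the announced $-\tfrac12 h\|\nabla_{W_2}\hat{\mathcal{F}}(\mu_{kh})\|^2_{L^2(\mu_{kh})}+\tfrac12 h\varepsilon_k$ contribution.

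The core of the proof is bounding $R$. I would expand the integrand $\nabla\tfrac{\delta\hat{\mathcal{F}}}{\delta\mu}(\mu_{(k+t)h})(T_t(z))=-\int k_\sigma(x-T_t(z))[\nabla\log\pi(x)-\nabla\log\hat{\mu}_{(k+t)h}(x)]\,dx$ in $t$ up to second order around $t=0$; after multiplying by the outer factor $h$ and integrating in $t$, the linear-in-$t$ Taylor term becomes an $h^2$ contribution and the quadratic Taylor remainder becomes an $h^3$ contribution. This forces one to differentiate the score $\nabla\log\hat{\mu}_t$ up to order three, both in its spatial argument and in $t$ (through the curve of measures $\mu_t$). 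Writing each such derivative as a Gaussian-weighted ratio---for example $\nabla\log\hat{\mu}_t(x)=-\sigma^{-2}\,\E[x-Y\mid X=x]$ with $X=Y+\sigma Z$, $Y\sim\mu_t$, $Z\sim\mathcal{N}(0,I)$---and applying Lemma \ref{sec:lem_convolution} with $k=1,2,3$ yields a degree-$j$ polynomial-in-$\|z\|$ bound for the $j$-th such derivative. Contributions involving $\nabla\log\pi$ are dominated via Assumption \ref{assp:lipscore}, and every time derivative pulls down a factor of $v_k$ through the continuity equation for $\mu_t$, which is bounded by $A\|z\|+B$ under Assumption \ref{assp:bdvel}. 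The hypothesis $h<1/A$ keeps $T_t$ a diffeomorphism with $\|T_t(z)\|\le(1+hA)\|z\|+hB$, so polynomial bounds expressed in $\|T_t(z)\|$ transfer to polynomial bounds in $\|z\|$ with constants uniform in $t\in[0,1]$.

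Integrating the Taylor remainder against $\mu_{kh}$ then closes the accounting: the $h^2$ piece is dominated by integrals of (polynomial of degree $\le 2$ in $\|z\|$)$\,\times\,\|v_k(z)\|^2$, and since $\|v_k\|^2\le 2A^2\|z\|^2+2B^2$ only a moment of order $2+2=4$ is required, giving $h^2[Cm_4+M]$; the $h^3$ piece analogously needs moments of order $3+2=5$, giving $h^3[Cm_5+M]$. Assumption \ref{assp:moment} then finishes the bound, with constants depending only on $A,B,\sigma,L$ and the constants from Lemma \ref{sec:lem_convolution}, hence independent of $k$ and $h$.

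The principal technical difficulty, and the step I expect to be the main obstacle, is the simultaneous space/measure Taylor expansion of $\nabla\tfrac{\delta\hat{\mathcal{F}}}{\delta\mu}(\mu_t)(T_t(z))$: one must differentiate the Gaussian-smoothed log-density ratio jointly in its spatial argument and in $t$ (entering through $\mu_t$) and consistently track which finite-order moment of $\mu_{kh}$ each remainder term uses. Lemma \ref{sec:lem_convolution} is the indispensable tool here because it turns every such derivative into a bound that depends only on a fixed low-order moment of the underlying particle law and on $\sigma$, independent of the support or cardinality of that law---which is precisely what delivers constants in the descent lemma that do not degrade with the number of particles, the improvement over \citet{tom2024theoretical} advertised by the paper.
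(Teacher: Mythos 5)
Your handling of the leading term (isolating $t=0$, using the polarization identity together with $\E_{\mu_{kh}}\|v_k-v_k^*\|^2\le\varepsilon_k$) matches the paper, but your treatment of the remainder is a genuinely different device — a second-order Taylor expansion in $t$ of $\nabla\frac{\delta\hat{\mathcal{F}}}{\delta\mu}(\mu_{(k+t)h})(T_t(z))$ — and this is where the plan breaks. The quadratic remainder forces you to control $\partial_t^2$ of the evolving smoothed score $\nabla\log\hat{\mu}^{(t)}$. Writing $N(t,x)=\int\nabla k_\sigma(x-T_t(y))\,d\mu_{kh}(y)$ and $D(t,x)=\int k_\sigma(x-T_t(y))\,d\mu_{kh}(y)$, its worst constituent is $\partial_t^2N/D$ with $\partial_t^2N=h^2\int\nabla^3k_\sigma(x-T_t(y))[v_k(y),v_k(y)]\,d\mu_{kh}(y)$. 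Bounding this pointwise via Lemma \ref{sec:lem_convolution}, as your outline prescribes, one has $\|\nabla^3k_\sigma\|\lesssim k_\sigma\cdot(\text{degree-}3\text{ polynomial in }\|x-T_t(y)\|)$ and $\|v_k(y)\|^2\lesssim(\|x\|+\|x-T_t(y)\|)^2+1$, so the ratio is a degree-$5$ polynomial in $\|x\|$ (you in fact need the lemma with $k$ up to $5$, not $k\le 3$); smoothing against $k_\sigma(x-T_t(z))\,dx$ gives degree $5$ in $\|z\|$, and the outer factor $v_k(z)$ raises it to degree $6$. Your accounting of the $h^3$ piece as ``(degree $\le 3$ polynomial)$\times\|v_k\|^2$'' undercounts: this term carries three velocity factors (two inside $\partial_t^2$ of the score and the outer one), so the natural execution of your plan needs $m_6$, which Assumption \ref{assp:moment} does not supply and which does not match the claimed $h^3[Cm_5+M]$.

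The paper avoids second time-derivatives entirely. It splits $\hat{\mathcal{F}}=\mathcal{G}+\mathcal{E}$, disposes of the potential part with the Lipschitz bound alone (Lemma \ref{lem:linear}, costing only $m_2$), and for the entropy part compares $\frac{d}{dt}\mathcal{E}$ at time $t$ with its value at $t=0$ by expanding each of the two factors of $A_t=q_\sigma(x|\psi_t(z))\frac{\int y\,q_\sigma(y|x)\,d\rho_t(y)}{\int q_\sigma(y|x)\,d\rho_t(y)}$ only to first order via the mean value theorem (Lemma \ref{lem:entropy}); the $t^2$, hence $h^3$, contribution then arises as the product of two first-order increments and contains only one velocity inside the ratio derivative, one from the kernel shift, and the outer one — total degree $5$, i.e.\ $m_5$. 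To salvage your route you would either have to reproduce this ``difference of first derivatives with first-order mean-value remainders'' structure, or treat the $\partial_t^2$-of-the-score terms non-pointwise (for instance, for $\partial_t^2N/D$ one can exchange the $x$- and $z$-integrals so that $\nabla^3k_\sigma$ is integrated against Lebesgue measure, which brings that particular term down to $m_3$; but the quotient-rule cross terms such as $\partial_tN\,\partial_tD/D^2$ do not factor so conveniently). Neither repair is indicated in your outline, so as stated the $h^3[Cm_5+M]$ bound is not established.
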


Please refer to Appendix 
\ref{append:d_1_2}
for a detailed proof.
This proposition shows that, when away from stationary points where the squared Wasserstein gradient norm exceeds the neural network approximation error, i.e., $\|\nabla_{W_2} \hat{\mathcal{F}}(\mu_{k h})\|^2_{L^2(\mu_{k h})}>\epsilon_k$, the energy functional $\hat{\mathcal{F}}$ decreases at each iteration for sufficiently small step size $h$.
Our result generalizes the findings of \cite{tom2024theoretical}, which were restricted to mixtures of Gaussians, to a broader class of semi-implicit distributions with Gaussian transition kernels.
A key distinction in Proposition \ref{prop:descent} is that, unlike \cite{tom2024theoretical}, the right-hand side now depends on the high-order moment terms (i.e., $m_4$ and $m_5$) rather than the number of mixture components $n$.
This is achieved by expanding the descent of the energy functional to the first order in time $t$ and using Lemma \ref{sec:lem_convolution} to bound the norms of higher-order derivatives.

As a corollary of the descent lemma, we obtain the convergence of the average of squared gradient norms along iterations
by taking a suitable step size $h$.

\begin{theorem}\label{thm:convergence}
    Assume Proposition \ref{prop:descent} holds, for suitable step size $h$ as a function of $\hat{\mathcal{F}}(\mu_0), K, C, M, m_4, m_5$, the average of squared gradient norms satisfies
 \[
 \frac{1}{K}\sum_{k=1}^{K} \|\nabla_{W_2} \hat{\mathcal{F}}(\mu_{k h})\|^2_{L^2(\mu_{k h})}\le \frac{R}{K^{\frac{1}{2}}}+\frac{S}{K^{\frac{2}{3}}}+\frac{1}{K}\sum_{k=1}^{K} \epsilon_k,
 \]
for
 \[
 K>\min\left\{\frac{A^2\hat{\mathcal{F}}(\mu_0)}{Cm_4+M}, \frac{A^3\hat{\mathcal{F}}(\mu_0)}{Cm_5+M}\right\},
 \]
where $R:=4\sqrt{\hat{\mathcal{F}}(\mu_0)(C m_{4}+M)}$ and $S:=4(\hat{\mathcal{F}}(\mu_0))^{\frac{2}{3}}(C m_{5}+M)^{\frac{1}{3}}$.
\end{theorem}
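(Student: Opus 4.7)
The plan is to obtain Theorem \ref{thm:convergence} as a step-size tuning corollary of Proposition \ref{prop:descent}: telescope the one-step descent, choose $h$ to balance the remainder terms against the $1/(Kh)$ term, and verify the admissibility condition $h < 1/A$ required by Proposition \ref{prop:descent}. Since the analytical content is fully absorbed in Proposition \ref{prop:descent}, I expect the argument to be essentially algebraic.

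First, summing the one-step descent of Proposition \ref{prop:descent} over $k=0,\ldots,K-1$ and using $\hat{\mathcal{F}}(\mu_{Kh})\ge 0$ gives
\begin{equation}
\frac{h}{2}\sum_{k=0}^{K-1}\|\nabla_{W_2}\hat{\mathcal{F}}(\mu_{kh})\|^2_{L^2(\mu_{kh})} \le \hat{\mathcal{F}}(\mu_0) + \frac{h}{2}\sum_{k=0}^{K-1}\epsilon_k + Kh^2(Cm_4+M) + Kh^3(Cm_5+M).
\end{equation}
Dividing by $Kh/2$ isolates the average of squared gradient norms and leaves three non-noise terms, $2\hat{\mathcal{F}}(\mu_0)/(Kh)$, $2h(Cm_4+M)$, and $2h^2(Cm_5+M)$, whose combination needs to yield $R/K^{1/2} + S/K^{2/3}$.

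No single power of $K$ can simultaneously balance an $h$ remainder, an $h^2$ remainder, and $1/(Kh)$, so I would take $h := \min\{h_1, h_2\}$ with
\begin{equation}
h_1 := \sqrt{\frac{\hat{\mathcal{F}}(\mu_0)}{K(Cm_4+M)}}, \qquad h_2 := \left(\frac{\hat{\mathcal{F}}(\mu_0)}{K(Cm_5+M)}\right)^{1/3}.
\end{equation}
A direct computation confirms $2h_1(Cm_4+M)=2\hat{\mathcal{F}}(\mu_0)/(Kh_1)=R/(2K^{1/2})$ and $2h_2^2(Cm_5+M)=2\hat{\mathcal{F}}(\mu_0)/(Kh_2)=S/(2K^{2/3})$. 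Since $h\le h_1$ and $h\le h_2$, the polynomial terms are bounded by $R/(2K^{1/2})$ and $S/(2K^{2/3})$ respectively; and from $1/\min\{h_1,h_2\} \le 1/h_1 + 1/h_2$ the reciprocal term is bounded by $R/(2K^{1/2}) + S/(2K^{2/3})$. Summing the three contributions reproduces exactly $R/K^{1/2} + S/K^{2/3}$.

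Finally, Proposition \ref{prop:descent} requires $h<1/A$. Because $h=\min\{h_1,h_2\}$, this is implied by $h_1 < 1/A$ \emph{or} $h_2 < 1/A$, equivalently $K > A^2\hat{\mathcal{F}}(\mu_0)/(Cm_4+M)$ or $K > A^3\hat{\mathcal{F}}(\mu_0)/(Cm_5+M)$, i.e., $K>\min\{A^2\hat{\mathcal{F}}(\mu_0)/(Cm_4+M),\, A^3\hat{\mathcal{F}}(\mu_0)/(Cm_5+M)\}$, matching the hypothesis. I do not anticipate a real obstacle: the only mildly subtle ingredient is the $\min\{h_1,h_2\}$ construction, which is forced by the fact that the $h^2$ and $h^3$ remainders of Proposition \ref{prop:descent} scale at genuinely different orders and cannot be annihilated by a common power of $K$; everything else is bookkeeping.
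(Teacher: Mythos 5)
Your proposal is correct and follows essentially the same route as the paper: telescope Proposition \ref{prop:descent}, divide by $Kh/2$, take $h=\min\{h_1,h_2\}$ with exactly the paper's balancing choices $h_1=(\hat{\mathcal{F}}(\mu_0)/(K(Cm_4+M)))^{1/2}$ and $h_2=(\hat{\mathcal{F}}(\mu_0)/(K(Cm_5+M)))^{1/3}$, and note that the stated lower bound on $K$ forces $h<1/A$. Your explicit use of $1/\min\{h_1,h_2\}\le 1/h_1+1/h_2$ just spells out the inequality $\frac{a}{h}+bh+ch^2\le 2\big((ab)^{1/2}+a^{2/3}c^{1/3}\big)$ that the paper asserts directly.
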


Please refer to Appendix 
\ref{append:d_1_2}
for a detailed proof.
Theorem \ref{thm:convergence} establishes that the convergence of the average of squared gradient norms depends on the iteration number $K$, the approximation error $\epsilon_k$, the initial energy $\mathcal{F}(\hat{\mu}_0)$, the linear rate $A$ and the moments $m_4, m_5$.
Taking $K\to \infty$ implies that the average of the squared gradient norms converges to the order of the average training error.
In contrast, \citet{tom2024theoretical} provide a bound for the average of the squared gradient norms as $\frac{M_n\hat{\mathcal{F}}(\mu_0)}{K}$, where $n$ is the number of Diracs and $M_n$ is a function of $n$ that goes to infinity when $n\to \infty$.
Because we include the additional higher-order error terms $h^2 [C m_{4}+M]+h^3[C m_{5}+M]$, the rate of $K$ drops from 1 to $\frac{1}{2}$ compared to \citet{tom2024theoretical}. However, by considering this higher-order expansion, we can replace $n$ with $m_4, m_5$, allowing our method to apply to cases where $n\to \infty$, e.g., the semi-implicit particle distributions, provided that those moments are finite.

\subsection{Statistical Guarantees}\label{subsec:statgua}

In this section, we focus on the sample complexity and the neural network complexity required to satisfy Assumption \ref{assp:approx}.
Specifically, we establish the statistical guarantees for score matching using a certain family of neural networks via empirical risk minimization (ERM).
We formally define the deep ReLU neural network family as
$\mathcal{S}(M,W,B,L,S):=
\{s(x)=(A_L\sigma(\cdot)+b_L)\circ\cdots\circ(A_1x+b_1): A_i\in\R^{d_i\times d_{i+1}}, b_i\in\R^{d_{i+1}}, \max d_i\leq W, \sum_{i=1}^L (\|A_i\|_0+\|b_i\|_0)\leq S, 
\max\|A_i\|_\infty\vee\|b_i\|_\infty\leq B, \sup_{x}\|s(x)\|_\infty\leq M\}$, where $\sigma(x)=\max\{x,0\}$.

We use $\mu$ and $\hat{\mu}$ to represent any $\mu_{kh}$ and $\hat{\mu}_{kh}$ in Assumption \ref{assp:approx}.
For $n$ independent samples $z_i, i=1,\ldots,M$, drawn from $\mu(z)$, we consider the following ERM problem:
\[
\hat{f_{\gamma}}:=\argmin_{f_{\gamma}\in\mathcal{S}}\frac{1}{n}\sum_{i=1}^n\ell(z_i;f_{\gamma}),
\]
where the loss function $\ell(z;f_{\gamma})$ is defined as
\[
\ell(z;f_{\gamma}):=\E_{q_{\sigma}(x|z)}\|f_{\gamma}(x)-\nabla\log q_{\sigma}(x|z)\|^2.
\]
The population loss is given by 
\[
\ell(f_{\gamma}):=\E_{\mu(z)q_{\sigma}(x|z)}\|f_{\gamma}(x)-\nabla\log q_{\sigma}(x|z)\|^2\\
=\E_{\hat{\mu}(x)}\|f_{\gamma}(x)-\nabla\log \hat{\mu}(x)\|^2+c_*=\ell_{sm}(f_{\gamma})+c_*,
\]
where
\[
\ell_{sm}(f_{\gamma}):=\E_{\hat{\mu}(x)}\|f_{\gamma}(x)-\nabla\log \hat{\mu}(x)\|^2.
\]
is the score matching loss, and $c_*=\E_{\mu(z)q_{\sigma}(x|z)}\|\nabla\log\hat{\mu}(x)-\nabla\log q_\sigma(x|z)\|^2$ is a constant independent of $f_{\gamma}$.
This means that minimizing the population loss is equivalent to minimizing the score matching loss.

\subsubsection{The Bounded Moment Case}
We first proceed with the bounded moment Assumption \ref{assp:moment}, which is weaker than the sub-Gaussian assumption commonly used in complexity analysis of score matching \citep{fu2024unveil, chen2023lowdim}.
To analyze the sample error of the approximation, we follow standard procedures by bounding the empirical Rademacher complexity.

\begin{theorem}\label{thm:generalization}
    Under 
    Assumption \ref{assp:moment}, 
    for any $\delta>0$, it holds with probability no less than $1-2\delta$ that, 
    \begin{equation}
        \ell_{sm}(\hat{f_{\gamma}})
        \leq 2\inf_{f_{\gamma}\in\mathcal{S}}\ell_{sm}(f_{\gamma}) + \mathcal{O}\left(\sqrt{\frac{M^2+\frac{d}{\sigma^2}}{n}\log\mathcal{N}}\right).
    \end{equation}
    where 
    $\log\mathcal{N}=SL\log\big(WL(B\vee 1)\sigma(n/\delta)\log{d}\big)$.
\end{theorem}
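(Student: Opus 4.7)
The plan is to proceed via a standard ERM oracle inequality, then bound the resulting uniform empirical process using a covering-number argument combined with a truncation step that accommodates the polynomial-moment assumption on $\mu$. First I would establish the elementary oracle inequality: for any $f^{*}\in\mathcal{S}$, the defining property $\ell_n(\hat f_\gamma)\leq\ell_n(f^{*})$ of the ERM gives
\[
\ell(\hat f_\gamma)-\ell(f^{*})\leq[\ell(\hat f_\gamma)-\ell_n(\hat f_\gamma)]+[\ell_n(f^{*})-\ell(f^{*})]\leq 2\sup_{f\in\mathcal{S}}|\ell_n(f)-\ell(f)|.
\]
Choosing $f^{*}\in\argmin_{f\in\mathcal{S}}\ell_{sm}(f)$ and subtracting the constant $c_{*}$ yields $\ell_{sm}(\hat f_\gamma)\leq \inf_{\mathcal{S}}\ell_{sm}+2\sup_{\mathcal{S}}|\ell_n-\ell|$; since the approximation error is non-negative, this is trivially upper-bounded by $2\inf_{\mathcal{S}}\ell_{sm}$ plus the same deviation term, matching the form in the theorem.

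The remainder of the work is to control $\sup_{f\in\mathcal{S}}|\ell_n(f)-\ell(f)|$. The per-sample loss $\ell(z;f)=\E_{q_\sigma(x|z)}\|f(x)+(x-z)/\sigma^{2}\|^{2}$ is unbounded because the target $\nabla\log q_\sigma(x|z)=-(x-z)/\sigma^{2}$ has $d$-dimensional Gaussian tails, so naive Hoeffding arguments are unavailable. Using the sup-norm bound $\|f\|_\infty\leq M$ built into $\mathcal{S}$ and expanding the square, the expectation already satisfies $\ell(z;f)\leq 3M^{2}+3d/\sigma^{2}$, which identifies the $M^{2}+d/\sigma^{2}$ prefactor appearing in the theorem. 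To turn this into an almost-sure bound needed for concentration, I would introduce two truncation radii: $\|z\|\leq R_z$, controlled polynomially via Assumption \ref{assp:moment} and Markov, and $\|x-z\|\leq R_x\sigma$, controlled sub-exponentially by Gaussian concentration on $(x-z)/\sigma$. Setting $R_z,R_x\asymp\mathrm{polylog}(n/\delta)$ balances the resulting truncation bias against the price paid for almost-sure boundedness.

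Once the loss is bounded on the truncated event, I would pass to an $\varepsilon$-net of $\mathcal{S}(M,W,B,L,S)$ in sup norm. Standard covering-number estimates for sparse deep ReLU networks of the Bartlett--Harvey--Liaw--Mehrabian type give $\log\mathcal{N}(\varepsilon,\mathcal{S},\|\cdot\|_\infty)\lesssim SL\log(WL(B\vee 1)/\varepsilon)$. Hoeffding's inequality applied pointwise on the net (with bound $\mathcal{O}(M^{2}+d/\sigma^{2})$ up to logarithmic slack), a union bound over the net, and a Lipschitz-in-$f$ argument extending from the net to $\mathcal{S}$ together yield $\sup_{f\in\mathcal{S}}|\ell_n(f)-\ell(f)|\lesssim\sqrt{(M^{2}+d/\sigma^{2})\log\mathcal{N}/n}$ on the truncated event with probability $\geq 1-\delta$. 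Choosing $\varepsilon$ polynomially small in $n$ and $\delta$ absorbs remaining factors into the stated $\log\mathcal{N}=SL\log\big(WL(B\vee 1)\sigma(n/\delta)\log d\big)$; a final union bound with the $\geq 1-\delta$ truncation event produces the $1-2\delta$ conclusion.

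The hardest part is obtaining the $M^{2}+d/\sigma^{2}$ prefactor rather than a naive $M^{2}+R_x^{2}/\sigma^{2}$ that a blunt truncation would produce. This requires exploiting the sub-exponential concentration of $\|x-z\|^{2}/\sigma^{2}$ about its mean $d/\sigma^{2}$ so that the typical, rather than worst-case, loss magnitude drives the concentration; the residual $\log(n/\delta)$ factor from truncation is then safely absorbed into $\log\mathcal{N}$. A secondary subtlety is ensuring the polynomial tail decay of $\mu$ (only $\alpha\leq 5$ moments via Assumption \ref{assp:moment}) still yields truncation bias of order $n^{-1/2}$ at $R_z\asymp\mathrm{polylog}(n)$, which follows from Markov's inequality since any polylogarithmic truncation beats every polynomial rate.
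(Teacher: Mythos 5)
There is a genuine gap at the central quantitative step. As you yourself note, the per-sample loss $\ell(z;f_{\gamma})=\E_{q_{\sigma}(x|z)}\|f_{\gamma}(x)-\nabla\log q_{\sigma}(x|z)\|^2$ already integrates over $x$, so it is bounded by $b:=2(M^2+d/\sigma^2)$ uniformly in $z$; there is no randomness left in $x$ to concentrate, and your "two truncation radii" serve no purpose for boundedness. But then your concentration step is plain Hoeffding on an $\varepsilon$-net plus a union bound, and Hoeffding with range $b$ gives, per net point, deviations of order $b\sqrt{\log(\mathcal{N}/\delta)/n}$, hence $\sup_{f_{\gamma}\in\mathcal{S}}|\ell_n(f_{\gamma})-\ell(f_{\gamma})|\lesssim (M^2+d/\sigma^2)\sqrt{\log\mathcal{N}/n}$. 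This carries the \emph{first} power of $M^2+d/\sigma^2$, whereas Theorem \ref{thm:generalization} asserts $\sqrt{(M^2+d/\sigma^2)\log\mathcal{N}/n}$, i.e.\ the square root of that factor --- a strictly stronger statement in the relevant regime where $M$ is large (cf.\ Theorem \ref{thm::nn approx}). Your claimed bound does not follow from the tools you invoke. The paper obtains the correct power by working with the \emph{excess} loss $\ell(z;f_{\gamma})-\E_{q_\sigma(x|z)}\|\nabla\log\hat{\mu}(x)-\nabla\log q_\sigma(x|z)\|^2$, whose second moment is bounded by $b$ times its mean, and then applying a localized, Bernstein-type concentration (Bousquet's inequality together with a Dudley entropy-integral bound on the local Rademacher complexity of the truncated loss class, using the ReLU covering-number lemma). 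The resulting cross term $\sqrt{\inf_{f_\gamma}\ell(f_\gamma)\cdot\gamma_R/n}$ is absorbed by AM--GM, which is precisely where the multiplicative constant in front of $\inf\ell_{sm}$ originates; your route gets that constant trivially but cannot reproduce the stated deviation term.

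Two further inaccuracies in the plan: (i) under Assumption \ref{assp:moment} you only have fifth moments, so $\P(\|z\|_\infty>R)\lesssim R^{-5}$, and at $R\asymp\mathrm{polylog}(n/\delta)$ the union bound over $n$ samples is of order $n/\mathrm{polylog}(n)$, which is useless; the claim that "any polylogarithmic truncation beats every polynomial rate" is backwards (it is true for sub-Gaussian tails, not polynomial tails), and this is exactly why the paper takes the polynomial radius $R=(Cn/\delta)^{1/5}$, which is the source of the $(n/\delta)$ factor inside $\log\mathcal{N}$. In the paper the $z$-truncation is needed not for your reason but because the excess loss involves the unbounded score $\nabla\log\hat{\mu}$ and because one must ensure the ERM coincides with the truncated-loss ERM on the good event. (ii) Gaussian-tail control of $x-z$ does enter the paper's argument, but only inside Lemma \ref{lem:cover_number}, where the $L^2(\mathbb{P}_n)$ distance between losses is compared to the $L^\infty$ distance between networks on a compact cube --- not, as you suggest, to make the loss bounded or to improve the prefactor from $R_x^2/\sigma^2$ to $d/\sigma^2$.
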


Please refer to Appendix 
\ref{appd:d_2_1} 
for a detailed proof. 
The second term on the right hand side of Theorem \ref{thm:generalization} quantifies the sample error as a function of the sample complexity $n$. Theorem \ref{thm:generalization} indicates that the sample complexity depends on the scale parameters of the neural network, specifically $S$ and $L$.

Next, we turn our attention to the approximation error, which is the infimum of the score matching loss, $\inf_{f_{\gamma}\in\mathcal{S}}\ell_{sm}(f_{\gamma})$, for neural networks.
The following theorem presents the approximation complexity bounds for this error, specifically for ReLU neural networks.

\begin{theorem}\label{thm::nn approx}
Under Assumption \ref{assp:moment}, for sufficiently large $N$,
there exists $f_{\gamma} \in \mathcal{S}(M,W,B,L,S)$ such that 
\begin{align*}
    \ell_{sm}(f_{\gamma})=\mathcal{O}(\sigma^{-7+o(1)}N^{-\frac{3-o(1)}{d}}).
\end{align*}
The hyperparameters in the ReLU neural network class $\mathcal{S}$ satisfy
\begin{align*}
    &M = \mathcal{O}(\sigma^{-1+o(1)}N^{\frac{1-o(1)}{d}}),~
     W = \mathcal{O}(N),~
     B =\mathcal{O}(\sigma^{(2d+15-o(1))}N^{\frac{(2d+15-o(1))}{d}}),~\\
    &L = \mathcal{O}(\log^2(\sigma^{-1})\log^2 N),~
     S= \mathcal{O}\left(\log(\sigma^{-1})N\right),
\end{align*}
where $\mathcal{O}$ hides the other factors that are not related to $N$ or $\sigma$.
\end{theorem}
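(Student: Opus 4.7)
The plan is to decompose the score matching loss $\ell_{sm}(f_\gamma)=\E_{\hat{\mu}(x)}\|f_\gamma(x)-\nabla\log\hat{\mu}(x)\|^2$ into a tail contribution outside a large ball $B_R$ and an approximation contribution inside $B_R$, and then to approximate $\nabla\log\hat{\mu}$ on $B_R$ by a deep ReLU network via a Yarotsky-type construction. Two structural facts make this work: since $\hat{\mu}=\mu*k_\sigma$ is a Gaussian convolution, $\nabla\log\hat{\mu}$ is $C^\infty$ everywhere and its derivatives are controlled by Gaussian kernel ratios of the form already bounded in Lemma \ref{sec:lem_convolution}; and Assumption \ref{assp:moment} supplies the polynomial tail decay needed to truncate. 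The effective exponent $(3-o(1))/d$ comes from pushing the smoothness parameter used in the ReLU approximation to infinity at a rate coupled to $N$, in the manner of the $C^\infty$ approximation results underlying \citet{chen2023lowdim,fu2024unveil}.

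First I would derive quantitative regularity for the score. Differentiating
\[
\nabla\log\hat{\mu}(x)=\frac{1}{\sigma^{2}}\cdot\frac{\int(y-x)\,k_\sigma(x-y)\,d\mu(y)}{\int k_\sigma(x-y)\,d\mu(y)}
\]
under the integral sign yields, for every multi-index $\alpha$, a quotient whose numerator and denominator are Gaussian-weighted polynomial averages of the type appearing in Lemma \ref{sec:lem_convolution}. This gives, for every order $k$, a bound $\|D^k\nabla\log\hat{\mu}(x)\|\lesssim \sigma^{-c_k}(1+\|x\|)^{p_k}$ with explicit constants $c_k,p_k$. For $k=1$ in particular, $\|\nabla\log\hat{\mu}(x)\|\lesssim \sigma^{-2}(1+\|x\|)$, which combined with Assumption \ref{assp:moment} and Markov's inequality yields the tail bound
\[
\int_{B_R^c}\|\nabla\log\hat{\mu}(x)\|^{2}\,d\hat{\mu}(x)\lesssim \sigma^{-4}R^{-3}m_5.
\]

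On $B_R$ I would invoke a deep-ReLU approximation theorem for $C^\infty$ functions (as used in \citet{chen2023lowdim,fu2024unveil}) to approximate each coordinate of $\nabla\log\hat{\mu}$ to $L^{2}(\hat{\mu})$-squared error of order $\sigma^{-c}N^{-(3-o(1))/d}$, using a network with width $W=\mathcal{O}(N)$, depth $L=\mathcal{O}(\log^{2}(\sigma^{-1})\log^{2}N)$, and sparsity $S=\mathcal{O}(\log(\sigma^{-1})N)$, with the output clip $M$ set to the supremum of $\|\nabla\log\hat{\mu}\|$ on $B_R$ and hence giving $M=\mathcal{O}(\sigma^{-1+o(1)}N^{(1-o(1))/d})$ once $R$ is chosen as a polynomial in $N^{1/d}$. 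Adding the two contributions and optimizing $R$ to balance them yields
\[
\ell_{sm}(f_\gamma)\lesssim \sigma^{-4}R^{-3}+\sigma^{-c}N^{-(3-o(1))/d}\;\Longrightarrow\;\mathcal{O}\!\left(\sigma^{-7+o(1)}N^{-(3-o(1))/d}\right),
\]
with the weight scale $B$ emerging from explicit bounds on the Yarotsky weights. The main obstacle will be bookkeeping the precise $\sigma$ exponent through three independent places — the derivative bounds from Lemma \ref{sec:lem_convolution}, the Sobolev-type constants entering the Yarotsky rate on $B_R$, and the output clip $M$ — so that the final exponent is exactly $-7+o(1)$; a secondary subtlety is verifying that the $R$ chosen to balance tail and approximation is simultaneously compatible with the stated $M$ and $B$, which is what fixes their $N^{1/d}$ dependence.
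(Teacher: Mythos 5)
Your skeleton (tail vs.\ interior split, score bounded by $\sigma^{-2}(1+\|x\|)$ via Lemma \ref{sec:lem_convolution}, fifth-moment tail bound $\sigma^{-4}R^{-3}$, output clip $M\sim\sigma^{-2}R$ with $R$ polynomial in $N^{1/d}$) matches the paper's Steps 1 and 4, and the tail estimate is fine. The genuine gap is the interior step: you invoke "a deep-ReLU approximation theorem for $C^\infty$ functions" delivering $\hat{\mu}$-weighted squared error $\sigma^{-c}N^{-(3-o(1))/d}$ with the stated $W,L,S,B$, but no off-the-shelf result gives this, and in fact it is essentially the theorem to be proved. The rate here is not smoothness-limited: the exponent $3=p-2$ comes from the fifth-moment tail, and the real difficulty is that the approximation domain $B_R$ grows with $N$ while the relevant derivative bounds grow polynomially in $R$ and factorially in the derivative order, so one must pick a finite Taylor order $\beta$, count the grid of $N^d$ cells, track constants of the form $B_0(2R)^{\beta}\beta^{\beta/2}N^{-\beta}d^{\beta}/\beta!$, and re-balance against the tail — this is exactly what the paper does (Steps 3–4), and it is what produces the $o(1)$'s, the $\sigma^{-7+o(1)}$, and the stated $B$, $L$, $S$. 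As written, your final display is not even a balance: the second term has no $R$-dependence, so "optimizing $R$" cannot yield the claimed conclusion; the $R$-dependence of the interior error is precisely the missing content. Note also that the paper does not "push $\beta\to\infty$ coupled to $N$"; it fixes $\beta=\mathcal{O}((d+p)^2)$.

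There is a second, route-specific problem. The paper deliberately avoids approximating $\nabla\log\hat{\mu}$ directly: it Taylor-approximates $\hat{\mu}$ and $\sigma\nabla\hat{\mu}$ (whose derivatives obey the clean, $x$-uniform bound $\mathcal{O}(\sigma^{-(\beta+d)}\beta^{\beta/2})$ of Lemma \ref{lem:holder}), then forms the quotient with ReLU product/reciprocal gadgets (Lemmas \ref{lem:nnproduct}, \ref{lem:nnreciprocal}), a clip at a low-density threshold $\epsilon_{\mathrm{low}}$, and a third error region $A_2$ for $\{\hat{\mu}<\epsilon_{\mathrm{low}}\}$. Your direct route instead needs $C^{\beta}$ bounds on the score itself for all orders up to $\beta$, with explicit $\sigma$- and order-dependence; these follow from Faà di Bruno/cumulant expansions of $\nabla\log\hat{\mu}$ in terms of conditional moments, but they carry roughly $\sigma^{-2}$ per derivative and Bell-number combinatorics, i.e.\ markedly worse $\sigma$-dependence than the density bounds the paper uses. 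Since the final exponent $-7+o(1)$ (and the size of $B$) emerges from a balance in which exactly these constants appear, you cannot assert that your construction lands on the stated $\sigma$-exponent and hyperparameters without carrying that bookkeeping through; your own closing caveat identifies this, but it is the heart of the proof rather than a residual detail.
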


Theorem \ref{thm::nn approx} states that the approximation error scales as $N^{-\frac{3-o(1)}{d}}$ for a given neural network size $N$.
For more general moment conditions with finite $p$-th moment ($p>2\in\mathbb{N}_+$), the approximation error scales as 
$N^{-\frac{p-2-o(1)}{d}}$. 
Please refer to Appendix
\ref{appd:d_3_2} 
for a detailed proof.

Combining Theorem \ref{thm:generalization} and Theorem \ref{thm::nn approx}, we can derive the overall complexity required for Assumption \ref{assp:approx} to hold. Please refer to Appendix \ref{appd:proof4} for a detailed proof.

\begin{corollary}\label{coro1}
    To guarantee that Assumption \ref{assp:approx} holds with $\ell_{sm}(\hat{f_{\gamma}})\le \epsilon_k=\mathcal{O}(\epsilon)$, under Assumption \ref{assp:moment}, it suffices to require that the neural network complexity $N=\mathcal{O}(\sigma^{-\frac{7d}{3-o(1)}}\epsilon^{-\frac{d}{3-o(1)}})$, and the sample size $n=\mathcal{O}(\sigma^{-\frac{7d+20}{3-o(1)}}\epsilon^{-\frac{d+8}{3-o(1)}})$, where $\mathcal{O}$ hides the other terms that are not related to $\epsilon$ or $\sigma$.
\end{corollary}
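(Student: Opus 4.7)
\textbf{Proposal for the proof of Corollary \ref{coro1}.} The plan is to combine the two pieces already in hand: the approximation estimate of Theorem \ref{thm::nn approx}, which controls $\inf_{f_\gamma\in\mathcal{S}}\ell_{sm}(f_\gamma)$ in terms of the network size $N$, and the generalization estimate of Theorem \ref{thm:generalization}, which controls the sample error of the ERM solution $\hat{f_\gamma}$ in terms of the covering complexity of $\mathcal{S}$. We will choose $N$ just large enough to push the approximation error below $\epsilon$, then feed the resulting architectural parameters $(M,W,B,L,S)$ into the generalization bound and solve for the sample size $n$ that forces the sample error below $\epsilon$ as well. Balancing both pieces at the level $\mathcal{O}(\epsilon)$ gives the claim.

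\textbf{Step 1: approximation budget.} Invoking Theorem \ref{thm::nn approx}, the infimum is of order $\sigma^{-7+o(1)}N^{-(3-o(1))/d}$, so imposing this to be $\mathcal{O}(\epsilon)$ and solving for $N$ yields
\[
N = \mathcal{O}\bigl(\sigma^{-\tfrac{7d}{3-o(1)}}\,\epsilon^{-\tfrac{d}{3-o(1)}}\bigr),
\]
which is exactly the network complexity appearing in the corollary. Plugging this back into the hyperparameter scalings from Theorem \ref{thm::nn approx}, the key quantity for the generalization bound is
\[
M^2 = \mathcal{O}\bigl(\sigma^{-2+o(1)} N^{\tfrac{2-o(1)}{d}}\bigr) = \mathcal{O}\bigl(\sigma^{-\tfrac{20}{3-o(1)}}\,\epsilon^{-\tfrac{2}{3-o(1)}}\bigr),
\]
and the covering log-factor $\log\mathcal{N}$ depends on $S,L,W,B$, each of which is polynomial (or polylogarithmic) in $N$ and $\sigma^{-1}$; in particular $SL$ is of order $N$ up to logarithmic terms.

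\textbf{Step 2: sample budget.} By Theorem \ref{thm:generalization}, $\ell_{sm}(\hat{f_\gamma})\leq 2\inf_{f_\gamma\in\mathcal{S}}\ell_{sm}(f_\gamma) + \mathcal{O}\bigl(\sqrt{(M^2+d/\sigma^2)\log\mathcal{N}/n}\bigr)$, and the first term is already $\mathcal{O}(\epsilon)$ by Step 1. To force the sample-error term below $\mathcal{O}(\epsilon)$, we require
\[
n \;=\; \mathcal{O}\!\left(\frac{(M^2+d/\sigma^2)\,\log\mathcal{N}}{\epsilon^{2}}\right).
\]
Since $M^2 \gg d/\sigma^2$ in the regime of interest, and $\log\mathcal{N}$ contributes a factor of order $N$ (times a polylog term absorbed into $\mathcal{O}$), substituting the expressions for $M^2$ and $N$ from Step 1 yields
\[
n = \mathcal{O}\bigl(M^2 \cdot N \cdot \epsilon^{-2}\bigr) = \mathcal{O}\bigl(\sigma^{-\tfrac{20+7d}{3-o(1)}}\,\epsilon^{-\tfrac{d+2}{3-o(1)}-2}\bigr) = \mathcal{O}\bigl(\sigma^{-\tfrac{7d+20}{3-o(1)}}\,\epsilon^{-\tfrac{d+8}{3-o(1)}}\bigr),
\]
matching the stated sample complexity since $(d+2)/(3-o(1)) + 2 = (d+8-2o(1))/(3-o(1))$.

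\textbf{Main obstacle.} The substantive work is purely bookkeeping: carefully propagating the exponents of $\sigma$ and $\epsilon$ through the hyperparameter scalings of Theorem \ref{thm::nn approx}, and verifying that the polylogarithmic factors in $\log\mathcal{N}$ and the lower-order contribution $d/\sigma^2$ are indeed dominated by the leading $M^2 \cdot SL$ term so that they can be absorbed into the $\mathcal{O}(\cdot)$ notation. No new analytical ingredient is required beyond the two theorems already proved.
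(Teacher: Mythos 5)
Your proposal is correct and follows essentially the same route as the paper's proof: choose $N$ from Theorem \ref{thm::nn approx} so that the approximation error is $\mathcal{O}(\epsilon)$ (giving $M=\mathcal{O}(\sigma^{-\frac{10}{3-o(1)}}\epsilon^{-\frac{1}{3-o(1)}})$), then plug $M,S,L$ into Theorem \ref{thm:generalization} and solve for $n$ so that the sample term is also $\mathcal{O}(\epsilon)$. Your exponent bookkeeping, including $SL=\tilde{\mathcal{O}}(N)$ and the arithmetic $\frac{d+2}{3-o(1)}+2=\frac{d+8-o(1)}{3-o(1)}$, matches the paper's computation.
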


\subsubsection{The Sub-Gaussian Case}

In this section, we demonstrate that under a stronger sub-Gaussian assumption, we can derive improved results for both sample and approximation complexity.
\begin{assumption} \label{assp:subg}
    For any $t$ and $Z\sim \mu_t$, there exists an absolute constant $C'>0$, such that $P(\|Z\|>R)\le 2\exp(-C' R^2)$ for all $R$.
\end{assumption}

\begin{theorem}\label{thm:generalization2}
    Under 
    Assumption \ref{assp:subg}, 
    for any $\delta>0$, it holds with probability no less than $1-2\delta$ that, 
    \begin{equation}
        \ell_{sm}(\hat{f_{\gamma}})
        \leq 4\inf_{f_{\gamma}\in\mathcal{S}}\ell_{sm}(f_{\gamma}) + \mathcal{O}\left(\frac{M^2}{n}\log\mathcal{N}\right).
    \end{equation}
    where   
    $\log\mathcal{N}=\log^3(nd/\sigma\delta)[\log(1/\delta)+SL\log(nLW(B\vee 1)M\log(1/\delta))]$.
\end{theorem}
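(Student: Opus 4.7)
The plan is to upgrade the $n^{-1/2}$ rate of Theorem \ref{thm:generalization} to the fast $n^{-1}$ rate under the stronger Assumption \ref{assp:subg}, by combining the Bernstein-type self-bounding structure of the square loss with a localized uniform concentration inequality, and then extracting the constant $4$ via a weighted AM-GM. Let $f^*_\gamma\in\arg\min_{f_\gamma\in\mathcal{S}}\ell_{sm}(f_\gamma)$ (or a near-minimizer) and introduce the excess loss $h_{f_\gamma}(z):=\ell(z;f_\gamma)-\ell(z;f^*_\gamma)$, so that $Ph_{f_\gamma}=\ell_{sm}(f_\gamma)-\ell_{sm}(f^*_\gamma)\ge 0$ and the ERM inequality $P_n h_{\hat f_\gamma}\le 0$ gives $Ph_{\hat f_\gamma}\le (P-P_n)h_{\hat f_\gamma}$. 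Writing $g:=f_\gamma-f^*_\gamma$ and $\psi(x,z):=-(x-z)/\sigma^2$, the square-loss expansion
\begin{equation*}
h_{f_\gamma}(z)=\mathbb{E}_{q_\sigma(x|z)}\|g(x)\|^2+2\,\mathbb{E}_{q_\sigma(x|z)}\langle g(x),\,f^*_\gamma(x)-\psi(x,z)\rangle,
\end{equation*}
combined with $\|f_\gamma\|_\infty,\|f^*_\gamma\|_\infty\le M$ and the Gaussian moment identity $\mathbb{E}_{q_\sigma(x|z)}\|\psi\|^2=d/\sigma^2$, yields both the uniform bound $|h_{f_\gamma}(z)|\lesssim M^2+Md^{1/2}/\sigma$ and, via Cauchy-Schwarz on $\mathbb{E}_\mu h_{f_\gamma}^2$ together with the elementary bound $\|g\|_{L^2(\hat\mu)}^2\le 2Ph_{f_\gamma}+4\ell_{sm}(f^*_\gamma)$ (deduced from $Ph_{f_\gamma}=\|g\|_{L^2(\hat\mu)}^2+2\langle g,f^*_\gamma-\nabla\log\hat\mu\rangle_{L^2(\hat\mu)}$), the Bernstein variance-mean inequality
\begin{equation*}
\mathrm{Var}_\mu(h_{f_\gamma})\;\le\;cM^2\bigl(Ph_{f_\gamma}+\ell_{sm}(f^*_\gamma)\bigr).
\end{equation*}

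Next I would apply a Bernstein inequality uniformly over a $1/n$-sup-norm cover of $\mathcal{S}$. Assumption \ref{assp:subg} enters by permitting a truncation to the event $\{\|z_i\|\le R\}$ with $R=\Theta(\sqrt{\log(n/\delta)/C'})$, which holds with probability at least $1-\delta$ and confines the effective inputs of $f_\gamma$ to a Gaussian-thickened ball of radius $O(R+\sqrt{\log(n/\delta)}/\sigma)$. The standard sup-norm covering bound for the sparse ReLU class $\mathcal{S}(M,W,B,L,S)$ restricted to this ball then has log-cardinality of precisely the stated form $\log\mathcal{N}=\log^3(nd/\sigma\delta)[\log(1/\delta)+SL\log(nLW(B\vee 1)M\log(1/\delta))]$, with the cubic log factor collecting the polynomial-in-$R$ dependencies introduced by the truncation radius, the Gaussian scale $1/\sigma$, and the chaining of a sub-exponential envelope. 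A union bound together with classical Bernstein yields
\begin{equation*}
\sup_{f_\gamma\in\mathcal{S}}\bigl|(P-P_n)h_{f_\gamma}\bigr|\;\lesssim\;\sqrt{\tfrac{\mathrm{Var}_\mu(h_{f_\gamma})\log\mathcal{N}}{n}}+\tfrac{M^2\log\mathcal{N}}{n}.
\end{equation*}

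Finally I would close the loop. Set $\Delta:=Ph_{\hat f_\gamma}$, $L^*:=\ell_{sm}(f^*_\gamma)$, and $u:=M^2\log\mathcal{N}/n$. Substituting the variance bound gives $\Delta\le C\sqrt{(\Delta+L^*)u}+C'u\le C\sqrt{\Delta u}+C\sqrt{L^*u}+C'u$. Applying weighted AM-GM as $C\sqrt{\Delta u}\le \tfrac12\Delta+\tfrac{C^2}{2}u$ on the first piece (absorbing half of $\Delta$ on the left) and $C\sqrt{L^*u}\le \tfrac32 L^*+\tfrac{C^2}{6}u$ on the second produces $\Delta\le 3L^*+\mathcal{O}(u)$, and hence $\ell_{sm}(\hat f_\gamma)=L^*+\Delta\le 4L^*+\mathcal{O}(M^2\log\mathcal{N}/n)$, which is the advertised bound.

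The hardest step will be the variance-mean inequality: because $f^*_\gamma$ is not the $L^2(\hat\mu)$-projection of $\nabla\log\hat\mu$ in the misspecified neural-network regime, one cannot obtain $\mathrm{Var}_\mu(h_{f_\gamma})\le cM^2\,Ph_{f_\gamma}$ directly and must carry the additive $\ell_{sm}(f^*_\gamma)$ term, which is exactly what forces the weighted (rather than symmetric) AM-GM and therefore produces the constant $4$ in place of the $2$ obtainable in the realizable case. Cleanly controlling the cross term $\langle g,f^*_\gamma-\psi\rangle$ requires the Stein-type Gaussian integration by parts available through $\psi(x,z)=-(x-z)/\sigma^2$, together with Assumption \ref{assp:subg} to keep the effective domain of integration (and hence the covering number) finite with only polylogarithmic losses in $n$, $d$, $1/\sigma$, and $1/\delta$.
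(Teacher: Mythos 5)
Your proposal is correct in substance but follows a genuinely different route from the paper. The paper works with the oracle-centered loss $\ell(z,f_\gamma)=\E_{q}\big[\|f_\gamma-\nabla\log q\|^2-\|\nabla\log\hat\mu-\nabla\log q\|^2\big]$, whose population mean is exactly $\ell_{sm}(f_\gamma)\ge 0$; after truncating to $\{\|z\|_\infty\le R\}$ it verifies the Bernstein-type condition $\E[\tilde\ell^2]\le 4M_R\,\E[\tilde\ell]+B_0$ and invokes a two-sided local Rademacher complexity fixed-point lemma (Lemma \ref{lem:local_rademacher}), with the local complexity controlled by Dudley's bound and the covering estimate of Lemma \ref{lem:covering_num}; the constant $4$ then arises by chaining ``population $\le 2\cdot$empirical'' with ``empirical $\le 2\cdot$population'' at the ERM. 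You instead center at the best-in-class $f^*_\gamma$, prove a variance--mean inequality for the excess loss with the unavoidable additive misspecification term $\ell_{sm}(f^*_\gamma)$, run a plain covering-plus-Bernstein union bound, and extract the constant $4$ from the weighted AM--GM when solving the resulting quadratic inequality. Both yield the fast $n^{-1}$ rate, and your identification of why the realizable constant $2$ degrades to $4$ is accurate. A concrete advantage of your decomposition is that $\nabla\log\hat\mu$ never enters the empirical process (the $\nabla\log q$ terms cancel in $h_{f_\gamma}$ except through a cross term bounded by $M^2+d/\sigma^2$), so you do not need the growth estimate for $\|\nabla\log\hat\mu\|$ (the paper's Lemma \ref{lem:lip_score}) nor the large envelope $M_R\sim M^2+R^6/\sigma^8$; the paper's route, in exchange, avoids your excess-loss bookkeeping by citing an off-the-shelf localization lemma. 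Two caveats, both at the same level of looseness as the paper's own $\mathcal{O}$: your variance and envelope constants should carry $M^2+d/\sigma^2$ (and $M\sqrt{d}/\sigma$) rather than bare $M^2$, and your attribution of the $\log^3(nd/\sigma\delta)$ factor to the covering number is asserted rather than derived --- in the paper it actually enters through the bound $M_R\lesssim\sigma^{-8}M^2\log^3(nd/\sigma\delta)$ after setting $R\asymp\log^{1/2}(ndM/\sigma\delta)$, so if you want to reproduce the stated form of $\log\mathcal{N}$ exactly you should track where the truncation radius multiplies the fixed-point term rather than folding it into the cover.
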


Please refer to Appendix 
\ref{appd:d_2_2} 
for a detailed proof.
Theorem \ref{thm:generalization2} shows that under the sub-Gaussian assumption, the generalization bound improves from $n^{-\frac{1}{2}}$ to $n^{-1}$ if we ignore the logarithmic terms.
Moreover, the neural network approximation error bound also improves.

\begin{theorem}\label{thm::nn approx2}
Under Assumption \ref{assp:subg}, for sufficiently large $N$, there exists $f_{\gamma} \in \mathcal{S}(M,W,B,L,S)$ such that 
\begin{align*}
    \ell_{sm}(f_{\gamma})=\mathcal{O}(\sigma^{-4}\exp(-C'\sigma^{\frac{d-2}{d+1}} N^{\frac{d-2}{d(d+1)}})).
\end{align*}
The hyperparameters in the ReLU neural network class $\mathcal{S}$ satisfy
\begin{align*}
    &M = \mathcal{O}(\sigma^{-\frac{3(d+2)}{2(d+1)}}N^{\frac{d-2}{2d(d+1)}}),~
     W = \mathcal{O}(N),~
     B =\mathcal{O}(\sigma^{\frac{3\sigma^{\frac{d}{d+1}}N^{\frac{1}{d+1}}+3d^2}{d}}N^{\frac{3\sigma^{\frac{d}{d+1}}N^{\frac{1}{d+1}}}{d(d+1)}}),~
    L = \mathcal{O}(\sigma^{\frac{2d}{d+1}}N^{\frac{2}{d+1}}),\\
    ~
     &S= \mathcal{O}(\sigma^{\frac{d}{d+1}}N^{\frac{d+2}{d+1}}).
\end{align*}
where $\mathcal{O}$ hides the other factors that are not related to $N$ or $\sigma$.
\end{theorem}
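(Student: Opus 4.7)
The overall plan is to mimic the proof of Theorem \ref{thm::nn approx} but to exploit the exponential tail decay coming from Assumption \ref{assp:subg} in place of the merely polynomial decay that follows from Assumption \ref{assp:moment}. Since $\hat{\mu}=k_\sigma\ast\mu$ is a Gaussian mollification, its score $\nabla\log\hat{\mu}$ is $C^\infty$ on $\mathbb{R}^d$, and repeated application of Lemma \ref{sec:lem_convolution} (with $k=1,2,3,\ldots$) shows that $\nabla\log\hat{\mu}$ and all of its higher-order derivatives grow at most polynomially in $\|x\|$, with coefficients that are negative powers of $\sigma$. This regularity is what lets us trade bounded domain approximation against the tail mass of $\hat\mu$.

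First I would fix a truncation radius $R>0$ (the single free parameter, to be balanced at the end) and decompose the population score matching error as
\[
\ell_{sm}(f_\gamma)=\int_{B_R}\|f_\gamma-\nabla\log\hat\mu\|^2\,d\hat\mu+\int_{B_R^c}\|f_\gamma-\nabla\log\hat\mu\|^2\,d\hat\mu .
\]
For the outer integral, Assumption \ref{assp:subg} transfers from $\mu$ to $\hat\mu=k_\sigma\ast\mu$ (up to absolute constants, since convolving with a Gaussian of variance $\sigma^2<1$ does not worsen the sub-Gaussian constant beyond a controlled amount), so $\hat\mu(B_R^c)\lesssim \exp(-C'R^2)$. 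Combined with the at-most-linear growth of $\nabla\log\hat\mu$ from Lemma \ref{sec:lem_convolution} and the $\|f_\gamma\|_\infty\leq M$ constraint, the tail term is bounded by $\mathcal{O}(\sigma^{-4})\exp(-C'R^2)$ up to lower-order polynomial factors absorbed in the $\mathcal{O}$.

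Second, for the inner integral I would invoke the standard Yarotsky-type ReLU approximation theorem for smooth functions on cubes, applied component-wise to (a smooth truncation of) $\nabla\log\hat\mu$ on $B_R$. Using the $C^s$ bounds produced by Lemma \ref{sec:lem_convolution}, such a theorem yields a network of width $\mathcal{O}(N)$, depth $\mathcal{O}(\log^2 N)$ times a factor scaling with $R^2$, sparsity $\mathcal{O}(N\log N)$, and weights bounded polynomially in $R$ and $\sigma^{-1}$, achieving uniform approximation error $\mathcal{O}(\sigma^{-\beta}R^\alpha N^{-s/d})$ for any fixed smoothness exponent $s$ of our choosing. The output-ceiling $M$ is enforced by adding a ReLU-based clipping layer, which only affects constants in the hyperparameters.

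Finally I would balance the tail error $\sigma^{-4}\exp(-C'R^2)$ against the inner approximation error, choosing $R$ so that $R^2$ equals $\sigma^{(d-2)/(d+1)}N^{(d-2)/(d(d+1))}$; the claimed network hyperparameters $M,W,B,L,S$ then follow by substituting this $R$ back into the Yarotsky-style construction and tracking the $R$ and $\sigma^{-1}$ dependence through each parameter. The hard part of the argument is precisely this bookkeeping: the $C^s$ norm of $\nabla\log\hat\mu$ on $B_R$ couples $R$ and $\sigma^{-1}$ in a non-trivial way (through Lemma \ref{sec:lem_convolution} applied to higher-order moments), and the Yarotsky bound mixes that norm with the side-length of the domain and the target accuracy. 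Getting precisely the exponents $\frac{d-2}{d+1}$ and $\frac{d-2}{d(d+1)}$ out of this three-way balance, rather than a simpler $\exp(-C'R^2)$ scaling with the larger choice $R^2\asymp\sigma^{2d/(d+1)}N^{2/(d+1)}$ that one might naively expect, is the only delicate step; everything else is a rescaling of the standard ReLU approximation machinery and parallels the structure already developed for Theorem \ref{thm::nn approx}.
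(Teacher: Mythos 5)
Your high-level plan (truncate at radius $R$, use sub-Gaussianity of $\hat\mu$ to get a tail of order $\sigma^{-4}\exp(-C'R^2)$, approximate on $B_R$, then balance) matches the skeleton of the paper's argument, and your guessed radius $R^2\asymp\sigma^{\frac{d-2}{d+1}}N^{\frac{d-2}{d(d+1)}}$ is exactly what the paper's choices ($R=\beta^{\frac12-\frac1d}$ with $\beta\asymp\sigma^{\frac{d}{d+1}}N^{\frac{1}{d+1}}$) produce. But there is a genuine gap in the interior step. You invoke a Yarotsky-type bound ``for any fixed smoothness exponent $s$,'' giving interior error $\mathcal{O}(\sigma^{-\beta}R^{\alpha}N^{-s/d})$. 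With $s$ fixed, this is only polynomially small in $N$, so no choice of $R$ can yield the theorem's error, which is exponentially small in a power of $N$; balancing $\exp(-C'R^2)$ against $N^{-s/d}$ would force $R^2\asymp\log N$ and a polynomial overall rate. The mechanism that makes the stated exponents come out is that the approximation order must grow with $N$: the paper uses piecewise Taylor expansions of order $\beta$ on a grid of resolution tied to $\sigma^{-1}$, lets $\beta\to\infty$, and controls the resulting derivative blow-up of the Gaussian convolution through Lemma \ref{lem:holder} ($\|\partial^\beta\hat\mu\|\lesssim\sigma^{-(\beta+d)}\beta^{\beta/2}$, a Hermite-polynomial bound), so the interior error decays like $\beta^{-c\beta}$ and is dominated by the tail $\exp(-C'\beta^{1-2/d})$. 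This is also visible in the theorem's hyperparameters: $L\asymp\sigma^{\frac{2d}{d+1}}N^{\frac{2}{d+1}}\asymp\beta^2$ and $\log B\asymp\beta$ grow polynomially in $N$, which is incompatible with the ``depth $\mathcal{O}(\log^2N)$ times a factor in $R^2$'' bookkeeping your proposal sketches.

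A second, related gap: you propose to approximate $\nabla\log\hat\mu$ directly using $C^s$ bounds ``from repeated application of Lemma \ref{sec:lem_convolution}.'' That lemma only controls ratios $\int\|\theta-y\|^kk_\sigma\,d\rho/\int k_\sigma\,d\rho$ for each fixed $k$, with constants whose dependence on $k$ is unspecified; it gives linear growth of the score but not order-$s$ derivative bounds of $\log\hat\mu$ with the explicit $s$-dependence a growing-order argument needs. The paper sidesteps this by approximating $\hat\mu$ and $\sigma\nabla\hat\mu$ separately (for which Lemma \ref{lem:holder} gives clean order-$\beta$ bounds), implementing the quotient with the product and reciprocal ReLU constructions (Lemmas \ref{lem:nnproduct}, \ref{lem:nnreciprocal}) and clipping the denominator at a threshold $\epsilon_{\rm low}$; this is precisely why the paper's decomposition has a third term, the low-density region $\{\hat\mu<\epsilon_{\rm low}\}\cap B_R$, which your two-term split omits and which must be handled before the final balance. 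Without the growing Taylor order, the Hermite-type derivative control, and the quotient-with-clipping construction, the claimed exponential rate and the stated $M,W,B,L,S$ do not follow.
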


Please see Appendix 
\ref{appd:d_3_2} 
for a detailed proof. 
Combining Theorem \ref{thm:generalization2} and Theorem \ref{thm::nn approx2}, we obtain the overall complexity needed for Assumption \ref{assp:approx} to hold.

\begin{corollary}\label{coro2}
    To guarantee that Assumption \ref{assp:approx} holds with $\ell_{sm}(\hat{f_{\gamma}})\le \epsilon_k=\mathcal{O}(\epsilon)$, under Assumption \ref{assp:subg}, it suffices to require that the neural network complexity $N=\mathcal{O}(\sigma^{-d}\text{polylog}(\epsilon^{-1}))$, and the sample complexity $n=\mathcal{O}(\sigma^{-d}\text{polylog}(\epsilon^{-1}) \epsilon^{-1})$, where $\mathcal{O}$ hides the other factors that are not related to $\epsilon$ or $\sigma$.
\end{corollary}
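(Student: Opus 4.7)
The plan is to combine Theorem~\ref{thm:generalization2} (which decomposes the score matching error into an approximation term and a sample term) with Theorem~\ref{thm::nn approx2} (which controls the approximation term). I would first pick the network size $N$ so that $\inf_{f_\gamma\in\mathcal{S}}\ell_{sm}(f_\gamma) \lesssim \epsilon$, and then, with the resulting hyperparameters $(M,W,B,L,S)$ fixed as functions of $\epsilon$ and $\sigma$, pick $n$ so that the sample term $\mathcal{O}(M^2\log\mathcal{N}/n) \lesssim \epsilon$. Adding these two bounds through Theorem~\ref{thm:generalization2} then yields $\ell_{sm}(\hat{f_{\gamma}}) = \mathcal{O}(\epsilon)$ with high probability.

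For the first step, the bound of Theorem~\ref{thm::nn approx2} reads $\sigma^{-4}\exp\bigl(-C'\sigma^{(d-2)/(d+1)} N^{(d-2)/(d(d+1))}\bigr) \lesssim \epsilon$. Taking logarithms and inverting gives
\[
N \;\gtrsim\; \sigma^{-d}\Bigl(\tfrac{\log(\sigma^{-4}\epsilon^{-1})}{C'}\Bigr)^{d(d+1)/(d-2)} \;=\; \mathcal{O}\bigl(\sigma^{-d}\,\mathrm{polylog}(\epsilon^{-1})\bigr),
\]
which recovers the claimed network complexity. This is the structural reason for the improvement over Corollary~\ref{coro1}: the \emph{exponential} (rather than polynomial) decay of the approximation error in Theorem~\ref{thm::nn approx2} collapses the polynomial $\epsilon^{-\alpha}$ dependence into a polylogarithmic one.

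For the second step, I would substitute $N = \sigma^{-d}\,\mathrm{polylog}$ back into the hyperparameter formulas of Theorem~\ref{thm::nn approx2}. A short arithmetic check shows that the factor $\sigma^{2d/(d+1)} N^{2/(d+1)}$ in $L$ cancels to give $L = \mathcal{O}(\mathrm{polylog})$, that $S = \mathcal{O}(\sigma^{-d}\,\mathrm{polylog})$, that $M^2 = \mathcal{O}(\sigma^{-4}\,\mathrm{polylog})$, and that $\log B, \log W, \log M$ are all polylogarithmic in $\epsilon^{-1}$ and $\sigma^{-1}$. Plugging these into $\log\mathcal{N}=\log^3(nd/(\sigma\delta))\bigl[\log(1/\delta)+SL\log(nLW(B\vee 1)M\log(1/\delta))\bigr]$ yields $\log\mathcal{N}=\mathcal{O}(\sigma^{-d}\,\mathrm{polylog}(\epsilon^{-1}))$. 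Requiring $M^2\log\mathcal{N}/n \lesssim \epsilon$ then produces the claimed $n = \mathcal{O}(\sigma^{-d}\,\mathrm{polylog}(\epsilon^{-1})\,\epsilon^{-1})$, with the subleading $\sigma^{-4}$ coming from $M^2$ absorbed into the leading $\sigma^{-d}$ behavior.

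The main obstacle is not conceptual: the corollary is a direct assembly of the two preceding theorems. The work is entirely in bookkeeping---verifying the algebraic cancellations in the exponents of $\sigma$ and $N$, tracking how polylog factors propagate through the triply nested logarithm defining $\log\mathcal{N}$, and finally setting $\delta$ (for example $\delta = \epsilon$) so that the high-probability guarantee becomes $1 - \mathcal{O}(\epsilon)$ while any $\log(1/\delta)$ terms merge into the polylog notation.
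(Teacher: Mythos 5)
Your proposal follows essentially the same route as the paper's own proof: invert the exponential approximation bound of Theorem~\ref{thm::nn approx2} to get $N=\mathcal{O}(\sigma^{-d}\mathrm{polylog}(\epsilon^{-1}))$, substitute the resulting hyperparameters ($M=\mathcal{O}(\sigma^{-2}\mathrm{polylog})$, $L=\mathcal{O}(\mathrm{polylog})$, $S=\mathcal{O}(\sigma^{-d}\mathrm{polylog})$, $\log B=\mathcal{O}(\mathrm{polylog})$) into the generalization bound of Theorem~\ref{thm:generalization2}, and solve $M^2\log\mathcal{N}/n\lesssim\epsilon$ for $n$. Even your slight fudge of absorbing the $\sigma^{-4}$ from $M^2$ into the stated $\sigma^{-d}$ rate mirrors the paper, whose own computation yields $n=\mathcal{O}(\epsilon^{-1}\sigma^{-(d+4)}\mathrm{polylog}(\epsilon^{-1}))$ before stating the corollary with $\sigma^{-d}$.
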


Please refer to Appendix \ref{appd:proof4} for a detailed proof. \cite{zhang2024minimax} study the sample complexity of score matching using kernel density estimation, which is also 
$\mathcal{O}(\sigma^{-d}\text{polylog}(\epsilon^{-1}) \epsilon^{-1})$
for $\sigma<1$. 
Corollary \ref{coro2} shows that our complexity bound is consistent with theirs. 

\section{Numerical Experiments}\label{section:exp}

In this section, we compare SIFG and Ada-SIFG with other ParVI methods including SVGD \citep{Liu2016SVGD}, $L_2$-GF \citep{di2021neural} and Ada-GWG \citep{cheng2023gwg} on both synthetic and real data problems.
In Bayesian neural network (BNN) experiments, we also compare to SGLD \citep{SGLD}.
Throughout this section, the initial particle distribution is the standard Gaussian distribution unless otherwise specified.
We parameterize $f_{\gamma}$ a 3-layer neural network and use SGD optimizer to train $f_{\gamma}$.
For Ada-SIFG, the noise magnitude $\sigma$ is clipped to be greater than $0.001$ unless otherwise specified.
Please refer to Appendix \ref{appd:exp_setup} for more detailed setups of our experiments.
The code is available at \url{https://github.com/ShiyueZhang66/Semi-Implicit_Functional_Gradient_Flow}.

\subsection{Gaussian Mixture}

\begin{figure*}[t]
   \centering
   \subfigure{
   \begin{minipage}[t]{0.32\linewidth}
   \centering
   \includegraphics[width=1\textwidth]{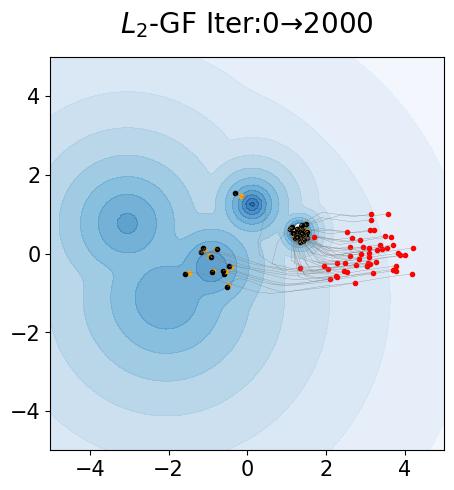}
   \end{minipage}%
   }%
   \hfill
   \subfigure{
   \begin{minipage}[t]{0.32\linewidth}
   \centering
   \includegraphics[width=1\textwidth]{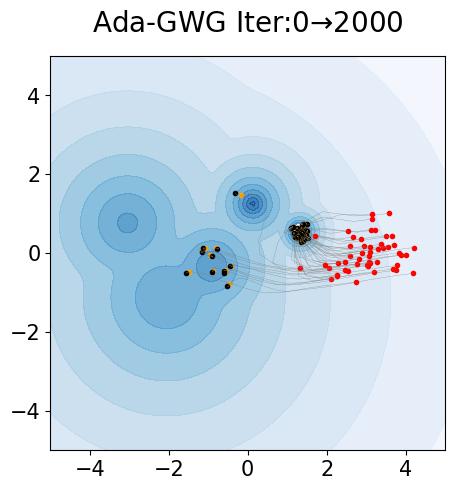}
   \end{minipage}%
   }%
   \hfill
   \subfigure{
   \begin{minipage}[t]{0.32\linewidth}
   \centering
   \includegraphics[width=1\textwidth]{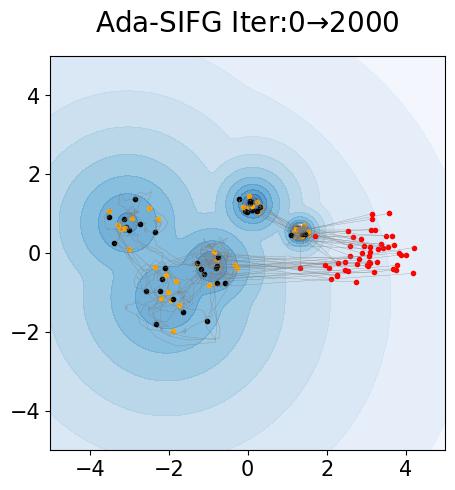}
   \end{minipage}%
   }%
   \centering
   \caption{The trajectory of particle movements during the first 2000 iterations for different methods. The red dot represents the initial location, the orange dot shows the particle location at iteration 1600 and the black dot shows the location at iteration 2000. We randomly selected 50 particles for illustration.
   }
   \label{figure:toy2ddiv}
\end{figure*}

We first test the sampling efficiency and sample diversity of different methods on toy Gaussian mixture models.
Specifically, we consider a 5-cluster, 2-dimensional Gaussian mixture model with equal weights for each component, where the standard deviations are $0.1, 0.2, 0.3, 0.4, 0.5$, respectively.
All methods share the same initial particle positions, which are sampled from the Gaussian distribution $\mathcal{N}((3,0),0.25\mathbf{I}_2)$.
The number of particles is set to 1000.
Figure \ref{figure:toy2dexpl} shows the scatter plots of the sampled particles from different methods.
We see that both $L_2$-GF and Ada-GWG get stuck at modes near the initial positions, whereas SIFG successfully discovers all the modes due to the improved exploration ability from the semi-implicit construction.
To further highlight the exploration ability and the sample diversity of SIFG compared to the other methods, we randomly select 50 particles of each method and trace their trajectories from iteration 0 to 2000.
The particles at iteration 1600, where all methods converge in distribution, are marked in orange.
From Figure \ref{figure:toy2ddiv}, we can see that the particles from $L_2$-GF and Ada-GWG barely after iteration 1600, while SIFG continue to generate diverse particles that match the target distribution well.

To evaluate the approximation accuracy of different methods and investigate the effectiveness of the adaptive procedure, we also consider a 5-cluster, 10-dimensional Gaussian mixture model with equal weights for each component, and standard deviations of $0.1, 0.2, 0.3, 0.4, 0.5$, respectively.
The number of particles is set to 1000, and the initial noise magnitude for both SIFG and Ada-SIFG is $0.1$.
The left plot of Figure \ref{figure: samples-toy} shows the KL divergence for different methods as a function of the number of iterations
in 5 independent runs.
We see that with a fixed noise level 0.1, SIFG converges significantly faster than $L_2$-GF, and performs similarly--though faster at the start--compared to Ada-GWG, which uses an adaptive procedure.
When the adaptive procedure is enabled, Ada-SIFG further improves upon SIFG and outperforms Ada-GWG.

\subsection{Monomial Gamma}

\begin{figure}[!t]
    \centering
    \includegraphics[width=0.48\linewidth]{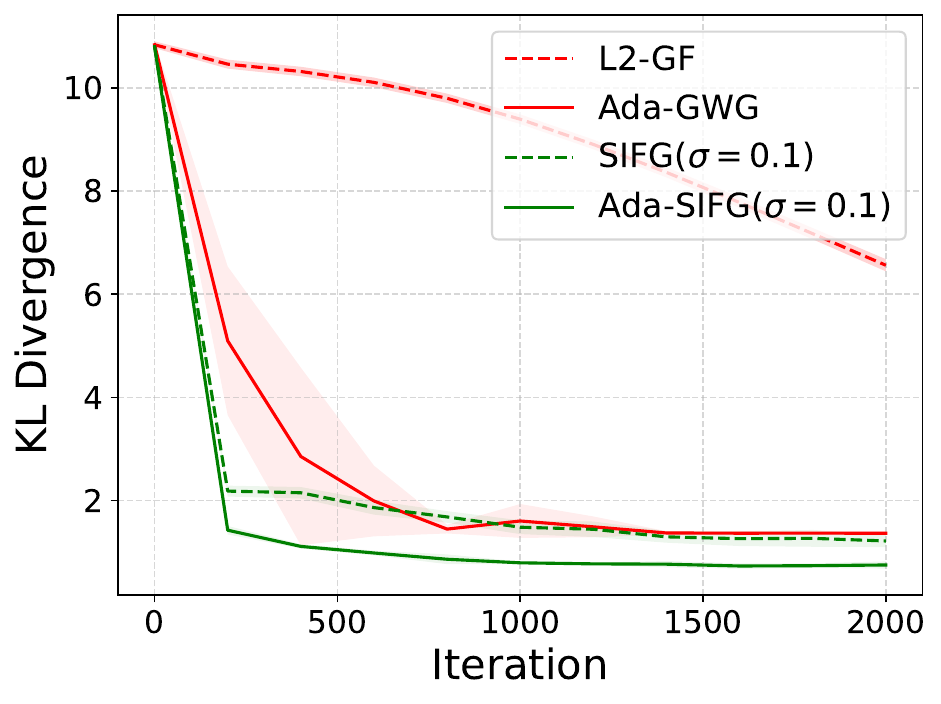}
    \includegraphics[width=0.49\linewidth]{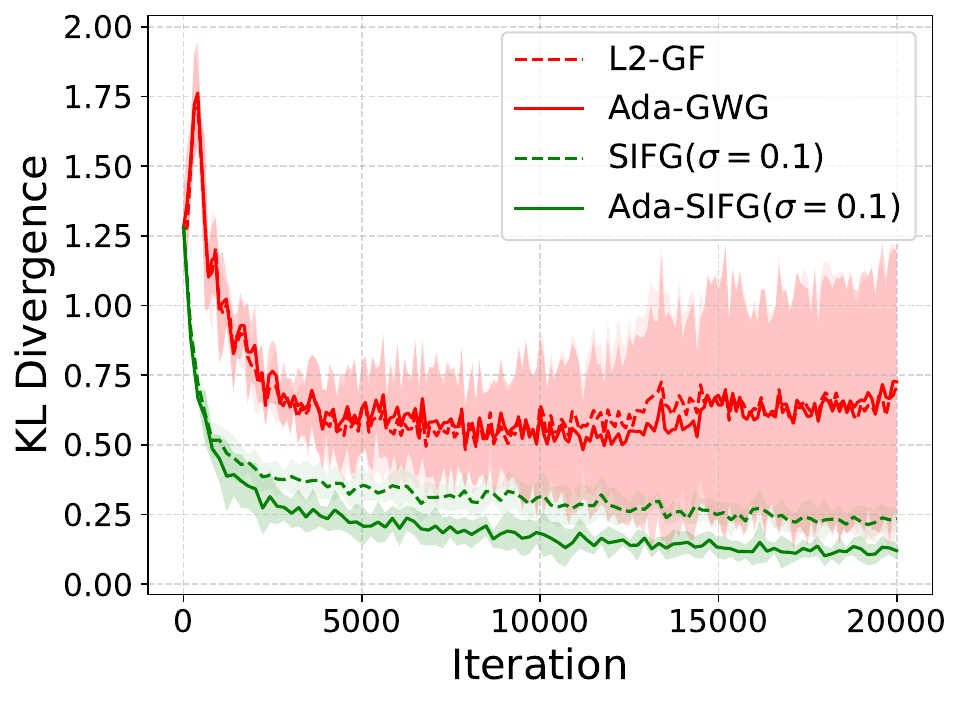}  
    \caption{KL divergence of different methods versus the number of iterations. $\textbf{Left:}$ Gaussian mixture distribution. $\textbf{Right:}$ Monomial gamma distribution.
    }
    \label{figure: samples-toy}
\end{figure}

To further investigate the exploration advantage of SIFG, we follow \citet{cheng2023gwg} and consider the heavy-tailed 2-dimensional Monomial Gamma distribution, where the target distribution is given by $\pi(x_1,x_2) \propto \exp(-0.3(|x_1|^{0.9}+|x_2|^{0.9}))$.
The number of particles is set to 1000, and the initial noise magnitude for both SIFG and Ada-SIFG is $0.1$.
The right plot of Figure \ref{figure: samples-toy} shows the KL divergence as a function of the number of iterations.
We see that SIFG converges faster than both $L_2$-GF and Ada-GWG, and with the adaptive procedure enabled, Ada-SIFG converges even faster than SIFG.
Furthermore, both SIFG and Ada-SIFG exhibit more stable convergence compared to $L_2$-GF and Ada-GWG, as evidenced by significantly smaller variances in the training curves.
Faster convergence suggests a stronger exploration ability, allowing the method to efficiently navigate the heavy-tailed distribution.
This advantage arises from the semi-implicit construction, which provides noisy gradient directions during training, while denoising score matching further reduces variance in the score function estimates, leading to more stable training.

\subsection{Bayesian Independent Component Analysis}

The task of independent component analysis (ICA) is to infer the unknown unmixing matrix $W\in \mathbb{R}^{d\times d}$, given the observations $x\in\mathbb{R}^d$, which is related to the latent sources $s\in\mathbb{R}^d$ through the equation $x=W^{-1}s$, where the components of $s$ are independent. Let $x=\{x_n\in \mathbb{R}^d\}$ be the observed samples. Assuming that each component $s_i$ follows the same distribution, $s_i\sim p_s$, the likelihood of the data is given by
\begin{equation*}
    p(x|W)=|\text{det} W| \prod_{i=1}^{d} p_s([Wx]_i).
\end{equation*}
Following \citet{korba2021ksdd}, we choose
$p_s$ such that $p'_s(\cdot)/p_s(\cdot) =-\text{tanh}(\cdot)$. The posterior distribution of $W$ given the observation $x$ is $p(W|x)\propto p(W)p(x|W)$.

We compare our methods, SIFG and Ada-SIFG, with SVGD, $L_2$-GF and Ada-GWG
on the MEG dataset \citep{ricardo2022meg}, which has 122 channels and 17730 observations.
Following \citet{zhang18}, we extract the first 5 channels, hence the dimension of matrix $W$ is 25. We choose a Gaussian prior over the unmixing matrix
$p(W_{ij})\sim \mathcal{N}(0,100)$.
To access convergence, we compute the Amari distance \citep{amari1995} between the ground truth $W_0$ and the estimates generated by different methods, where the ground truth is obtained from a long run ($10^5$ samples) of Hamiltonian Monte Carlo. 
To evaluate sample efficiency, we consider two settings with 10 and 100 particles. Each experiment is repeated 50 and 5 times, respectively, yielding 500 samples per method.
The results are reported in the left and middle plots of Figure \ref{figure:bnns}.
We see that SIFG outperforms $L_2$-GF and Ada-GWG in both settings and remains robust even with fewer particles. 
In contrast, SVGD is relatively more affected by small particle sizes. 

\begin{figure*}[t]
   \centering
   \subfigure{
   \begin{minipage}[t]{0.32\linewidth}
   \centering
   \includegraphics[width=1\textwidth]{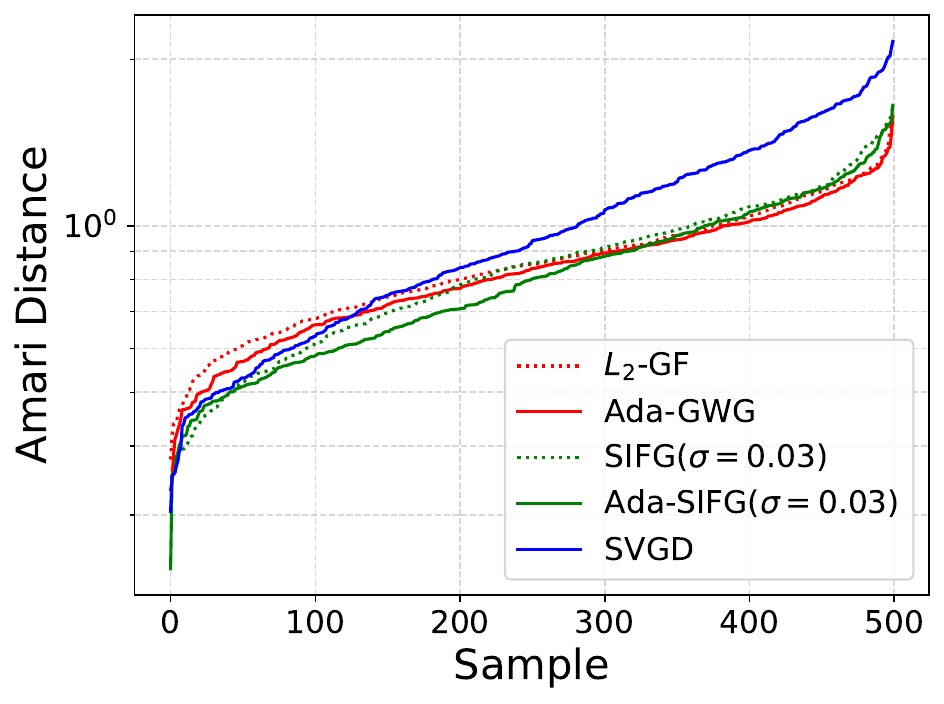}
   \end{minipage}%
   }%
   \hfill
   \subfigure{
   \begin{minipage}[t]{0.32\linewidth}
   \centering
   \includegraphics[width=1\textwidth]{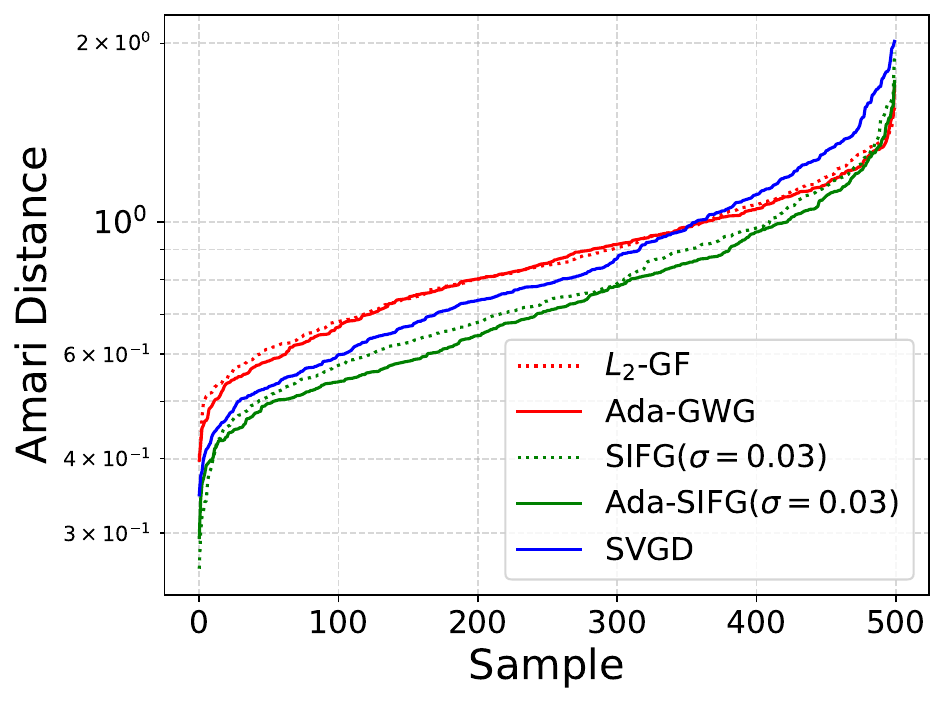}
   \end{minipage}%
   }%
   \hfill
   \subfigure{
   \begin{minipage}[t]{0.32\linewidth}
   \centering
   \includegraphics[width=1\textwidth]{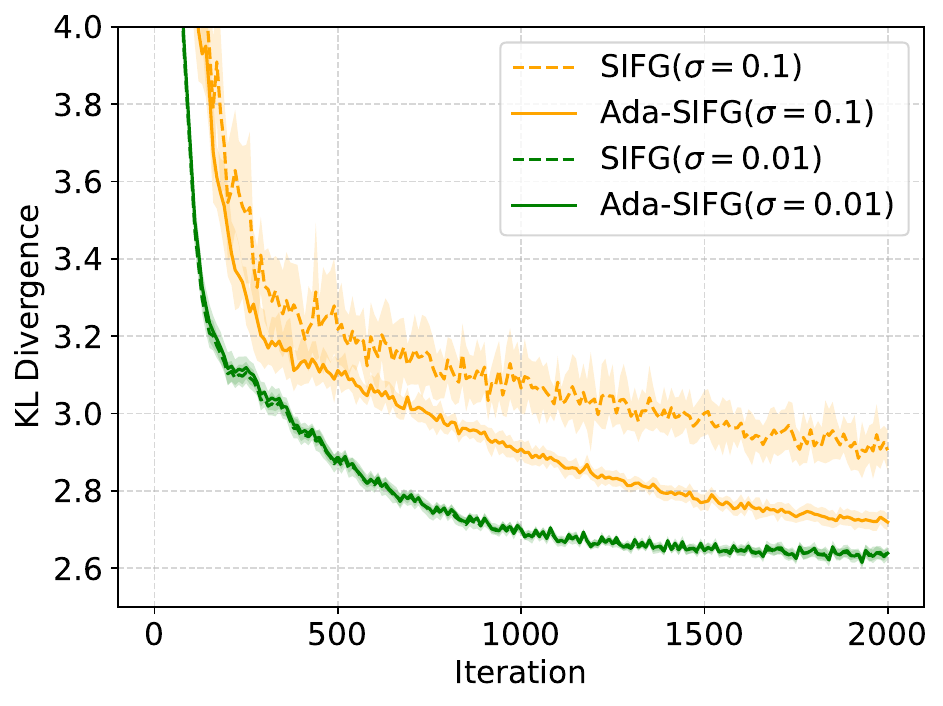}
   \end{minipage}%
   }%
   \centering
   \caption{$\textbf{Left and Middle:}$ Amari distances of different methods on MEG dataset. On the left is the experiment for 10 particles and 50 random repetitions. In the middle is the experiment for 100 particles and 5 random repetitions. $\textbf{Right:}$ Test RMSE for BNN on Boston dataset. The number in parentheses
specifies the initial value of $\sigma$.
   }
   \label{figure:bnns}
\end{figure*}

\subsection{Bayesian Neural Networks}

Our final experiment is on Bayesian neural networks (BNN), where we compare our algorithm with SGLD, SVGD, $L_2$-GF and Ada-GWG.
Following \citet{cheng2023gwg}, we use a two-layer network with 50 hidden units and ReLU
activation, with a $\mathrm{Gamma}(1, 0.1)$ prior for the inverse covariances.
Each dataset is randomly split into $90\%$ for training and $10\%$ for testing, and all methods use a mini-batch size of 100.
The results are averaged over 10 independent trials.

Table \ref{tab:bnn_results} reports the average test rooted mean square error (RMSE) and negative log-likelihood (NLL), along with their standard deviations.
We see that Ada-SIFG achieves comparable or better results than the other methods.
The right plot in Figure \ref{figure:bnns} shows the test RMSE over iterations for both SIFG and Ada-SIFG on the Boston dataset.
Specifically, we observe that $\sigma=0.01$ produces better results than $\sigma=0.1$.
While the latter is sub-optimal, our adaptive method Ada-SIFG significantly improves performance, converging towards the optimal setting of $\sigma=0.01$.   
This demonstrates that the adaptive scheme effectively adjusts the suboptimal noise magnitude during training, enhancing robustness.
Additional experimental details can be found in Appendix \ref{appd:exp_setup}.

\begin{table*}[t]
\caption{Test RMSE and test NLL of Bayesian neural networks on several UCI datasets. The results are averaged from 10 independent runs with standard deviations in subscripts.
  }
\label{tab:bnn_results}
\centering
\vskip2em
\setlength\tabcolsep{6.3pt}
\resizebox{\linewidth}{!}{
\begin{tabular}{lcccccccccc}
\toprule
 \multirow{2}{*}{Dataset}&\multicolumn{5}{c}{Test RMSE ($\downarrow$)} &\multicolumn{5}{c}{Test NLL ($\downarrow$)}  \\
\cmidrule(l){2-6}\cmidrule(l){7-11}
& SGLD& SVGD& $L_2$-GF& Ada-GWG& Ada-SIFG& SGLD& SVGD& $L_2$-GF& Ada-GWG& Ada-SIFG\\
\midrule
\textsc{Boston}        &$2.917_{\pm 0.04}$ & $2.944_{\pm 0.03}$& $ 3.016_{\pm 0.12}$ & $2.662_{\pm 0.06}$ & $\bm{2.641}_{\pm 0.02}$&$2.563_{\pm 0.01}$& $2.567_{\pm 0.01}$& $2.706_{\pm 0.04}$    & $2.680_{\pm 0.04}$ & $\bm{2.496}_{\pm 0.02}$   \\
\textsc{Concrete}     &$8.218_{\pm 0.27}$ & $7.640_{\pm 0.03}$& $7.556_{\pm 0.08}$ & $\bm{6.590}_{\pm 0.09}$ & $6.619_{\pm 0.03}$& $3.539_{\pm 0.02}$ & $3.455_{\pm 0.01}$&$3.449_{\pm 0.01}$    & $3.341_{\pm 0.01}$ & $\bm{3.323}_{\pm 0.01}$  \\
\textsc{Diabetes}      &$0.382_{\pm0.00}$ & $0.386_{\pm 0.00}$& $0.381_{\pm 0.00}$ & $0.384_{\pm 0.00}$ & $\bm{0.379}_{\pm 0.00}$ &$0.466_{\pm0.00}$& $0.484_{\pm 0.00}$& $0.471_{\pm 0.00}$    & $0.479_{\pm 0.00}$ & $\bm{0.449}_{\pm 0.00}$ \\
\textsc{Power}       &$4.176_{\pm 0.01}$ & $4.940_{\pm 0.00}$&$4.258_{\pm 0.06}$ & $ 4.074_{\pm 0.05}$ & $\bm{4.017}_{\pm 0.00}$& $2.863_{\pm 0.00}$ & $3.016_{\pm 0.00}$& $3.070_{\pm 0.02}$    & $2.967_{\pm 0.01}$ & $\bm{2.829}_{\pm 0.00}$   \\
\textsc{Protein}      &$\bm{4.609}_{\pm 0.00} $  &$4.940_{\pm 0.00}$ &$5.024_{\pm 0.02}$ & $4.967_{\pm 0.01}$ & $4.863_{\pm 0.00}$&$\bm{2.946}_{\pm 0.00}$ & $3.016_{\pm 0.00}$ & $3.075_{\pm 0.02}$    & $3.040_{\pm 0.01}$ & $2.999_{\pm 0.00}$  \\
\textsc{Wine}     &$0.431_{\pm 0.00}$ & $0.422_{\pm 0.00}$& $0.419_{\pm 0.00}$ & $0.416_{\pm 0.00}$ & $\bm{0.413}_{\pm 0.00}$& $0.576_{\pm 0.00}$& $0.559_{\pm 0.00}$& $0.552_{\pm 0.01}$    & $0.552_{\pm 0.00}$ & $\bm{0.535}_{\pm 0.00}$  \\
\bottomrule
\end{tabular}
}
\end{table*}

\section{Conclusion}\label{sec:concl}

We introduce a novel functional gradient flow method, Semi-Implicit Functional Gradient Flow (SIFG), which leverages a semi-implicit variational family represented by particles.
In SIFG, Gaussian noise is injected into the particles before updating them according to the gradient flow, promoting better exploration and diversity in the approximation.
Under accurate gradient flow estimation and mild smoothness conditions, we establish a convergence guarantee for SIFG that generalizes prior results on variational inference with finite Gaussian mixtures \citep{tom2024theoretical}.
Furthermore, we derive the sample complexity of the neural network-based gradient flow estimation within the empirical risk minimization (ERM) framework.
We also introduced an adaptive version, called Ada-SIFG, which can automatically tune the noise magnitude to improve sampling quality.
Extensive numerical experiments demonstrate that Ada-SIFG consistently outperforms existing particle-based variational inference (ParVI) methods.

Several promising directions remain for further exploration. Beyond Gaussian kernels, alternative kernels that better capture the geometry of the target distribution—especially in constrained domains—could be investigated. In this context, SIFG is closely related to the mirror Wasserstein gradient flow introduced by \citet{bonet2024mirror}, and a deeper exploration of this connection may yield valuable insights.
Additionally, our theoretical analysis relies on certain regularity assumptions about the sampling trajectory. While such assumptions are commonly made in the literature and supported by empirical evidence, a rigorous justification requires further investigation into the partial differential equations (PDEs) governing the law of the corresponding gradient flows.

\section*{Acknowledgements}
This work was supported by National Natural Science Foundation of China (grant no. 12201014 and grant no. 12292983).
The research of Cheng Zhang was supported in part by National Engineering Laboratory for Big Data Analysis and Applications, the Key Laboratory of Mathematics and Its Applications (LMAM) and the Key Laboratory of Mathematical Economics and Quantitative Finance (LMEQF) of Peking University. 
The authors are grateful for the computational resources provided by the High-performance Computing Platform of Peking University.

\bibliography{Bibliography}
\bibliographystyle{apalike}

\newpage
\appendix

\section{The Geometry of the Gradient Flows}\label{app:wgf_basics}

In this section, we first develop the geometric framework of gradient flows by introducing the general definition of a gradient system on a manifold. We then provide examples of gradient flows with different geometries, including the well-known Wasserstein gradient flow.

A gradient flow describes a dynamical system that is driven towards the fastest
dissipation of a certain energy functional. The dissipation potential depicts the geometry of this gradient flow.
Given a smooth state space $X$, a dissipation potential $\mathfrak{R}=\mathfrak{R}(\mu,\dot{\mu})$ is a
function on the tangent bundle $TX:=\{(\mu,\dot{\mu})|\mu\in X,\dot{\mu}\in T_{\mu}X\}$ that satisfies the following conditions: (i) $\mathfrak{R}(\mu,\cdot)$ is non-negative and convex, and (ii) $\mathfrak{R}(\mu,0)=0$. 
If $\mathfrak{R}(\mu,\dot{\mu})=\frac{1}{2}\langle\mathbb{G}(\mu)\dot{\mu},\dot{\mu}\rangle$, where $\mathbb{G}$ is the Riemannian tensor, we say that $\mathfrak{R}(\mu,\cdot)$ is quadratic. 
In practice, the dissipation potential is usually complex, making direct analysis challenging. To address this, we introduce the dual dissipation $\mathfrak{R}^*(\mu,\xi)$, which describes the local geometry in terms of its dual formulation:
\begin{equation}
     \mathfrak{R}^*(\mu, \xi) = \sup \{ \langle \xi, v \rangle -\mathfrak{R}(\mu, v) | v \in T_{\mu}X\}.
\end{equation}
For quadratic $\mathfrak{R}(\mu,\dot{\mu})=\frac{1}{2}\langle\mathbb{G}(\mu)\dot{\mu},\dot{\mu}\rangle$, the corresponding dual dissipation potential takes the form $\mathfrak{R}^*(\mu, \xi)=\frac{1}{2}\langle\xi, \mathbb{K}(\mu)\xi\rangle$, where $\mathbb{K}=\mathbb{G}^{-1}$ is the Onsager operator.

We now present the formal definition of the gradient system \citep{zhu2024rao}. 
\begin{definition}
    A triple $(X, \mathcal{F}, \mathfrak{R})$ is called a gradient system, if $X$ is a manifold or a subset of a Banach space, $\mathcal{F} : X\to\mathbb{R}$ is a differentiable function,
and $\mathfrak{R}$ is a dissipation potential. The associate gradient flow equation is
\begin{equation}\label{gsystem}
     \text{D}_{\dot{\mu}}\mathfrak{R}(\mu,\dot{\mu})+\text{D}_{\mu}\mathcal{F}(\mu)=0\Longleftrightarrow \dot{\mu}=\text{D}_{\xi}\mathfrak{R}^*(\mu,-\text{D}_{\mu}\mathcal{F}(\mu))
\end{equation}
If $\mathfrak{R}$ is quadratic, the gradient flow equations are
\begin{equation}\label{gradientflow_equation}
     \mathbb{G}(\mu)\dot{\mu}+\text{D}\mathcal{F}(\mu)=0\Longleftrightarrow  \dot{\mu}=-\mathbb{K}(\mu)\text{D}\mathcal{F}(\mu)
\end{equation}
\end{definition}
Equation (\ref{gsystem}) aligns with the definition of Euclidean space gradient flow. Under this gradient flow, the dissipation of the objective energy functional is 
$\dot{\mathcal{F}}=-\langle \text{D}\mathcal{F}(\mu), \mathbb{K}(\mu)\text{D}\mathcal{F}(\mu)\rangle=-2\mathfrak{R}^*(\mu,\text{D}\mathcal{F}(\mu))\le 0$, 
which is a basic property of the gradient system.

After establishing the general framework for the gradient system, we present several typical gradient flows as illustrative examples.

\begin{enumerate}[label=\textbf{Example \arabic*.}]

\item 
One of the most well-known is the 2-Wasserstein gradient flow \citep{benamou2000wass}.
The dissipation potential of 2-Wasserstein geometry is defined as $\mathfrak{R}_{W_2}(\mu,\dot{\mu})=\inf \{\frac{1}{2}\|v\|^2_{L^2(\mu)}:\dot{\mu}=-\nabla\cdot(\mu v)\}$. By Fenchel duality, the dual dissipation potential is
$\mathfrak{R}_{W_2}^*(\mu, \xi)=\frac{1}{2}\|\nabla\xi\|^2_{L^2(\mu)}=\frac{1}{2}\langle\xi, -\nabla\cdot(\mu\nabla\xi)\rangle$.
The Onsager operator of 2-Wasserstein gradient flow is $\mathbb{K}_{W_2}(\mu)\xi=-\nabla\cdot(\mu\nabla\xi)$. By equation (\ref{gradientflow_equation}), we have $\xi=\text{D}\mathcal{F}=\frac{\delta\mathcal{F}}{\delta\mu}$, $v=-\nabla\frac{\delta\mathcal{F}}{\delta\mu}:=-\nabla_{W_2}\mathcal{F}$, and
the 2-Wasserstein gradient flow equation is $\dot{\mu}=\nabla\cdot(\mu\nabla\frac{\delta\mathcal{F}}{\delta\mu})$.

\item 

The second example is the generalization of the 2-Wasserstein geometry-the GWG geometry proposed by \cite{cheng2023gwg}.
Here the dissipation potential is not quadratic. For a Young function $g$ \citep{cheng2023gwg}, define the dissipation potential of GWG geometry to be $\mathfrak{R}_{\text{GWG}}(\mu,\dot{\mu})=\inf \{\int g(v)d\mu:\dot{\mu}=-\nabla\cdot(\mu v)\}$. 
By Fenchel duality, the dual dissipation potential is $\mathfrak{R}_{\text{GWG}}^*(\mu, \xi)=\int g^*(\nabla \xi)d\mu$, where $g^*$ is the convex conjugate of $g$. Since $\int [g^*(\nabla (\xi+d\xi))-g^*(\nabla\xi)]d\mu=\langle \nabla g^*(\nabla\xi),\nabla d\xi\rangle_{L^2(\mu)}=\langle d\xi, -\nabla\cdot(\mu \nabla g^*(\nabla\xi))$, by equation (\ref{gsystem}), we have 
the generalized Wasserstein gradient flow equation is $\dot{\mu}=\nabla\cdot(\mu\nabla g^*(\nabla\frac{\delta \mathcal{F}}{\delta \mu}))$, $\xi=-\text{D}\mathcal{F}=-\frac{\delta\mathcal{F}}{\delta\mu}$, $v=-\nabla g^*(\nabla\frac{\delta \mathcal{F}}{\delta \mu})$, arriving at equation (\ref{eq:sm_full}).

\item The gradient flow interpretation of SVGD \citep{Liu2016SVGD} is SVGF \citep{Liu2017SVGF}. Its geometry is the kernelized Wasserstein geometry, also named as Stein geometry \citep{zhu2024rao}. The dissipation potential of Stein geometry is defined as $\mathfrak{R}_{\text{Stein}}(\mu,\dot{\mu})=\inf \{\frac{1}{2}\|\mathcal{K}_{\mu}v\|^2_{\mathcal{H}}:\dot{\mu}=-\nabla\cdot(\mu\cdot\mathcal{K}_{\mu}v)\}$. By Fenchel duality, the dual dissipation potential is $\mathfrak{R}_{\text{Stein}}^*(\mu, \xi)=\frac{1}{2}\|\mathcal{K}_{\mu}\nabla\xi\|^2_{\mathcal{H}}=\frac{1}{2}\langle \nabla\xi, \mathcal{K}_{\mu}\nabla\xi\rangle$, and the Onsager operator is $\mathbb{K}_{\text{Stein}}(\mu)\xi=-\nabla\cdot(\mu\mathcal{K}_{\mu}\nabla\xi)$. The gradient flow of SVGD is $\dot{\mu}=\nabla\cdot(\mu\mathcal{K}_{\mu}\nabla\frac{\delta \mathcal{F}}{\delta \mu})$.

\item Finally, we state that the two gradient flows 
in Section \ref{subsec:sifggf}:
\begin{equation}    \dot{\hat{\mu}}=\mathcal{K}\nabla\cdot(\mathcal{K}^{-1}\hat{\mu}\cdot\mathcal{K}\nabla\frac{\delta \mathcal{F}}{\delta \hat{\mu}})\Longleftrightarrow \dot{\mu}=-\nabla\cdot(\mu \nabla\frac{\delta \hat{\mathcal{F}}}{\delta \mu})
\end{equation}
where $\hat{\mathcal{F}}(\mu)=\mathcal{F}(\mathcal{K}\mu)$, 
$\hat{\mu}=\mathcal{K}\mu$,
can be regarded as an interconversion between standard Wasserstein gradient flow with the energy functional $\hat{\mathcal{F}}$ and the SIFG gradient flow with the energy functional $\mathcal{F}$. The kernel needs to satisfy $\langle f,-\mathcal{K}\nabla\cdot g\rangle=\langle \mathcal{K}\nabla f, g\rangle$, and the Gaussian kernel we used satisfies this property.
The dissipation potential of SIFG geometry is defined as $\mathfrak{R}_{\text{SIFG}}(\hat{\mu},\dot{\hat{\mu}})=\inf \{\frac{1}{2}\|v\|^2_{L^2(\mathcal{K}^{-1}\hat{\mu})}:\dot{\hat{\mu}}=-\nabla\cdot(\mathcal{K}^{-1}\hat{\mu}\cdot\mathcal{K}v)\}$. This potential is nonnegative, convex,
and satisfies $\mathfrak{R}_{\text{SIFG}}(\mu,0)=0$. By Fenchel duality, the dual dissipation potential is $\mathfrak{R}_{\text{SIFG}}^*(\hat{\mu}, \xi)=\frac{1}{2}\langle \xi,-\mathcal{K}\nabla\cdot(\mathcal{K}^{-1}\hat{\mu}\cdot\mathcal{K}\nabla\xi)\rangle= \frac{1}{2}\langle \mathcal{K}\nabla\xi,\mathcal{K}^{-1}\hat{\mu}\cdot\mathcal{K}\nabla\xi\rangle=\frac{1}{2}\|\mathcal{K}\nabla\xi\|^2_{L^2_{\mathcal{K}^{-1}\hat{\mu}}}$ (we use the property $\langle f,-\mathcal{K}\nabla\cdot g\rangle=\langle \mathcal{K}\nabla f, g\rangle$ at the second equation), and the Onsager operator is $\mathbb{K}_{\text{SIFG}}(\hat{\mu})\xi=-\mathcal{K}\nabla\cdot(\mathcal{K}^{-1}\hat{\mu}\cdot\mathcal{K}\nabla\xi)$. The dual dissipation potential $\mathfrak{R}_{\text{SIFG}}$ resembles $\mathfrak{R}_{\text{Stein}}^*(\mu, \xi)=\frac{1}{2}\|\mathcal{K}_{\mu}\nabla\xi\|^2_{\mathcal{H}}$.

\end{enumerate}

In conclusion, the general gradient system framework includes previous gradient flow methods and our new method, providing potential for designing new algorithms.

\section{Derivation of the Wasserstein Gradient Flow}\label{app:wgf_proof}

We prove Theorem \ref{mol_wgf} as the following.

\begin{theorem}
If the gradient of the transition kernel is skew-symmetric, i.e., $\nabla_x k(x,z)=-\nabla_z k(x,z)$, then the Wasserstein gradient flow of the energy functional $\mathcal{\hat{F}}$ is 
\begin{equation}
     \dot{\mu}_t(z)=-\nabla\cdot\Big(\mu_t(z) \E_{k(x, z)} \nabla\log \frac{\pi(x)}{\hat{\mu}_t(x)}\Big)=-\nabla\cdot(\mu_t \mathcal{K}\nabla\log \frac{\pi}{\mathcal{K}\mu_t}),
\end{equation}
and the Wasserstein gradient is $\nabla_{W_2}\mathcal{\hat{F}}(\mu_t)(z)=-\E_{k(x,z)} \nabla\log \frac{\pi(x)}{\hat{\mu}_t(x)}=-\mathcal{K}\nabla\log \frac{\pi}{\mathcal{K}\mu_t}$.

\end{theorem}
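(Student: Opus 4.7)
The plan is to compute the first variation $\frac{\delta \hat{\mathcal{F}}}{\delta \mu}$ explicitly, take its spatial gradient, then use the skew-symmetry hypothesis together with integration by parts to move the derivative from the variable $z$ (over which $\mu_t$ lives) onto the variable $x$ (over which $\hat{\mu}_t$ and $\pi$ live). Once that is done the claimed formulas for $\nabla_{W_2}\hat{\mathcal{F}}(\mu_t)$ and for $\dot{\mu}_t$ follow by substituting into the standard Wasserstein gradient flow equation $\dot{\mu}_t = \nabla\cdot\bigl(\mu_t\,\nabla\tfrac{\delta \hat{\mathcal{F}}}{\delta \mu}(\mu_t)\bigr)$ recalled in Section~\ref{sec:FGF}.

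First, I would write $\hat{\mathcal{F}}(\mu)=\int (\log\hat{\mu}-\log\pi)\,\hat{\mu}\,dx$ with $\hat{\mu}(x)=\int k(x,z)\,d\mu(z)$, and perturb $\mu\mapsto \mu+\epsilon\nu$ for a mean-zero signed measure $\nu$. Since the linear map $\mu\mapsto\hat{\mu}$ yields $\delta\hat{\mu}(x)=\int k(x,z)\,d\nu(z)$, the $\log\hat{\mu}$ and the measure $\hat{\mu}\,dx$ each contribute a derivative; the usual cancellation collapses the derivative of $\int\hat{\mu}\log\hat{\mu}$ into $\int(1+\log\hat{\mu})\delta\hat{\mu}\,dx$, so after discarding the constant (which vanishes against mean-zero $\nu$) one obtains
\begin{equation*}
    \frac{\delta \hat{\mathcal{F}}}{\delta \mu}(z) \;=\; \int k(x,z)\,\log\frac{\hat{\mu}(x)}{\pi(x)}\,dx.
\end{equation*}

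Next, I differentiate in $z$ to get the Wasserstein gradient. Since $\log\frac{\hat{\mu}(x)}{\pi(x)}$ is $z$-independent, the gradient passes through the integral and lands on the kernel, giving $\nabla_z\frac{\delta \hat{\mathcal{F}}}{\delta \mu}(z)=\int \nabla_z k(x,z)\log\frac{\hat{\mu}(x)}{\pi(x)}\,dx$. Applying the skew-symmetry hypothesis $\nabla_z k(x,z)=-\nabla_x k(x,z)$ and integrating by parts in $x$ (legitimate for the Gaussian kernel since $k(x,z)$ and its gradient decay at infinity, and $\log\frac{\hat\mu}{\pi}$ has sub-exponential growth under the standing regularity assumptions) transfers the derivative onto $\log\frac{\hat{\mu}}{\pi}$ with a sign flip that cancels the minus sign from skew-symmetry. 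This yields
\begin{equation*}
    \nabla_{W_2}\hat{\mathcal{F}}(\mu_t)(z) \;=\; \nabla_z\frac{\delta \hat{\mathcal{F}}}{\delta \mu}(z) \;=\; -\int k(x,z)\,\nabla_x\log\frac{\pi(x)}{\hat{\mu}_t(x)}\,dx \;=\; -\mathcal{K}\nabla\log\frac{\pi}{\mathcal{K}\mu_t}(z),
\end{equation*}
which, because the Gaussian $k(\cdot,z)$ is a probability density in $x$, is also the conditional expectation $-\mathbb{E}_{k(x,z)}\nabla\log\frac{\pi(x)}{\hat{\mu}_t(x)}$ claimed in the theorem. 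Substituting this expression into the continuity equation $\dot{\mu}_t=\nabla\cdot(\mu_t\nabla\frac{\delta \hat{\mathcal{F}}}{\delta\mu}(\mu_t))$ immediately gives $\dot{\mu}_t=-\nabla\cdot(\mu_t\,\mathcal{K}\nabla\log\frac{\pi}{\mathcal{K}\mu_t})$.

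The only genuinely delicate step is justifying the integration by parts, since it relies on the integrand $k(x,z)\log\frac{\hat{\mu}_t(x)}{\pi(x)}$ vanishing at $|x|\to\infty$; for isotropic Gaussian kernels this is automatic provided $\log\pi$ has at most polynomial growth and $\mu_t$ has the finite moments assumed later in Assumption~\ref{assp:moment}, so the technical obstacle is really just a regularity bookkeeping rather than a structural difficulty. Everything else is a direct computation organized around the skew-symmetry identity.
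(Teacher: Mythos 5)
Your proof is correct, and it reaches the result by a slightly different route than the paper. You compute the first variation $\frac{\delta\hat{\mathcal{F}}}{\delta\mu}(z)=\int k(x,z)\log\frac{\hat{\mu}(x)}{\pi(x)}\,dx$ directly from the perturbation $\mu\mapsto\mu+\epsilon\nu$ (the additive constant coming from $\int k(x,z)\,dx=1$ is harmless, as you note), and then obtain $\nabla_{W_2}\hat{\mathcal{F}}=\nabla\frac{\delta\hat{\mathcal{F}}}{\delta\mu}$ before invoking the skew-symmetry and an integration by parts in $x$. The paper instead never writes the first variation: it differentiates $\mathcal{F}(\hat{\mu}_t)$ in time along the continuity equation with a generic velocity field $v_t$, integrates by parts in $z$ to land the derivative on the kernel, and identifies the Wasserstein gradient from the $L^2(\mu_t)$ pairing $\frac{d}{dt}\mathcal{F}=\langle\nabla_{W_2}\hat{\mathcal{F}},v_t\rangle_{L^2(\mu_t)}$ via the dissipation property; the final step (skew-symmetry plus integration by parts in $x$ to move $\nabla_z k$ onto $\log\frac{\hat{\mu}_t}{\pi}$) is identical in both arguments. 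The two derivations are equivalent — yours is arguably the more canonical use of the definition $\nabla_{W_2}\mathcal{F}:=\nabla\frac{\delta\mathcal{F}}{\delta\mu}$ stated in the paper's notation and is a bit shorter, while the paper's dynamic formulation makes the identification of the optimal velocity $v_t^*=-\nabla_{W_2}\hat{\mathcal{F}}$ explicit, which is the form used downstream in the algorithm. You are also more careful than the paper about the boundary terms in the integration by parts, which the paper asserts without comment; your sketch of why they vanish (Gaussian decay of $k$ against sub-exponential growth of $\log\frac{\hat{\mu}_t}{\pi}$) is the right justification.
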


\begin{proof}

Consider the dynamics of the original particles
\begin{equation}
    \frac{d}{d t}z_{t}= v_{t}(z_{t}), \ z_{t}\sim \mu_t,
\end{equation}
The continuity equation is 
\begin{equation}\label{sifggradflow}
    \frac{d \mu_t}{d t}+ \nabla \cdot (\mu_t v_t)=0,
\end{equation}
By the chain rule, the time derivative of the KL divergence is
\begin{align}
    \frac{d \mathcal{F}}{d t}&=\langle \frac{\delta}{\delta \hat{\mu}_t }\mathcal{F}(\hat{\mu}_t), \frac{d \hat{\mu}_t}{d t}\rangle\\
    &=\langle \log \frac{\hat{\mu}_t(x)}{\pi(x)}, \int k(x,z)\frac{d\mu_t(z)}{dt} \rangle
\end{align}
Plugging in the continuity equation gives
\begin{align}
    \frac{d \mathcal{F}}{d t}=&\int \log \frac{\hat{\mu}_t(x)}{\pi(x)}\int k(x,z)\frac{d \mu_t(z)}{d t}dx\\
    =&-\int\log\frac{\hat{\mu}_t(x)}{\pi(x)}\int k(x,z)\nabla_z\cdot(\mu_t(z) v_t(z))dz dx\\
    =&\int\log\frac{\hat{\mu}_t(x)}{\pi(x)}\int \nabla_z k(x,z) \mu_t(z) v_t(z)dz dx\\
    =&\int \left [ \int\log\frac{\hat{\mu}_t(x)}{\pi(x)} \nabla_z k(x,z) dx  \right ]  v_t(z) \mu_t(z)dz \label{term:wass}
\end{align}
The dissipation property of the Wasserstein gradient
flow states that 
$\frac{d}{d t}\mathcal{\hat{F}}=\frac{d}{d t}\mathcal{F}=-\E_{\mu_t}\|\nabla_{W_2}\mathcal{\hat{F}}\|^2$, and the optimal velocity $v_t^*$ is the negative of the Wasserstein gradient \citep{Chewi2020chi-squared}. From (\ref{term:wass}) we have 
\begin{align}
v_t^*(z)
=&-\nabla_{W_2}\hat{\mathcal{F}}(\mu_t)(z)\\
=&-\int\log\frac{\hat{\mu}_t(x)}{\pi(x)} \nabla_z k(x,z) dx \\
=&\int\log\frac{\hat{\mu}_t(x)}{\pi(x)} \nabla_x k(x,z) dx \label{term:sym}\\
=&\int \nabla \log \frac{\pi(x)}{\hat{\mu}_t(x)} k(x,z) dx\label{term:result}\\
=&\E_{k(x,z)} \nabla\log \frac{\pi(x)}{\hat{\mu}_t(x)}\\
=&\mathcal{K}\nabla\log \frac{\pi}{\mathcal{K}\mu_t}
\end{align}
where the equation of (\ref{term:sym}) is guaranteed by the skew-symmetry of the gradient of the Gaussian transition kernel. 
Hence, the Wasserstein gradient flow equation for the energy functional $\mathcal{\hat{F}}$ is $ \dot{\mu}_t=\nabla\cdot(\mu_t \nabla_{W_2}\mathcal{\hat{F}}(\mu_t))=-\nabla\cdot(\mu_t \mathcal{K}\nabla\log \frac{\pi}{\mathcal{K}\mu_t})$.

\end{proof}

\section{Details of Ada-SIFG}\label{appd:adasifg}

We follow the derivation idea of \citet{luo2024diff} to prove the gradient estimate result (\ref{adasifg:estimate}).
\begin{align}
    \frac{d}{d \sigma} \mathcal{F}_{\sigma}(\mu_t) 
    &= \E _{z\sim \mu_t(z), w\sim \mathcal{N}(0,I)} \frac{\partial}{\partial \sigma} \log \frac{\hat{\mu}_{\sigma,t}(z+\sigma w)}{\pi(z+\sigma w)}\\
    &= \E _{z\sim \mu_t(z), w\sim \mathcal{N}(0,I)} \nabla \log \frac{\hat{\mu}_{\sigma,t}(z+\sigma w)}{\pi(z+\sigma w)}\cdot w +\E _{x\sim \hat{\mu}_{\sigma,t}} \frac{\partial}{\partial \sigma} \log \hat{\mu}_{\sigma,t}(x)\\ 
    &=\E _{z\sim \mu_t(z), w\sim \mathcal{N}(0,I)} \nabla \log \frac{\hat{\mu}_{\sigma,t}(z+\sigma w)}{\pi(z+\sigma w)}\cdot w \\ \label{appd:adasifgzero}
    &\approx \E _{z\sim \mu_t(z), w\sim \mathcal{N}(0,I)} [f_{\gamma}(z+\sigma w)- \nabla \log \pi(z+\sigma w)]\cdot w
\end{align}

where the equality of (\ref{appd:adasifgzero}) is because
\begin{equation}
    \E_{x\sim \hat{\mu}_{\sigma,t}} \frac{\partial}{\partial \sigma} \log \hat{\mu}_{\sigma,t}(x)=\int \frac{\partial}{\partial \sigma} \hat{\mu}_{\sigma,t}(x) dx=\frac{\partial}{\partial \sigma}  \int \hat{\mu}_{\sigma,t}(x) dx=\frac{\partial}{\partial \sigma} 1=0.
\end{equation}

By this gradient descent scheme on the noise magnitude $\sigma$, we present the full procedure of Ada-SIFG in Algorithm \ref{alg:adasifg}.

\begin{algorithm}[!thbp]
    \caption{Ada-SIFG: adaptive semi-implicit functional gradient flow}
    \label{alg:adasifg}
    \begin{algorithmic}
        \REQUIRE{Unnormalized target distribution $\pi$, initial particles $\{z_0^i\}_{i=1}^{n}$, initial parameter $\gamma_0$, iteration number $N, N'$, particle step size $h$, parameter step size $\eta$}, initial noise magnitude $\sigma_0$, noise magnitude step size $\Tilde{\eta}$, lower and upper bounds on $\sigma$: $lb, ub$.
        \FOR{$k=0, \cdots, N-1$}
            \STATE{Assign $\gamma_k^0 = \gamma_k$}
            \STATE{Obtain perturbed samples $x_k^i = z_k^i+\epsilon_k^i$}, where $\epsilon_k^i\sim \mathcal{N}(0,\sigma_k^2)$
            \FOR{$t=0, \cdots, N'-1$}
                \STATE{Compute 
                    \begin{equation}
                        \widehat{\mathcal{L}}(\gamma) = \frac{1}{n}\sum_{i=1}^{n} \|f_\gamma(x_k^i)-\frac{x_k^i-z_k^i}{\sigma_k^2}\|^2 
                    \end{equation}
                }
                \STATE{Update $\gamma_k^{t+1} = \gamma_k^{t} + \eta \nabla_\gamma \widehat{\mathcal{L}}(\gamma_k^t)$}
            \ENDFOR 
            \STATE{Update $\gamma_{k+1} = \gamma_k^{N'}$}
            
            \STATE{Compute $\widehat{grad}(\sigma_k)=\frac{1}{n} \sum_{i=1}^{n} [\nabla \log{\pi(x_k^i)}-f_{\gamma_{k+1}}(x_k^{i})] \cdot\epsilon_k^i$}
            \STATE{Update $\sigma_{k+1} = \textbf{clip}(\sigma_{k} + \Tilde{\eta}\cdot\widehat{grad}(\sigma_k), lb, ub)$}
            
            \STATE{Update particles $z_{k+1}^{i} = z_{k}^{i} + h(\nabla \log{\pi(x_k^i)}-f_{\gamma_{k+1}}(x_{k}^{i}))$ for $i=1, \cdots, n$}
        \ENDFOR
        \STATE{Obtain perturbed samples $x_N^i = z_N^i+\epsilon_N^i$}, where $\epsilon_N^i\sim \mathcal{N}(0,\sigma_N^2)$
        \RETURN{Particles $\{x_N^i\}_{i=1}^{n}$}
    \end{algorithmic}
\end{algorithm}

\section{Proofs}\label{appd:proof}

\subsection{Optimization Guarantees: Proof of Proposition \ref{prop:descent} and Theorem \ref{thm:convergence}} \label{appd:proof1}

\subsubsection{Main Lemmas}\label{appd:d_1_1}

\begin{lemma}\label{lem:convolution}
    Assuming the second moment of distribution $\rho$ is finite, $k_\sigma(x)=Z_0 e^{-\frac{\|x\|^2}{2\sigma^2}}$, where $Z_0$ satisfies $\int k_\sigma(x) dx=1$, then for all $\theta\in \mathbb{R}^d$, $k\in \mathbb{N}_+$, there exist constants $\tilde{C}, \tilde{M}>0$ only depending on $k$, $\sigma$ and the finite second moment, such that
    \begin{equation}
        \frac{\int \|\theta-y\|^k k_\sigma (\theta-y) d \rho(y)}{\int k_{\sigma}(\theta-y)d\rho(y)}\le \tilde{C}\|\theta\|^k+\tilde{M}
    \end{equation}
\end{lemma}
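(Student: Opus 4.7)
The plan is to recognize the ratio as the $k$-th moment
$\E_{\mu_\theta}[\|\theta-Y\|^k]$
of the probability measure $\mu_\theta(dy)\propto k_\sigma(\theta-y)\,d\rho(y)$, and to pair a lower bound on the denominator with a tail bound on the numerator in such a way that the two exponential factors cancel, leaving only polynomial ones. Throughout, denote
$A(\theta):=\int\|\theta-y\|^k k_\sigma(\theta-y)\,d\rho(y)$, $B(\theta):=\int k_\sigma(\theta-y)\,d\rho(y)$.

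The only way the hypothesis enters is through a tightness statement. Markov applied to $\int\|y\|^2\,d\rho\le m_2$ yields $R_0=\sqrt{2m_2}$ with $\rho(B_{R_0}(0))\ge 1/2$. Restricting the defining integral of $B$ to $B_{R_0}(0)$ and using $\|\theta-y\|\le\|\theta\|+R_0$ there gives the denominator lower bound
\[
B(\theta)\ge \tfrac12 Z_0\exp\!\Bigl(-\tfrac{(\|\theta\|+R_0)^2}{2\sigma^2}\Bigr).
\]
In the opposite direction, the pointwise estimate $k_\sigma(\theta-y)\le Z_0 e^{-t^2/(2\sigma^2)}$ on $\{\|\theta-y\|>t\}$ and $\rho\in\mathcal{P}$ give the sub-probability tail bound $\nu_\theta(\{y:\|\theta-y\|>t\})\le Z_0 e^{-t^2/(2\sigma^2)}$, where $\nu_\theta(dy):=k_\sigma(\theta-y)\,d\rho(y)$.

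Next I would apply the layer-cake identity $A(\theta)=\int_0^\infty kt^{k-1}\nu_\theta(\{y:\|\theta-y\|>t\})\,dt$ and split at the distinguished level $T=\|\theta\|+R_0$. On $[0,T]$, the trivial bound $\nu_\theta\le B$ contributes $T^kB$. On $(T,\infty)$, the tail bound above combined with a standard incomplete-Gamma estimate $\int_T^\infty kt^{k-1}e^{-t^2/(2\sigma^2)}\,dt\le C_{k,\sigma}(1+T^{k-2})e^{-T^2/(2\sigma^2)}$ controls the remainder. Dividing by $B$, the factor $Z_0/B\le 2\exp(T^2/(2\sigma^2))$ exactly cancels the $e^{-T^2/(2\sigma^2)}$ coming from the tail estimate, leaving
\[
\frac{A(\theta)}{B(\theta)}\le (\|\theta\|+R_0)^k + 2C_{k,\sigma}\bigl(1+(\|\theta\|+R_0)^{k-2}\bigr),
\]
after which the convexity estimates $(a+b)^k\le 2^{k-1}(a^k+b^k)$ and $\|\theta\|^{k-2}\le 1+\|\theta\|^k$ (for $k\ge 2$; the cases $k=1,2$ are immediate since $T\ge R_0>0$ removes any singularity) produce $\tilde C\|\theta\|^k+\tilde M$ with $\tilde C,\tilde M$ depending only on $k$, $\sigma$, and $m_2$.

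The principal difficulty, and the reason a naive approach fails, is that both candidate bounds are exponential. The pointwise estimate $\|u\|^k k_\sigma(u)\le M_k$ together with the lower bound on $B$ only gives $A/B\le M_k/B$ which blows up like $e^{\|\theta\|^2/(2\sigma^2)}$; and truncating $\{\|y\|>R_0\}$ contributes a residual that is likewise dwarfed by the denominator. The matched split at $T=\|\theta\|+R_0$ via layer-cake is what arranges for the two exponentials to carry matching coefficients so that cancellation leaves only the polynomial part $(\|\theta\|+R_0)^k$ plus lower-order polynomial slack.
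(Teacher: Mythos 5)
Your proof is correct and follows essentially the same route as the paper's: a ball $B_{R_0}(0)$ of positive $\rho$-mass obtained from the finite second moment gives the denominator lower bound of order $Z_0 e^{-(\|\theta\|+R_0)^2/(2\sigma^2)}$, the numerator is split at radius $T=\|\theta\|+R_0$, and the far-region exponential cancels exactly against the denominator, leaving a degree-$k$ polynomial in $\|\theta\|$. The only cosmetic difference is that you handle the far region via the layer-cake identity plus an incomplete-Gamma tail estimate, whereas the paper bounds it pointwise using the monotonicity of $r^k e^{-r^2/(2\sigma^2)}$ for $r\ge\sqrt{k}\sigma$ with the choice $R=\max\{R_0+\|\theta\|,\sqrt{k}\sigma\}$; both yield the same cancellation and constants depending only on $k$, $\sigma$, and the second moment.
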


\begin{proof}
Since $\int y^2d\rho(y)<\infty$, there exists a sufficiently large constant $R_0$ and a positive constant $C_0>0$, both not related to $\theta$, such that $\int_{\|y\|<R} y^2d\rho(y)>C_0$ for all $R\ge R_0$, then $\int_{\|y\|<R} d\rho(y)>\frac{C_0}{R^2_0} $  for all $R\ge R_0$.
Note that $\{y:\|\theta-y\|<R_0+\|\theta\|\}\supseteq \{y:\|y\|<R_0\} $ by triangular inequality, 
we have $\int_{\|\theta-y\|<R_0+\|\theta\|} y^2d\rho(y)>C_0$.

From the above analysis, the denominator
\begin{equation}\label{appendix:denominator}
\begin{aligned}
\int k_{\sigma}(\theta-y)d\rho(y)&> \int_{\|\theta-y\|<R_0+\|\theta\|} k_{\sigma}(\theta-y)d\rho(y)\\&>Z_0\int_{\|\theta-y\|<R_0+\|\theta\|} e^{-\frac{(R_0+\|\theta\|)^2}{2\sigma^2}}d\rho(y)\\&>Z_0\int_{\|y\|<R_0} e^{-\frac{(R_0+\|\theta\|)^2}{2\sigma^2}}d\rho(y)\\&> Z_0 e^{-\frac{(R_0+\|\theta\|)^2}{2\sigma^2}}\cdot \frac{C_0}{R^2_0}
\end{aligned}
\end{equation}

Then
\begin{align}
\frac{\int \|\theta-y\|^k k_{\sigma}(\theta-y)d\rho(y)}{\int k_{\sigma}(\theta-y)d\rho(y)}&=\frac{\int_{\|\theta-y\|>R} \|\theta-y\|^k k_{\sigma}(\theta-y)d\rho(y)+\int_{\|\theta-y\|<R} \|\theta-y\|^k k_{\sigma}(\theta-y)d\rho(y)}{\int k_{\sigma}(\theta-y)d\rho(y)}\\
&\le \frac{\int_{\|\theta-y\|>R} \|\theta-y\|^k k_{\sigma}(\theta-y)d\rho(y)}{\int k_{\sigma}(\theta-y)d\rho(y)}+ R^k\\
&\le \frac{Z_0 R^k e^{-\frac{R^2}{2\sigma^2}}}{\int k_{\sigma}(\theta-y)d\rho(y)}+ R^k \label{appendix:monotonicity}\\
&\le  \frac{R^k e^{-\frac{R^2}{2\sigma^2}}}{ e^{-\frac{(R_0+\|\theta\|)^2}{2\sigma^2}}}\cdot\frac{R^2_0}{C_0}+ R^k\label{appendix:monotonicity2}
\end{align}

The inequality (\ref{appendix:monotonicity}) uses monotonically decreasing property for $R^k e^{-\frac{R^2}{2\sigma^2}} $ when $R> R_1=\sqrt{k}\sigma$, and plugging in (\ref{appendix:denominator}) gives the inequality (\ref{appendix:monotonicity2}).

Choosing $R=\max\{R_0+\|\theta\|, R_1\}$, we have 
$\frac{\int \|\theta-y\|^k k_{\sigma}(\theta-y)d\rho(y)}{\int k_{\sigma}(\theta-y)d\rho(y)}\le (\frac{R^2_0}{C_0}+1)R^k\le \tilde{C}(\|\theta\|^k+\tilde{M})$, where $\tilde{C}=(\frac{R_0^2}{C_0}+1)2^{k-1}, \tilde{M}=(\frac{R_0^2}{C_0}+1)\cdot\max\{2^{k-1}R_0^k,R_1^k\}$, the proof is complete.

\end{proof}

\begin{lemma}\label{lem:linear}
Denote $\psi_t=Id+t\phi$, then for any $\rho_0\in \mathcal{P}(\R^d)$ and $\rho_t=(\psi_t)_{\#}\rho_0$, we have
    \begin{equation}
        \frac{d}{d t}\mathcal{G}(\hat{\rho_t})-\frac{d}{d t}\mathcal{G}(\hat{\rho_t})\Bigg|_{t=0}\le Lt\|\phi\|^2_{L^2(\rho_0)}
    \end{equation}
where $\mathcal{G}(\hat{\rho}_t):=\int -\log \pi(x) d\hat{\mu}_t(x)$, L is the Lipschitz constant of $\nabla \log \pi$.
\end{lemma}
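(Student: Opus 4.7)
\textbf{Proof proposal for Lemma \ref{lem:linear}.} The plan is to exploit the convolutional structure of $\hat{\rho}_t$ so the derivative in $t$ can be pulled inside an expectation over the Gaussian smoothing variable, at which point the bound reduces to the Lipschitz continuity of $\nabla\log\pi$. First I would rewrite $\hat{\rho}_t$ explicitly using the pushforward definition: for any $x$,
\begin{equation*}
\hat{\rho}_t(x) \;=\; \int k_\sigma(x-z)\,d\rho_t(z) \;=\; \int k_\sigma\bigl(x - z_0 - t\phi(z_0)\bigr)\,d\rho_0(z_0),
\end{equation*}
and therefore, by Fubini and the substitution $\epsilon = x - z_0 - t\phi(z_0)$,
\begin{equation*}
\mathcal{G}(\hat{\rho}_t) \;=\; -\int \log\pi(x)\,\hat{\rho}_t(x)\,dx \;=\; -\int \mathbb{E}_{\epsilon\sim \mathcal{N}(0,\sigma^2 I)}\bigl[\log\pi(z_0 + t\phi(z_0) + \epsilon)\bigr]\,d\rho_0(z_0).
\end{equation*}

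Next I would differentiate under the integral sign. Since the inner expectation is a convolution of $\log\pi$ against a Gaussian, the resulting integrand is smooth in $t$, so
\begin{equation*}
\frac{d}{dt}\mathcal{G}(\hat{\rho}_t) \;=\; -\int \mathbb{E}_{\epsilon}\bigl[\nabla\log\pi(z_0 + t\phi(z_0) + \epsilon)\cdot \phi(z_0)\bigr]\,d\rho_0(z_0),
\end{equation*}
and the corresponding value at $t=0$ is obtained by setting $t=0$ inside the gradient. Subtracting,
\begin{equation*}
\frac{d}{dt}\mathcal{G}(\hat{\rho}_t) - \frac{d}{dt}\mathcal{G}(\hat{\rho}_t)\Bigl|_{t=0} \;=\; -\int \mathbb{E}_{\epsilon}\Bigl[\bigl(\nabla\log\pi(z_0+t\phi(z_0)+\epsilon)-\nabla\log\pi(z_0+\epsilon)\bigr)\cdot \phi(z_0)\Bigr]\,d\rho_0(z_0).
\end{equation*}

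Finally I would apply the $L$-Lipschitz property of $\nabla\log\pi$ (Assumption \ref{assp:lipscore}), which gives $\|\nabla\log\pi(z_0+t\phi(z_0)+\epsilon)-\nabla\log\pi(z_0+\epsilon)\|\le L t \|\phi(z_0)\|$, and then Cauchy--Schwarz inside the integrand yields the pointwise bound $Lt\|\phi(z_0)\|^2$, uniformly in $\epsilon$. Integrating against $\rho_0$ produces exactly $Lt\|\phi\|_{L^2(\rho_0)}^2$, which is the claim.

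The only subtlety is justifying differentiation under the integral: this requires a uniform local integrability estimate for $\nabla\log\pi(z_0+t\phi(z_0)+\epsilon)\cdot\phi(z_0)$ in $(z_0,\epsilon)$. Lipschitzness of $\nabla\log\pi$ gives linear growth, so a dominating function exists once $\rho_0$ and $\phi$ have finite second moments, which is implicit from the setup via Lemma \ref{sec:lem_convolution} and the second-moment Assumption \ref{assp:moment}. This is the only part I expect to need extra care; the remainder is mechanical.
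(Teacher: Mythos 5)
Your proposal is correct and follows essentially the same route as the paper: both reduce the difference of derivatives to the term $\int\E\big[(\nabla\log\pi(z+t\phi(z)+\epsilon)-\nabla\log\pi(z+\epsilon))\cdot\phi(z)\big]d\rho_0(z)$ and bound it by $Lt\|\phi\|^2_{L^2(\rho_0)}$ via Lipschitzness of $\nabla\log\pi$. The only cosmetic difference is that the paper starts from the velocity-field expression for $\tfrac{d}{dt}\mathcal{G}(\hat{\rho}_t)$ obtained from the continuity equation and then changes variables from $\rho_t$ to $\rho_0$, whereas you re-derive the same formula directly by writing the perturbed sample as $z_0+t\phi(z_0)+\epsilon$ and differentiating under the integral, with the domination argument you mention.
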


\begin{proof}
Analogous to (\ref{term:result}),
\begin{equation}
        \frac{d}{d t}\mathcal{G}(\hat{\rho_t})=\langle \int -\nabla\log\pi(x) q_{\sigma}(x|z)dx, \phi(\psi_t^{-1}(z)) \rangle_{L^2(\rho_t)}
    \end{equation}

Then 
\begin{equation}
\begin{aligned}
    &\frac{d}{d t}\mathcal{G}(\hat{\rho_t})-\frac{d}{d t}\mathcal{G}(\hat{\rho_t})\Bigg|_{t=0}\\&=\langle \int -\nabla\log\pi(x) q_{\sigma}(x|z)dx, \phi(\psi_t^{-1}(z)) \rangle_{L^2(\rho_t)}-\langle \int -\nabla\log\pi(x) q_{\sigma}(x|z)dx, \phi(z) \rangle_{L^2(\rho_0)}\\
    &=\langle [\int -\nabla\log\pi(x) q_{\sigma}(x|\psi_t(z))dx-\int -\nabla\log\pi(x) q_{\sigma}(x|z)dx], \phi(z) \rangle_{L^2(\rho_0)}\\
    &=\langle \int [\nabla\log\pi(x)-\nabla\log\pi(x+t\phi(z))] q_{\sigma}(x|z)dx , \phi(z) \rangle_{L^2(\rho_0)}\\
    &\le \langle L\|t\phi(z)\|,\|\phi(z)\|\rangle_{L^2(\rho_0)}=Lt\|\phi\|^2_{L^2(\rho_0)}
\end{aligned}        
\end{equation}

\end{proof}

\begin{lemma}\label{lem:entropy}
    Denote $\psi_t=Id+t\phi$, where $\|\phi(z)\|\le A\|z\|+B$, then for any fifth absolute moment finite $\rho_0\in \mathcal{P}(\R^d)$ and $\rho_t=(\psi_t)_{\#}\rho_0$, $t<\frac{1}{A}$, there exists constants $C, M$ such that
    \begin{equation}
        \frac{d}{d t}\mathcal{E}(\hat{\rho_t})-\frac{d}{d t}\mathcal{E}(\hat{\rho_t})\Bigg|_{t=0}\le t [C m_4(\rho_0)+M]+ t^2 [C m_5(\rho_0)+M]
    \end{equation}
where $\mathcal{E}(\hat{\mu}_t):=\int \log \hat{\mu}_t(x) d\hat{\mu}_t(x)$, $m_4(\rho_0):=\int \|z\|^4 d\rho_0$ and $m_5(\rho_0):=\int \|z\|^5 d\rho_0$ are the fourth and fifth absolute moments of $\rho_0$.
    
\end{lemma}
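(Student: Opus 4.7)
The plan is to express $\frac{d}{dt}\mathcal{E}(\hat{\rho}_t)$ explicitly in Lagrangian coordinates and then perform a first-order Taylor expansion of the integrand in $t$ around $t=0$, with an integral remainder. The first-order term will carry the $Cm_4$ contribution, while the quadratic remainder will produce the $Cm_5$ one.

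Since $\mathcal{E}(\hat{\rho}_t)=\int \hat{\rho}_t\log\hat{\rho}_t\,dx$ and $\int\dot{\hat{\rho}}_t\,dx=0$, the continuity equation $\dot{\rho}_t=-\nabla\!\cdot(\rho_t w_t)$ with $w_t=\phi\circ\psi_t^{-1}$, combined with the skew-symmetry $\nabla_x k_\sigma(x-z)=-\nabla_z k_\sigma(x-z)$ and integration by parts in $x$ (mirroring the derivations in Theorem \ref{mol_wgf} and Lemma \ref{lem:linear}), gives
\[
\frac{d}{dt}\mathcal{E}(\hat{\rho}_t)=\int F_t(z)\cdot w_t(z)\,d\rho_t(z)=\int F_t(\psi_t(z_0))\cdot \phi(z_0)\,d\rho_0(z_0),
\]
where $F_t(z):=\int \nabla\log\hat{\rho}_t(x)\,k_\sigma(x-z)\,dx$ and the last equality uses the push-forward $\rho_t=(\psi_t)_\#\rho_0$. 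Hence the quantity to be bounded is $\int[F_t(\psi_t(z_0))-F_0(z_0)]\cdot\phi(z_0)\,d\rho_0(z_0)$. Setting $H(s,z_0):=F_s(\psi_s(z_0))$ and applying Taylor's theorem in $s$ at $s=0$ with integral remainder,
\[
H(t,z_0)-H(0,z_0)=t\,\partial_s H(0,z_0)+\int_0^t(t-s)\,\partial_s^2 H(s,z_0)\,ds,
\]
where $\partial_s H=(\partial_s F_s)(\psi_s)+\nabla F_s(\psi_s)\cdot\phi$. After multiplication by $\phi(z_0)$ and integration in $d\rho_0$, the first summand produces a linear-in-$t$ contribution and the remainder a quadratic-in-$t$ contribution of the claimed form.

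The main obstacle is to establish polynomial-growth bounds on $F_s$, its spatial gradient $\nabla F_s$, its time derivative $\partial_s F_s$, and the second-order analogues appearing in $\partial_s^2 H$. The key tool is Lemma \ref{lem:convolution}: from the identity $\nabla\log\hat{\rho}_s(x)=-\sigma^{-2}\int(x-y)k_\sigma(x-y)\,d\rho_s(y)/\hat{\rho}_s(x)$, Lemma \ref{lem:convolution} with $k=1$ gives linear-in-$x$ growth of $\nabla\log\hat{\rho}_s$, while differentiating under the integral in $z$ (and in $s$, noting that $\dot{\hat{\rho}}_s/\hat{\rho}_s$ also reduces via the continuity equation to a ratio controllable by Lemma \ref{lem:convolution} with $k\le 2$ once $\|w_s(y)\|\le A\|y\|+B$ is used) yields polynomial growth bounds of low degree in $\|z\|$ for each required derivative. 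Combined with $\|\phi(z_0)\|\le A\|z_0\|+B$ and the uniform-in-$s$ estimate $\|\psi_s(z_0)\|\le(1+sA)\|z_0\|+sB$ (valid since $s\le t<1/A$), together with a Cauchy--Schwarz or AM--GM step to convert a product of the form $m_p\,m_q$ into $m_{p\vee q}$, the integrand of the linear-in-$t$ term is dominated by a polynomial of degree at most four in $\|z_0\|$, and the integrand of the quadratic remainder by one of degree at most five, which integrate against $d\rho_0$ to yield the claimed $Cm_4+M$ and $Cm_5+M$ contributions. Carefully tracking the $\sigma$-, $A$-, and $B$-dependent constants and justifying differentiation under the integral at each step will be the most delicate parts of the argument.
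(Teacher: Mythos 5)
Your plan is correct and rests on the same essential mechanism as the paper's proof: both start from the identity $\frac{d}{dt}\mathcal{E}(\hat\rho_t)=\int F_t(\psi_t(z_0))\cdot\phi(z_0)\,d\rho_0(z_0)$ with $F_t(z)=\int\nabla\log\hat\rho_t(x)\,k_\sigma(x-z)\,dx$, both expand the difference of derivatives to first order in $t$ with a quadratic correction, and both control every term through the polynomial-growth estimates of Lemma \ref{lem:convolution}, ending with degree-$4$ and degree-$5$ integrands in $\|z_0\|$ that integrate to $C m_4+M$ and $C m_5+M$. Where you differ is the organization of the expansion: the paper writes the integrand as the product $A_t=q_\sigma(x|\psi_t(z))\cdot\frac{\int y\,q_\sigma(y|x)\,d\rho_t(y)}{\int q_\sigma(y|x)\,d\rho_t(y)}$ and applies the Lagrange mean value theorem to each factor separately, so the $t^2$ contribution appears as an explicit cross term and only first-order time derivatives (the kernel derivative and $\partial_t$ of the posterior-mean ratio, bounded by $C\|x\|^3+M$) are ever needed; you instead perform a genuine second-order Taylor expansion with integral remainder of the composite $s\mapsto F_s(\psi_s(z_0))$, which is structurally cleaner but obliges you to bound $\partial_s^2H$, hence also $\nabla^2F_s$, $\partial_s\nabla F_s$ and $\partial_s^2F_s$ --- the last requiring control of $\partial_s^2\hat\rho_s/\hat\rho_s$ (degree-$4$ growth, obtainable from the pushforward representation of $\partial_s\hat\rho_s$ and Lemma \ref{lem:convolution}), a bound the paper never has to produce. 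The degree count in your route does close: the worst terms in $\partial_s^2H$ grow at most like $\|z_0\|^4$, so after the outer factor $\phi(z_0)$ you remain at degree $5$, matching the statement; as in the paper, you must also make the Lemma \ref{lem:convolution} constants and the moment bounds for $\rho_s$ uniform over $s\in[0,t]$, $t<1/A$. One small correction: if you bound everything pointwise in $\|z_0\|$ before integrating, no genuine products of moments $m_p m_q$ arise (lower-order moments are absorbed via inequalities of the type $\|z\|^a\le\|z\|^4+1$, exactly as the paper does), so the advertised Cauchy--Schwarz/AM--GM conversion of $m_p m_q$ into $m_{p\vee q}$ is both unnecessary and, as literally stated, not a valid inequality.
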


\begin{proof}

Analogous to (\ref{term:result}),
\begin{equation}
\begin{aligned}
    \frac{d}{d t}\mathcal{E}(\hat{\rho_t})&=\langle \int \nabla\log\hat{\rho}_t(x) q_{\sigma}(x|z)dx, \phi(\psi_t^{-1}(z)) \rangle_{L^2(\rho_t)}\\
    &=\langle \int q_{\sigma}(x|z)  \frac{\int \nabla_x q_{\sigma}(y|x) d\rho_t(y) }{\int q_{\sigma}(y|x) d\rho_t(y)}dx, \phi(\psi_t^{-1}(z)) \rangle_{L^2(\rho_t)}
\end{aligned}        
    \end{equation}

Then 
\begin{equation}\label{entropy_diff}
\begin{aligned}
    &\frac{d}{d t}\mathcal{E}(\hat{\rho_t})-\frac{d}{d t}\mathcal{E}(\hat{\rho_t})\Bigg|_{t=0}\\=&\langle \int q_{\sigma}(x|z)  \frac{\int \nabla_x q_{\sigma}(y|x) d\rho_t(y) }{\int q_{\sigma}(y|x) d\rho_t(y)}dx, \phi(\psi_t^{-1}(z)) \rangle_{L^2(\rho_t)}-\langle \int q_{\sigma}(x|z)  \frac{\int \nabla_x q_{\sigma}(y|x) d\rho_0(y) }{\int q_{\sigma}(y|x) d\rho_0(y)}dx, \phi(z) \rangle_{L^2(\rho_0)}\\
    =&\langle [\int q_{\sigma}(x|\psi_t(z))  \frac{\int \nabla_x q_{\sigma}(y|x) d\rho_t(y) }{\int q_{\sigma}(y|x) d\rho_t(y)}dx-\int q_{\sigma}(x|z)  \frac{\int \nabla_x q_{\sigma}(y|x) d\rho_0(y) }{\int q_{\sigma}(y|x) d\rho_0(y)}dx], \phi(z) \rangle_{L^2(\rho_0)}\\
    =&\frac{1}{\sigma^2}\langle \int [q_{\sigma}(x|\psi_t(z))  \frac{\int y q_{\sigma}(y|x) d\rho_t(y) }{\int q_{\sigma}(y|x) d\rho_t(y)}- q_{\sigma}(x|z)  \frac{\int y q_{\sigma}(y|x) d\rho_0(y) }{\int q_{\sigma}(y|x) d\rho_0(y)}]dx, \phi(z) \rangle_{L^2(\rho_0)}+\frac{t}{\sigma^2}\|\phi\|^2_{L^2(\rho_0)}
\end{aligned}        
\end{equation}

Denote $A_t:=q_{\sigma}(x|\psi_t(z))  \frac{\int y q_{\sigma}(y|x) d\rho_t(y) }{\int q_{\sigma}(y|x) d\rho_t(y)}$, then $A_0= q_{\sigma}(x|z)  \frac{\int y q_{\sigma}(y|x) d\rho_0(y) }{\int q_{\sigma}(y|x) d\rho_0(y)}$.

For simplicity, we denote $k_\sigma(w)=Z_0 e^{-\frac{\|w\|^2}{2\sigma^2}}$, where $Z_0$ satisfies $\int k_\sigma(w)dw=1$, then $q_{\sigma}(x|z)=k_{\sigma}(x-z)$.

By Lagrangian mean value theorem, there exists $t',t''\in [0,t]$, such that:
\begin{equation}\label{mean:1}
q_{\sigma}(x|\psi_t(z)) = Z_0 e^{-\frac{\|x-(z+t\phi(z))\|^2}{2\sigma^2}}=Z_0[e^{-\frac{\|x-z\|^2}{2\sigma^2}}+t\cdot e^{-\frac{\|x-(z+t'\phi(z))\|^2}{2\sigma^2}}\frac{(x-(z+t'\phi(z)))}{\sigma^2}\phi(z)]
\end{equation}

and 
\begin{equation}\label{mean:2}
\frac{\int y k_{\epsilon}(x-y)d\rho_t(y)}{\int k_{\sigma}(x-y)d\rho_t(y)}=\frac{\int y k_{\sigma}(x-y)d\rho(y)}{\int k_{\sigma}(x-y)d\rho(y)}+t\cdot \frac{\partial}{\partial t}\Big|_{t=t''}(\frac{\int y k_{\sigma}(x-y)d\rho_t(y)}{\int k_{\sigma}(x-y)d\rho_t(y)})
\end{equation}

Using continuity equation $\frac{d\rho_t(y)}{d t}=-\nabla\cdot(\rho_t(y) \phi(\psi_t^{-1}(y))) $ for $y\sim \rho_t$, we have:
\begin{align}
    &\frac{\partial}{\partial t} \frac{\int y k_{\sigma}(x-y)d\rho_t(y)}{\int k_{\sigma}(x-y)d\rho_t(y)}\\=&\frac{\int y k_{\sigma}(x-y)\frac{d\rho_t(y)}{d t}\int k_{\sigma}(x-y)d\rho_t(y)- \int y k_{\sigma}(x-y)d\rho_t(y)\int k_{\sigma}(x-y)\frac{d\rho_t(y)}{d t}}{(\int k_{\sigma}(x-y)d\rho_t(y))^2}\\
    =&\frac{\int \phi(\psi_t^{-1}(y))\nabla( y k_{\sigma}(x-y))d\rho_t(y) }{\int k_{\sigma}(x-y)d\rho_t(y)}- \frac{\int y k_{\sigma}(x-y)d\rho_t(y) \int \phi(\psi_t^{-1}(y))\nabla k_{\sigma}(x-y)d\rho_t(y) }{(\int k_{\sigma}(x-y)d\rho_t(y))^2}\\
    =&\frac{\int \phi(\psi_t^{-1}(y))(1+y\frac{x-y}{\sigma^2}) k_{\sigma}(x-y)d\rho_t(y)}{\int k_{\sigma}(x-y)d\rho_t(y)}-\frac{\int y k_{\sigma}(x-y)d\rho_t(y)}{\int k_{\sigma}(x-y)d\rho_t(y)}\frac{\int \phi(\psi_t^{-1}(y))\frac{x-y}{\sigma^2} k_{\sigma}(x-y)d\rho_t(y) }{\int k_{\sigma}(x-y)d\rho_t(y)}
\end{align}

Since $\psi_t(y)=y+t\phi(y)$ and $\|\phi(y)\|\le A\|y\|+B$, then $(1-tA)\|y\|-tB\le \|y\|-t\|\phi(y)\|\le \|\psi_t(y)\|$.

For $t<\frac{1}{A}$, we have $\|y\|\le \frac{1}{1-tA}\|\psi_t(y)\|+\frac{tB}{1-tA}$, which means $\|\psi_t^{-1}(y)\|\le \frac{1}{1-tA}\|y\|+\frac{tB}{1-tA}$.

This means that $\|\phi(\psi_t^{-1}(y))\|\le \frac{A}{1-tA}\|y\|+\frac{B}{1-tA}$.

By triangular inequality, we have 
\begin{equation}\label{appendix:C1_const}
\begin{aligned}
    &\| \phi(\psi_t^{-1}(y))(1+y\frac{x-y}{\sigma^2})\| \\ \le &\frac{A}{(1-tA)\sigma^2}\|x-y\|^3+(2\frac{A\|x\|}{(1-tA)\sigma^2}+\frac{B}{1-tA})\|x-y\|^2\\&+(\frac{\|x\|}{\sigma^2}(\frac{A}{1-tA}\|x\|+\frac{B}{1-tA})+\frac{A}{1-tA})\|x-y\|+(\frac{A}{1-tA}\|x\|+\frac{B}{1-tA})
\end{aligned}    
\end{equation}

The right hand side is a polynomial of $(\|x\|,\|x-y\|)$, the order of which is at most 3. 
Using basic inequalities $\|x\|^3+\frac{4}{27}d\ge \|x\|^2, \|x\|^3+\frac{8}{27}d\ge \|x\|$ and Lemma \ref{lem:convolution}, plugging in (\ref{appendix:C1_const}), we obtain:
\begin{equation}
    \| \frac{\int \phi(\psi_t^{-1}(y))(1+y\frac{x-y}{\sigma^2}) k_{\sigma}(x-y)d\rho_t(y)}{\int k_{\sigma}(x-y)d\rho_t(y)}\| \le C_1\|x\|^3+M_1
\end{equation}

where $C_1, M_1>0$ are constants that are not related to $x$.

Analogously, 
\begin{equation}
    \|\frac{\int y k_{\sigma}(x-y)d\rho_t(y)}{\int k_{\sigma}(x-y)d\rho_t(y)}\frac{\int \phi(\psi_t^{-1}(y))\frac{x-y}{\sigma^2} k_{\sigma}(x-y)d\rho_t(y) }{\int k_{\sigma}(x-y)d\rho_t(y)}\| \le C_2\|x\|^3+M_2
\end{equation}

To conclude, for $t<\frac{1}{A}$, we obtain
\begin{equation}
    \|\frac{\partial}{\partial t}\Big|_{t=t''}(\frac{\int y k_{\sigma}(x-y)d\rho_t(y)}{\int k_{\sigma}(x-y)d\rho_t(y)})\|\le (C_1+C_2)\|x\|^3+(M_1+M_2):=C_3\|x\|^3+M_3
\end{equation}

From (\ref{mean:1}) and (\ref{mean:2}) we have
\begin{equation}
\begin{aligned}
    &\frac{1}{Z_0}\|A_t-A_0\|\\=&\|t [ \frac{\int y k_{\sigma}(x-y)d\rho(y)}{\int k_{\sigma}(x-y)d\rho(y)}\cdot e^{-\frac{\|x-(z+t'\phi(z))\|^2}{2\sigma^2}}\frac{(x-(z+t'\phi(z)))}{\sigma^2}\phi(z)+ e^{-\frac{\|x-z\|^2}{2\sigma^2}}\cdot \frac{\partial}{\partial t}\Big|_{t=t''}(\frac{\int y k_{\sigma}(x-y)d\rho_t(y)}{\int k_{\sigma}(x-y)d\rho_t(y)})]
    \\&+t^2 e^{-\frac{\|x-(z+t'\phi(z))\|^2}{2\sigma^2}}\frac{(x-(z+t'\phi(z)))}{\sigma^2}\phi(z)\cdot\frac{\partial}{\partial t}\Big|_{t=t''}(\frac{\int y k_{\sigma}(x-y)d\rho_t(y)}{\int k_{\sigma}(x-y)d\rho_t(y)})\|
    \\\le & t [ \|\phi(z)\| (C_4\|x\|+M_4)\|\frac{(x-(z+t'\phi(z)))}{\sigma^2}\|e^{-\frac{\|x-(z+t'\phi(z))\|^2}{2\sigma^2}}+e^{-\frac{\|x-z\|^2}{2\sigma^2}}\cdot(C_3\|x\|^3+M_3)]\\
    &+t^2  \|\phi(z)\| \|\frac{(x-(z+t'\phi(z)))}{\sigma^2}\|e^{-\frac{\|x-(z+t'\phi(z))\|^2}{2\sigma^2}}\cdot (C_3\|x\|^3+M_3)
\end{aligned}    
\end{equation}

The constants $C_4, M_4$ come from using Lemma \ref{lem:convolution} on the term $\frac{\int y k_{\sigma}(x-y)d\rho(y)}{\int k_{\sigma}(x-y)d\rho(y)}$.

Then
\begin{equation}
\begin{aligned}
    &\int \|q_{\sigma}(x|\psi_t(z))  \frac{\int y q_{\sigma}(y|x) d\rho_t(y) }{\int q_{\sigma}(y|x) d\rho_t(y)}- q_{\sigma}(x|z)  \frac{\int y q_{\sigma}(y|x) d\rho_0(y) }{\int q_{\sigma}(y|x) d\rho_0(y)}\|dx\\
    =&\int \|A_t-A_0\| dx\\
    \le &t Z_0\int [ \|\phi(z)\| (C_4\|x\|+M_4)\|\frac{(x-(z+t'\phi(z)))}{\sigma^2}\|e^{-\frac{\|x-(z+t'\phi(z))\|^2}{2\sigma^2}}+e^{-\frac{\|x-z\|^2}{2\sigma^2}}\cdot(C_3\|x\|^3+M_3)] dx\\
    &+t^2 Z_0 \int  \|\phi(z)\| \|\frac{(x-(z+t'\phi(z)))}{\sigma^2}\|e^{-\frac{\|x-(z+t'\phi(z))\|^2}{2\sigma^2}}\cdot (C_3\|x\|^3+M_3) dx\\
    \le& t[(C_5\|z+t'\phi(z)\|+M_5)\|\phi(z)\|+C_6\|z\|^3+M_6]+t^2(C_7\|z+t'\phi(z)\|^3+M_7)\|\phi(z)\|
\end{aligned}    
\end{equation}

Plugging in $t'\le t<\frac{1}{A}$ and $\|\phi(z)\|\le A\|z\|+B $, we obtain

\begin{equation}
\int \|q_{\sigma}(x|\psi_t(z))  \frac{\int y q_{\sigma}(y|x) d\rho_t(y) }{\int q_{\sigma}(y|x) d\rho_t(y)}- q_{\sigma}(x|z)  \frac{\int y q_{\sigma}(y|x) d\rho_0(y) }{\int q_{\sigma}(y|x) d\rho_0(y)}\|dx\le t(C_8\|z\|^3+M_8)+t^2(C_9\|z\|^4+M_9)
\end{equation}

Using this result and  $\|\phi(z)\|\le A\|z\|+B $ in (\ref{entropy_diff}), we finally obtain:
\begin{equation}
\begin{aligned}
&\frac{d}{d t}\mathcal{E}(\hat{\rho_t})-\frac{d}{d t}\mathcal{E}(\hat{\rho_t})\Bigg|_{t=0}\\
\le &\frac{1}{\sigma^2}\int [t(C_8\|z\|^3+M_8)+t^2(C_9\|z\|^4+M_9)]\|\phi(z)\|d\rho_0(z)+\frac{t}{\sigma^2}\|\phi\|^2_{L^2(\rho_0)}\\
\le &t [C m_4(\rho_0)+M]+ t^2 [C m_5(\rho_0)+M]
\end{aligned}
\end{equation}

The proof is complete.

\end{proof}

\subsubsection{Proof of Proposition \ref{prop:descent} and Theorem \ref{thm:convergence}}\label{append:d_1_2}

\begin{proposition}
 Suppose Assumption \ref{assp:approx}, \ref{assp:lipscore}, \ref{assp:moment}, \ref{assp:bdvel}
hold. Then the following
inequality holds for $h<\frac{1}{A}$:
\[
\hat{\mathcal{F}}(\mu_{(k+1)h})-\hat{\mathcal{F}}(\mu_{k h})\le -\frac{1}{2}h\|\nabla_{W_2} \hat{\mathcal{F}}(\mu_{k h})\|^2_{L^2(\mu_{k h})}+\frac{1}{2}h\epsilon_k+h^2 [C m_{4}+M]+h^3[C m_{5}+M], 
\]
where $C, M$ are constants that depend on $A, B, \sigma, m_5$ and do not depend on $k$ or $h$.
\end{proposition}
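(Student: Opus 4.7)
}
The plan is to run a Taylor-style expansion of $\hat{\mathcal{F}}$ along the continuous interpolation $\mu_{(k+t)h}=(\psi_t)_{\#}\mu_{kh}$ with $\psi_t=\mathrm{Id}+th\,v_k$, and then convert the leading-order term into a descent in $\|\nabla_{W_2}\hat{\mathcal{F}}(\mu_{kh})\|^2$. Concretely, write
\[
\hat{\mathcal{F}}(\mu_{(k+1)h})-\hat{\mathcal{F}}(\mu_{kh}) \;=\; \int_{0}^{1}\frac{d}{dt}\hat{\mathcal{F}}(\mu_{(k+t)h})\,dt
\;=\; \left.\frac{d}{dt}\hat{\mathcal{F}}(\mu_{(k+t)h})\right|_{t=0} + \int_{0}^{1}\!\!\Big[\tfrac{d}{dt}\hat{\mathcal{F}}(\mu_{(k+t)h})-\tfrac{d}{dt}\hat{\mathcal{F}}(\mu_{(k+t)h})\big|_{t=0}\Big]dt,
\]
and then decompose $\hat{\mathcal{F}}(\hat{\mu})=\mathcal{G}(\hat{\mu})+\mathcal{E}(\hat{\mu})$ into the potential part $\mathcal{G}(\hat{\mu})=\int-\log\pi\,d\hat{\mu}$ and the entropy part $\mathcal{E}(\hat{\mu})=\int\log\hat{\mu}\,d\hat{\mu}$, so that the two auxiliary lemmas \ref{lem:linear} and \ref{lem:entropy} (applied with base distribution $\rho_0=\mu_{kh}$ and displacement $\phi=hv_k$) handle each piece of the deviation integral separately.

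For the leading-order term, the computation in the proof of Theorem \ref{mol_wgf} (specialized to $\psi_t=\mathrm{Id}+thv_k$) gives
\[
\left.\frac{d}{dt}\hat{\mathcal{F}}(\mu_{(k+t)h})\right|_{t=0}
=\big\langle \nabla_{W_2}\hat{\mathcal{F}}(\mu_{kh}),\,hv_k\big\rangle_{L^2(\mu_{kh})}
=-h\langle v_k^*,v_k\rangle_{L^2(\mu_{kh})}.
\]
I would then apply the polarization identity $-\langle v_k^*,v_k\rangle=-\tfrac12\|v_k^*\|^2-\tfrac12\|v_k\|^2+\tfrac12\|v_k-v_k^*\|^2$, drop the negative $-\tfrac h2\|v_k\|^2$ term, and invoke Assumption \ref{assp:approx} (in the form $\E_{\mu_{kh}}\|v_k-v_k^*\|^2\le\varepsilon_k$, already derived right after stating the assumption) to obtain exactly the target descent bound $-\tfrac h2\|\nabla_{W_2}\hat{\mathcal{F}}(\mu_{kh})\|^2_{L^2(\mu_{kh})}+\tfrac h2\varepsilon_k$.

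For the deviation integral, Lemma \ref{lem:linear} with $\phi=hv_k$ yields a contribution bounded by $\int_0^1 Lt\,h^2\|v_k\|^2_{L^2(\mu_{kh})}\,dt\lesssim h^2$, where $\|v_k\|^2_{L^2(\mu_{kh})}$ is controlled by Assumptions \ref{assp:bdvel} and \ref{assp:moment}. Lemma \ref{lem:entropy} with $\phi=hv_k$ requires $h<1/A$ (exactly the step size condition in the proposition) and bounds the entropy deviation by $t[C'(h)m_4+M'(h)]+t^2[C'(h)m_5+M'(h)]$; a careful inspection of its proof shows that each displacement $\phi$ appearing in equations \eqref{entropy_diff}, \eqref{mean:1}, \eqref{mean:2} contributes one power of $h$, so the coefficient of $t^j$ carries $h^{j+1}$, integrating over $t\in[0,1]$ to $h^2[Cm_4+M]+h^3[Cm_5+M]$. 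Summing the two lemmas with the leading-order term completes the desired inequality.

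The main obstacle is the bookkeeping in the previous paragraph: Lemma \ref{lem:entropy} is stated with generic $A,B$, and to obtain the claimed $h^2,h^3$ split one has to re-examine its proof and verify that every summand in the expansion of $A_t-A_0$ (equivalently, every term in the expansions \eqref{mean:1}--\eqref{mean:2}) carries at least one explicit factor of $\phi$, and hence of $h$, once paired with the outer $\phi(z)$ factor in \eqref{entropy_diff}. This relies on the fact that both the Taylor remainder of the Gaussian kernel in $z$ and the $\rho_t$-derivative in \eqref{mean:2} (driven by the continuity equation with velocity $\phi$) generate $\phi$-linear corrections. Once this structure is isolated, Lemma \ref{sec:lem_convolution} (via the uniform polynomial-growth bounds on conditional moments of the Gaussian kernel against $\mu_{kh}$) allows the $\|x\|$-polynomials in the integrand to be bounded by polynomials in $\|z\|$, which are then integrated against $\mu_{kh}$ using Assumption \ref{assp:moment}, yielding the finite constants $C,M$ that depend only on $A,B,\sigma,m_5$, uniformly in $k,h$.
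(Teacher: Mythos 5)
Your proposal is correct and follows essentially the same route as the paper's proof: the same split $\hat{\mathcal{F}}=\mathcal{G}+\mathcal{E}$, the same first-order expansion with Assumption \ref{assp:approx} (via Young/polarization) yielding the $-\tfrac{h}{2}\|\nabla_{W_2}\hat{\mathcal{F}}\|^2+\tfrac{h}{2}\epsilon_k$ term, and Lemmas \ref{lem:linear} and \ref{lem:entropy} controlling the deviation integral with moments bounded by Assumptions \ref{assp:moment} and \ref{assp:bdvel}. The only difference is cosmetic: the paper interpolates with $\psi_t=\mathrm{Id}+t v_k$ over $t\in[0,h]$, so the $h^2$ and $h^3$ factors come directly from integrating $t$ and $t^2$ over $[0,h]$, avoiding the extra bookkeeping you perform to trace powers of $h$ through the lemma's constants when taking $\phi=hv_k$ on the unit time interval.
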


\begin{proof}

The KL divergence can be split into two terms:
\begin{equation}
    \hat{\mathcal{F}}(\mu_t):=\int \log \frac{\hat{\mu}_t(x)}{\pi(x)} d\hat{\mu}_t(x)=\int -\log \pi(x) d\hat{\mu}_t(x)+\int \log \hat{\mu}_t(x) d\hat{\mu}_t(x):=\mathcal{G}(\hat{\mu}_t)+\mathcal{E}(\hat{\mu}_t)
\end{equation}

Since
\begin{equation}\label{propproof:comp}
    \hat{\mathcal{F}}(\mu_{(k+1)h})-\hat{\mathcal{F}}(\mu_{k h})= h \frac{d}{d t}\Bigg |_{t=0}\mathcal{F}(\hat{\mu}_{k h+t})+\int_0^{h}\left[  \frac{d}{d t}\mathcal{F}(\hat{\mu}_{k h+t})- \frac{d}{d t}\Bigg |_{t=0}\mathcal{F}(\hat{\mu}_{k h+t})\right]dt
\end{equation}

and
\begin{equation}
\frac{d}{d t}\Bigg |_{t=0}\mathcal{F}(\hat{\mu}_{k h+t})=\langle \nabla_{W_2} \hat{\mathcal{F}}(\mu_{k h}), v_{kh}\rangle_{\mu_{kh}}\le -\frac{1}{2}\|\nabla_{W_2} \hat{\mathcal{F}}(\mu_{k h})\|^2_{L^2(\mu_{k h})} +\frac{1}{2}\epsilon_k
\end{equation}

Setting $\phi=v_{kh}, \rho_0=\mu_{k h}$ and $\rho_t=\mu_{k h+t}$ in Lemma \ref{lem:linear}, we have
\begin{equation}
 \frac{d}{d t}\mathcal{G}(\hat{\mu}_{k h+t})- \frac{d}{d t}\Bigg |_{t=0}\mathcal{G}(\hat{\mu}_{k h+t})\le Lt\|\phi\|^2_{L^2(\hat{\mu}_{k h})}\le t(C_1 m_2 +M_1)
\end{equation}

where $m_2$ is the upper bound of the second moment of $\mu_{k h}$.

By Lemma \ref{lem:entropy}, we have
\begin{equation}
 \frac{d}{d t}\mathcal{E}(\hat{\mu}_{k h+t})- \frac{d}{d t}\Bigg |_{t=0}\mathcal{E}(\hat{\mu}_{k h+t})\le t [C_2 m_4 +M_2]+ t^2 [C_2 m_5 +M_2]
\end{equation}

where $m_4$ and $m_5$ are the upper bounds of the fourth and fifth absolute moments of $\mu_{k h}$.

Note that $\|z\|^4+\frac{1}{4}d\ge \|z\|^2$, we have $m_4+\frac{1}{4}d\ge m_2$.

Then 
\begin{equation}
\begin{aligned}
    &\int_0^{h}\left[  \frac{d}{d t}\mathcal{F}(\hat{\mu}_{k h+t})- \frac{d}{d t}\Bigg |_{t=0}\mathcal{F}(\hat{\mu}_{k h+t})\right]dt\\
    =& \int_0^{h}\left[  \frac{d}{d t}\mathcal{G}(\hat{\mu}_{k h+t})- \frac{d}{d t}\Bigg |_{t=0}\mathcal{G}(\hat{\mu}_{k h+t})+ \frac{d}{d t}\mathcal{E}(\hat{\mu}_{k h+t})- \frac{d}{d t}\Bigg |_{t=0}\mathcal{E}(\hat{\mu}_{k h+t})\right]dt\\
    \le& h^2[C m_4+M]+h^3[C m_5+M]
\end{aligned} 
\end{equation}

Plugging this back to (\ref{propproof:comp}) gives the desired result. The proof is complete.

\end{proof}

\begin{theorem}
    Assume Proposition \ref{prop:descent} holds, for suitable step size $h$ as a function of $\hat{\mathcal{F}}(\mu_0), K, C, M, m_4, m_5$, the average of squared gradient norms satisfies
 \[
 \frac{1}{K}\sum_{k=1}^{K} \|\nabla_{W_2} \hat{\mathcal{F}}(\mu_{k h})\|^2_{L^2(\mu_{k h})}\le \frac{R}{K^{\frac{1}{2}}}+\frac{S}{K^{\frac{2}{3}}}+\frac{1}{K}\sum_{k=1}^{K} \epsilon_k,
 \]
for
 \[
 K>\min\left\{\frac{A^2\hat{\mathcal{F}}(\mu_0)}{Cm_4+M}, \frac{A^3\hat{\mathcal{F}}(\mu_0)}{Cm_5+M}\right\},
 \]
where $R:=4\sqrt{\hat{\mathcal{F}}(\mu_0)(C m_{4}+M)}$ and $S:=4(\hat{\mathcal{F}}(\mu_0))^{\frac{2}{3}}(C m_{5}+M)^{\frac{1}{3}}$.
\end{theorem}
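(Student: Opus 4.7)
The plan is to derive Theorem \ref{thm:convergence} as a telescoping consequence of the descent lemma in Proposition \ref{prop:descent}, followed by an optimization over the step size $h$. First I would rearrange the descent inequality as
\[
\tfrac{1}{2}h\,\|\nabla_{W_2}\hat{\mathcal{F}}(\mu_{kh})\|^2_{L^2(\mu_{kh})} \le \hat{\mathcal{F}}(\mu_{kh}) - \hat{\mathcal{F}}(\mu_{(k+1)h}) + \tfrac{1}{2}h\,\epsilon_k + h^2[Cm_4+M] + h^3[Cm_5+M],
\]
sum from $k=1$ to $K$ so that the KL values telescope, and use the nonnegativity $\hat{\mathcal{F}}(\mu_{Kh})\ge 0$ to obtain
\[
\frac{1}{K}\sum_{k=1}^K \|\nabla_{W_2}\hat{\mathcal{F}}(\mu_{kh})\|^2_{L^2(\mu_{kh})} \le \frac{2\hat{\mathcal{F}}(\mu_0)}{Kh} + \frac{1}{K}\sum_{k=1}^K \epsilon_k + 2h[Cm_4+M] + 2h^2[Cm_5+M].
\]

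The substantive step is to tune $h$. There are three $h$-dependent contributions: one decreasing in $h$ and two increasing. Balancing the decreasing term against the $h$-linear term suggests $h_1 := \sqrt{\hat{\mathcal{F}}(\mu_0)/(K[Cm_4+M])}$ with resulting rate $K^{-1/2}$, while balancing against the $h^2$-quadratic term suggests $h_2 := (\hat{\mathcal{F}}(\mu_0)/(K[Cm_5+M]))^{1/3}$ with rate $K^{-2/3}$. I would then set $h = \min\{h_1, h_2\}$. In the case $h = h_1 \le h_2$, the first two $h$-dependent terms each equal $\tfrac{1}{2}\cdot R/\sqrt{K}$, and the third is bounded by $2 h_2^2[Cm_5+M] = \tfrac{1}{2}\cdot S/K^{2/3}$ using $h_1^2\le h_2^2$. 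The symmetric case $h = h_2 \le h_1$ proceeds identically with the roles of the two balance conditions swapped. In both cases the sum of the $h$-dependent terms is at most $R/K^{1/2} + S/K^{2/3}$, which combined with the $\tfrac{1}{K}\sum\epsilon_k$ term yields the claimed bound.

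Finally, I would check the admissibility constraint $h < 1/A$ required for Proposition \ref{prop:descent}. The inequalities $h_1 < 1/A$ and $h_2 < 1/A$ unwind to $K > A^2\hat{\mathcal{F}}(\mu_0)/(Cm_4+M)$ and $K > A^3\hat{\mathcal{F}}(\mu_0)/(Cm_5+M)$ respectively. Because we have chosen $h = \min\{h_1, h_2\}$, it suffices for \emph{one} of these inequalities to hold, giving precisely the stated condition $K > \min\{A^2\hat{\mathcal{F}}(\mu_0)/(Cm_4+M),\, A^3\hat{\mathcal{F}}(\mu_0)/(Cm_5+M)\}$.

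There is no serious obstacle beyond careful bookkeeping; the argument is the standard nonconvex-optimization style bound with two higher-order error scales. The only non-routine point is that the two regimes $h_1\le h_2$ and $h_2\le h_1$ must both be handled, since in each case one of the two monomial terms is off-balance and must be controlled by comparison with the other; the choice $h = \min\{h_1,h_2\}$ makes this comparison valid and simultaneously guarantees $h<1/A$ whenever the stated lower bound on $K$ holds.
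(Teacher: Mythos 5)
Your proposal is correct and follows essentially the same route as the paper's proof: telescope the descent inequality from Proposition \ref{prop:descent}, choose $h=\min\{h_1,h_2\}$ with exactly the same two candidate step sizes $h_1=(\hat{\mathcal{F}}(\mu_0)/(K(Cm_4+M)))^{1/2}$ and $h_2=(\hat{\mathcal{F}}(\mu_0)/(K(Cm_5+M)))^{1/3}$, control the off-balance monomial by comparison in each of the two regimes (the paper compresses this into the inequality $\frac{a}{h}+bh+ch^2\le 2(\sqrt{ab}+a^{2/3}c^{1/3})$ with $a=2\hat{\mathcal{F}}(\mu_0)/K$, $b=2(Cm_4+M)$, $c=2(Cm_5+M)$), and observe that taking the minimum lets the constraint $h<1/A$ follow from the stated lower bound on $K$ with a minimum rather than a maximum. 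The only blemish—summing the descent inequality over $k=1,\dots,K$ telescopes to $\hat{\mathcal{F}}(\mu_h)$ rather than $\hat{\mathcal{F}}(\mu_0)$—is the same harmless indexing looseness present in the paper's own argument.
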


\begin{proof}
From Proposition \ref{prop:descent} we have:
\begin{equation}
    \hat{\mathcal{F}}(\mu_{(k+1)h})-\hat{\mathcal{F}}(\mu_{k h})\le -\frac{1}{2}h\|\nabla_{W_2} \hat{\mathcal{F}}(\mu_{k h})\|^2_{L^2(\mu_{k h})}+\frac{1}{2}h\epsilon_k+h^2 [C m_{4}+M]+h^3[C m_{5}+M]
\end{equation}

Taking the sum from $k=0$ to $k=K-1$, we obtain
\begin{equation}
    \frac{1}{K}\sum_{k=1}^{K}\|\nabla_{W_2}\hat{\mathcal{F}}(\mu_{k h})\|^2_{L^2(\mu_{k h})}\le \frac{2\mathcal{F}(\hat{\mu}_0)}{h K}+\frac{1}{K}\sum_{k=1}^K\epsilon_k+h (2(C m_{4}+M))+h^2 (2(C m_{5}+M))
\end{equation}

Denote $a=\frac{2\hat{\mathcal{F}}(\mu_0)}{K}, b=2(C m_{4}+M), c= 2(C m_{5}+M)$, choose $h=\min \{(\frac{a}{b})^{\frac{1}{2}}, (\frac{a}{c})^{\frac{1}{3}}\}$, then choosing $K>\min\{\frac{A^2\hat{\mathcal{F}}(\mu_0)}{Cm_4+M}, \frac{A^3\hat{\mathcal{F}}(\mu_0)}{Cm_5+M}\} $ leads to $h<\frac{1}{A}$. In this case, we have $\frac{a}{h}+bh+c h^2\le 2((ab)^{\frac{1}{2}}+a^{\frac{2}{3}}c^{\frac{1}{3}})$. Plugging back $a,b,c$ gives the desired result.

\end{proof}

\subsection{Sample Complexities: Proof of Theorem \ref{thm:generalization} and Theorem \ref{thm:generalization2}}  \label{appd:proof2}

\subsubsection{Proof of Theorem \ref{thm:generalization}}\label{appd:d_2_1}

Consider ERM $\hat{f_{\gamma}}:=\argmin_{f_{\gamma}\in\mathcal{S}}\frac{1}{n}\sum_{i=1}^n\ell(z_i;f_{\gamma})$, where $\ell(z;f_{\gamma}):=\E_{q(x|z)}\|f_{\gamma}(x)-\nabla\log q(x|z)\|^2$ and $\{z_i\}$ are i.i.d. samples of $\mu(z)$. 
The population loss is given by $\ell(f_{\gamma}):=\E_{\mu(z)q(x|z)}\|f_{\gamma}(x)-\nabla\log q(x|z)\|^2=\E_{\hat{\mu}(x)}\|f_{\gamma}(x)-\nabla\log \hat{\mu}(x)\|^2+c_*$, where $c_*=\E_{\mu(z)q(x|z)}\|\nabla\log\hat{\mu}(x)-\nabla\log q(x|z)\|^2$ is a constant independent of $f_{\gamma}$.

\begin{theorem}\label{append:thm:generalization}
    Under 
    Assumption \ref{assp:moment}, 
    for any $\delta>0$, it holds with probability no less than $1-2\delta$ that, 
    \begin{equation}
        \ell_{sm}(\hat{f_{\gamma}})
        \leq 2\inf_{f_{\gamma}\in\mathcal{S}}\ell_{sm}(f_{\gamma}) + \mathcal{O}\left(\sqrt{\frac{M^2+\frac{d}{\sigma^2}}{n}\log\mathcal{N}}\right).
    \end{equation}
    where  
    $\log\mathcal{N}=SL\log\big(WL(B\vee 1)\sigma(n/\delta)\log{d}\big)$.
\end{theorem}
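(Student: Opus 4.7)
The plan is to apply a standard empirical risk minimization analysis to $\hat f_\gamma=\argmin_{f_\gamma\in\mathcal{S}}\hat\ell(f_\gamma)$, using covering numbers for the ReLU class $\mathcal{S}$ together with a Bernstein-type concentration inequality on each element of a covering net. Because $\ell(f_\gamma)=\ell_{sm}(f_\gamma)+c_*$ with $c_*$ independent of $f_\gamma$, controlling $\ell_{sm}(\hat f_\gamma)$ is equivalent to controlling the excess population risk $\ell(\hat f_\gamma)-\inf_{f_\gamma\in\mathcal{S}}\ell(f_\gamma)$. The multiplicative factor of two on $\inf\ell_{sm}$ is characteristic of Bernstein analyses for squared losses: I would establish a variance bound of the form $\mathrm{Var}_{z}\!\big(\ell(z;f_\gamma)\big)\lesssim (M^2+d/\sigma^2)\,\ell_{sm}(f_\gamma)$, and then use Young's inequality $\sqrt{ab}\le\tfrac12 a+\tfrac12 b$ on the Bernstein deviation to absorb one copy of $\ell_{sm}(\hat f_\gamma)$ into the left-hand side, producing the $2\inf\ell_{sm}$ form after the standard ERM comparison.

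Next I would control the envelope of the per-sample loss. Conditional on $z$, since $\|f_\gamma(x)\|_\infty\le M$ and $\nabla_x\log q_\sigma(x|z)=-(x-z)/\sigma^2$ with $x-z\sim\mathcal{N}(0,\sigma^2 I)$, one has $\E_{q_\sigma(x|z)}\|f_\gamma(x)-\nabla\log q_\sigma(x|z)\|^2=\mathcal{O}(M^2+d/\sigma^2)$ uniformly in $z$ and in $f_\gamma\in\mathcal{S}$, which is exactly the prefactor inside the $\sqrt{\cdot}$ of the deviation term. I would then invoke the standard Bartlett--Harvey--Liaw--Mehrabian covering bound $\log\mathcal{N}(\eta,\mathcal{S},\|\cdot\|_\infty)\lesssim SL\log(WL(B\vee 1)/\eta)$. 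The map $f_\gamma\mapsto\ell(z;f_\gamma)$ is Lipschitz in $\|f_\gamma\|_\infty$ with constant of order $M+\sqrt{d}/\sigma$, so the net resolution can be taken as $\eta\sim\sigma/n$; this is what produces the extra $\sigma(n/\delta)$ inside the logarithm in $\log\mathcal{N}$.

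The main obstacle is that $\ell(z_i;f_\gamma)$ is \emph{not} pointwise bounded, so Bernstein cannot be applied directly on the raw loss. Two sources contribute to the heavy tails: the Gaussian contribution $(x-z)/\sigma^2$, and $z_i$ itself, which admits only polynomial moments under Assumption \ref{assp:moment}. I would address this by truncating the loss at a level $T\sim (M^2+d/\sigma^2)\log(n/\delta)$, applying Bernstein to the truncated loss (with variance envelope controlled by the same $M^2+d/\sigma^2$ quantity), and controlling the tail $\E[\ell(z;f_\gamma)\mathbf{1}\{\ell>T\}]$ via Markov's inequality using the bounded fifth moment of $\mu$ together with a Gaussian-tail integral over $q_\sigma(x|z)$. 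Assumption \ref{assp:moment} is used exactly here to ensure the truncation error is $o(n^{-1/2})$ and therefore absorbed into the $\mathcal{O}(\cdot)$ term. Assembling the pieces---covering, Bernstein on the truncated loss, and a union bound over the net---and collecting logarithmic factors in $n,d,\sigma,\delta$ yields the stated form $\log\mathcal{N}=SL\log\big(WL(B\vee 1)\sigma(n/\delta)\log d\big)$ and completes the bound.
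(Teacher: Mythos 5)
Your overall strategy---ERM comparison, covering numbers for the ReLU class $\mathcal{S}$, a Bernstein/optimistic-rate argument that links the deviation to $\ell_{sm}$ and absorbs one copy to produce the factor $2$, and a truncation justified by Assumption \ref{assp:moment}---is the same skeleton as the paper's. The paper implements the concentration step through empirical Rademacher complexity, Dudley's entropy integral and Bousquet's (Talagrand-type) inequality over a localized class rather than a per-element Bernstein bound with a union bound over a net; that difference of machinery is inessential and either route could in principle deliver the stated rate.

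The genuine gap is in your truncation step. Under Assumption \ref{assp:moment} the recentered loss satisfies only $|\ell(z;f_\gamma)|\lesssim M^2+d/\sigma^2+(1+\|z\|^2)/\sigma^4$, because $\nabla\log\hat{\mu}$ grows linearly in $\|x\|$ (Lemma \ref{sec:lem_convolution}); hence $|\ell|$ has only a bounded $5/2$-th moment. Truncating at the level $T\sim(M^2+d/\sigma^2)\log(n/\delta)$ and controlling the tail by Markov then gives a truncation bias of order $T^{-3/2}\sim\log^{-3/2}(n/\delta)$, which is nowhere near $o(n^{-1/2})$; a logarithmic truncation level is legitimate only under the sub-Gaussian assumption used in Theorem \ref{thm:generalization2}. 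The paper instead truncates in the sample space, taking $R=(Cn/\delta)^{1/5}$ so that with probability $1-\delta$ every $z_i$ lies in the box (hence the truncated ERM coincides with the actual ERM) and the truncation bias is $\P(\|z\|_\infty>R)\,b\le b\delta/n$; the polynomial radius $R$ then enters only logarithmically through the covering-number term, which is in fact where the $(n/\delta)$ inside $\log\mathcal{N}$ comes from (not from a net resolution $\eta\sim\sigma/n$ as you suggest). Relatedly, your uniform-in-$z$ variance bound $\Var_z\big(\ell(z;f_\gamma)\big)\lesssim(M^2+d/\sigma^2)\,\ell_{sm}(f_\gamma)$ is not available without such a truncation, since the Cauchy--Schwarz factor $\E_{q_\sigma(x|z)}\|f_\gamma+\nabla\log\hat{\mu}-2\nabla\log q_\sigma\|^2$ again involves the unbounded score; the paper sidesteps this by working with the localized class $\mathcal{F}_r$, whose $L^2(\mathbb{P}_n)$-diameter is $2\sqrt{br}$, inside Bousquet's inequality. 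Your plan is repairable by switching to a polynomial truncation (in $z$ or in the loss level) and redoing the Bernstein bookkeeping, but as written the log-level truncation fails under the fifth-moment assumption.
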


\begin{proof}

    Recall that $\ell(z;f_{\gamma})=\E_{q(x|z)}\left[\|f_{\gamma}(x)-\nabla\log q(x|z)\|^2-\|\nabla\log\hat{\mu}(x)-\nabla\log q(x|z)\|^2\right]$.
    Let $R>0$ be some constant defined later and consider the corresponding truncated loss $\ell^{tr}(z;f_{\gamma}):=\ell(z;f_{\gamma})\mathbbm{1}_{\{\|z\|_\infty\leq R\}}$.
    For any $f_{\gamma}\in \mathcal{S}, z\in\R^d$, we have
    \begin{equation}
        \ell(z;f_{\gamma})\leq 2\E_{q(x|z)}\big[\|f_{\gamma}(x)\|^2+\|\nabla\log q(x|z)\|^2\big]\leq 2(M^2+d/\sigma^2)=:b.
    \end{equation}

    For the minimizer of empirical score matching $\hat{f_{\gamma}}$, 
    \begin{equation}\label{eq:decomposition}
        \begin{aligned}
            \ell(\hat{f_{\gamma}})
            &=\E_{\mu(z)}\ell^{tr}(\hat{f_{\gamma}};z)+\E_{\mu(z)}\ell(\hat{f_{\gamma}};z)\mathbbm{1}(\|z\|_\infty>R) \\
            &\leq \E_{\mu(z)}\ell^{tr}(\hat{f_{\gamma}};z)+\mathbb{P}(\|z\|_\infty>R)b
        \end{aligned}
    \end{equation}

    By Assumption \ref{assp:moment}, $\mathbb{P}(\|z\|_\infty>R)\leq C R^{-5}$ for some constant $C$.

    Hence with probability no less than $1-CnR^{-5}$, for all $1\leq i\leq n$, we have $\|z_i\|_\infty\leq R$ and consequently $\hat{f_{\gamma}}=\argmin_{f_{\gamma}\in\mathcal{S}}\frac{1}{n}\sum_{i=1}^n\ell(z_i;f_{\gamma})=\argmin_{f_{\gamma}\in\mathcal{S}}\frac{1}{n}\sum_{i=1}^n\ell^{tr}(z_i;f_{\gamma})=:\argmin_{f_{\gamma}\in\mathcal{S}}\hat{\ell}^{tr}(f_{\gamma})$.

    Define the empirical Rademacher complexity of a function class $\mathcal{F}$ as
    \begin{equation}
        \mathcal{R}_n(\mathcal{F}):=\E_{\bm{\sigma}}\sup_{f\in\mathcal{F}}\Big|\frac{1}{n}\sum_{i=1}^n\sigma_if(z_i)\Big|,\ \bm{\sigma}\sim \text{Unif}(\{-1,1\}^n).
    \end{equation}
    For $r>0$, let $\mathcal{S}_r:=\{f_{\gamma}\in\mathcal{S}:\hat{\ell}^{tr}(f_{\gamma})\leq r\}$ and $\mathcal{F}_r=\big\{\ell^{tr}(\cdot;f_{\gamma}):f_{\gamma}\in\mathcal{S}_r\big\}$.
    Note that for any $f,g\in\mathcal{F}_r$,
    \begin{equation}
        \Big\|\frac{1}{\sqrt{n}}\sum_{i=1}^n\sigma_if(z_i)-\frac{1}{\sqrt{n}}\sum_{i=1}^n\sigma_ig(z_i)\Big\|_{\psi_2}\lesssim \sqrt{\frac{1}{n}\sum_i \|f(z_i)-g(z_i)\|^2}\lesssim \|f-g\|_{L^2(\mathbb{P}_n)}.
    \end{equation}
    and $\textbf{diam}(\mathcal{F}_r,\|\cdot\|_{L^2(\mathbb{P}_n)})\leq 2\sqrt{br}$.
    Then by Dudley's bound \citep{srebro2010smoothness, wainwright2019high}, we have
    \begin{equation}
            \mathcal{R}_n(\mathcal{F}_r)
            \lesssim \inf_{\alpha\geq 0} \left\{\alpha+\int_\alpha^{2\sqrt{br}}\sqrt{\frac{\log\mathcal{N}(\mathcal{F}_r,\|\cdot\|_{L^2(\mathbb{P}_n)},\varepsilon)}{n}}d\varepsilon\right\}.
    \end{equation}
    By Lemma \ref{lem:cover_number} and Lemma \ref{lem:covering_num}, for any $K\geq 2R$ and $\alpha_K:=CM\sqrt{dr}\exp(-C'K^2/\sigma^2)$,
    \begin{equation}
        \begin{aligned}
            \mathcal{R}_n(\mathcal{F}_r)
            &\lesssim \inf_{\alpha\geq \alpha_K} \left\{\alpha+\int_\alpha^{2\sqrt{br}}\sqrt{\frac{\log\mathcal{N}(\mathcal{F}_r,\|\cdot\|_{L^2(\mathbb{P}_n)},\varepsilon)}{n}}d\varepsilon\right\} \\
            &\lesssim \inf_{\alpha\geq \alpha_K} \left\{\alpha+\int_\alpha^{2\sqrt{br}}\sqrt{\frac{\log\mathcal{N}(\mathcal{S}_r,\|\cdot\|_{L^\infty([-K,K]^d)},\varepsilon/\sqrt{4r})}{n}}d\varepsilon\right\} \\
            &\lesssim \inf_{\alpha\geq \alpha_K} \left\{\alpha+ \int_\alpha^{2\sqrt{br}}\sqrt{\frac{SL\log(\frac{LWK(B\vee 1)r}{\epsilon})}{n}}d\varepsilon\right\} \\
            &\lesssim \sqrt{\frac{brSL\log(LWK(B\vee 1)r)}{n}} + \alpha_K.
        \end{aligned}
    \end{equation}
    Let $K=\max\Big\{2R,C\sigma\log^{\frac{1}{2}}\big(\frac{Mdn}{bSL}\big)\Big\}$ and we finally obtain
    \begin{equation}
        \mathcal{R}_n(\mathcal{F}_r)\lesssim \sqrt{\frac{brSL\log\big(LWR(B\vee 1)r\sigma\log{(dn)}\big)}{n}}.
    \end{equation}
    By \citet[Theorem 6.1]{bousquet2002concentration}, with probability at least $1-\delta$, for all $f_{\gamma}\in\mathcal{S}$,
    \begin{equation}
        \ell^{tr}(f_{\gamma})\leq \hat{\ell}^{tr}(f_{\gamma}) + \mathcal{O}\left(\sqrt{\hat{\ell}^{tr}(f_{\gamma})\cdot\frac{\gamma_R+b\log(1/\delta)}{n}}+\frac{\gamma_R+b\log(1/\delta)}{n}\right),
    \end{equation}
    where $\gamma_R=bSL\log\big(LWR(B\vee 1)\sigma\log{(dn)}\big)$. 

Hence with probability no less than $1-CnR^{-5}-\delta$,   
    we have
    \begin{equation}
        \begin{aligned}
            \ell^{tr}(\hat{f_{\gamma}})
            &\leq \inf_{f_{\gamma}\in\mathcal{S}}\ell^{tr}(f_{\gamma})+\mathcal{O}\left(\sqrt{\inf_{f_{\gamma}\in\mathcal{S}}\ell^{tr}(f_{\gamma})\cdot\frac{\gamma_R+b\log(1/\delta)}{n}}+\frac{\gamma_R+b\log(1/\delta)}{n}\right) \\
            &\leq \inf_{f_{\gamma}\in\mathcal{S}}\ell(f_{\gamma})+\mathcal{O}\left(\sqrt{\inf_{f_{\gamma}\in\mathcal{S}}\ell(f_{\gamma})\cdot\frac{\gamma_R+\log(1/\delta)}{n}}+\frac{\gamma_R+b\log(1/\delta)}{n}\right).
        \end{aligned}
    \end{equation}
    Combining it with \eqref{eq:decomposition} and  
    defining $R=(Cn/\delta)^{\frac{1}{5}}$,     
    we conclude that with probability at least $1-2\delta$,
    \begin{equation}
        \begin{aligned}
            \ell(f_{\gamma})\leq \inf_{f_{\gamma}\in\mathcal{S}}\ell(f_{\gamma})+\mathcal{O}\left(\sqrt{\inf_{f_{\gamma}\in\mathcal{S}}\ell(f_{\gamma})\cdot\frac{bSL\log\big(LW(B\vee 1)\sigma(n/\delta)^{\frac{1}{5}}\log{dn}\big)+b\log(1/\delta)}{n}}\right).
        \end{aligned}
    \end{equation}
    Or equivalently,
    \begin{equation}
        \begin{aligned}
            \ell_{sm}(f_{\gamma})
            &\leq \inf_{f_{\gamma}\in\mathcal{S}}\ell_{sm}(f_{\gamma})+\mathcal{O}\left(\sqrt{(\inf_{f_{\gamma}\in\mathcal{S}}\ell_{sm}(f_{\gamma})+c_*)\cdot\frac{bSL\log\big(LW(B\vee 1)\sigma(n/\delta)\log{d}\big)}{n}}\right) \\
            &\leq 2\inf_{f_{\gamma}\in\mathcal{S}}\ell_{sm}(f_{\gamma})+\mathcal{O}\left(\sqrt{c_*\cdot\frac{bSL\log\big(LW(B\vee 1)\sigma(n/\delta)\log{d}\big)}{n}}\right)\\
            &= 2\inf_{f_{\gamma}\in\mathcal{S}}\ell_{sm}(f_{\gamma})+\mathcal{O}\left(\sqrt{\frac{bSL\log\big(LW(B\vee 1)\sigma(n/\delta)\log{d}\big)}{n}}\right).
        \end{aligned}
    \end{equation}    
\end{proof}

\begin{lemma}\label{lem:cover_number}
    There exists constants $C,C'>0$ such that, for any $K\geq 2R, \varepsilon\geq C\sqrt{dr}M\exp(-C'K^2/\sigma^2)$,
    \begin{equation}
        \mathcal{N}(\mathcal{F}_r,\|\cdot\|_{L^2(\mathbb{P}_n)},\varepsilon)
        \leq \mathcal{N}(\mathcal{S}_r,\|\cdot\|_{L^\infty([-K,K]^d)},\varepsilon/\sqrt{4r})
    \end{equation}
\end{lemma}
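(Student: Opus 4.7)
The plan is to translate an $L^\infty([-K,K]^d)$-cover of $\mathcal{S}_r$ into an $L^2(\mathbb{P}_n)$-cover of $\mathcal{F}_r$. Fix an $\varepsilon'$-cover of $\mathcal{S}_r$ in $\|\cdot\|_{L^\infty([-K,K]^d)}$ with $\varepsilon' = \varepsilon/\sqrt{4r}$, and for each $f_\gamma\in\mathcal{S}_r$ pick a representative $g_\gamma$ from this cover satisfying $\|f_\gamma-g_\gamma\|_{L^\infty([-K,K]^d)}\le\varepsilon'$. Working on the high-probability event in the proof of Theorem~\ref{append:thm:generalization} on which every training point satisfies $\|z_i\|_\infty\le R\le K/2$, $\ell^{tr}$ and $\ell$ coincide at each $z_i$, so the task reduces to verifying
\[
\tfrac{1}{n}\sum_{i=1}^n|\ell(z_i;f_\gamma)-\ell(z_i;g_\gamma)|^2\le \varepsilon^2.
\]

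The pointwise estimate uses the polarization identity $\|u\|^2-\|v\|^2=\langle u-v,u+v\rangle$ with $u=f_\gamma-\nabla\log q(x|z)$ and $v=g_\gamma-\nabla\log q(x|z)$, so that $u-v=f_\gamma-g_\gamma$; applying Cauchy--Schwarz in the $q(x|z)$ integral gives
\[
|\ell(z;f_\gamma)-\ell(z;g_\gamma)|^2\le \E_{q(x|z)}\|f_\gamma-g_\gamma\|^2\cdot 2\bigl[\E_{q(x|z)}\|f_\gamma-\nabla\log q\|^2+\E_{q(x|z)}\|g_\gamma-\nabla\log q\|^2\bigr].
\]
I split the first factor between the box $\{\|x\|_\infty\le K\}$, on which the sup-norm control yields $\|f_\gamma(x)-g_\gamma(x)\|^2\le(\varepsilon')^2$, and its complement, on which the uniform output bound $M$ gives $\|f_\gamma-g_\gamma\|^2\le 4dM^2$ while a standard Gaussian-tail estimate for $q(\cdot|z)$ when $\|z\|_\infty\le K/2$ produces mass at most $Cd\exp(-C'K^2/\sigma^2)$. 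The second factor equals $2[\ell(z;f_\gamma)+\ell(z;g_\gamma)+2c_z]$, where $c_z:=\E_{q(x|z)}\|\nabla\log\hat\mu-\nabla\log q\|^2$.

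Multiplying the two bounds and averaging over the $n$ samples, the constraints $\hat\ell^{tr}(f_\gamma),\hat\ell^{tr}(g_\gamma)\le r$ together with the uniform loss bound $|\ell|\le b$ (used to dominate the intrinsic-noise contribution $c_z$ by a multiple of $r$ in the regime of interest) yield
\[
\tfrac{1}{n}\sum_i|\ell(z_i;f_\gamma)-\ell(z_i;g_\gamma)|^2\lesssim r\cdot\bigl[(\varepsilon')^2+CdM^2\exp(-C'K^2/\sigma^2)\bigr].
\]
The choice $\varepsilon'=\varepsilon/\sqrt{4r}$ calibrates the main term exactly to $\varepsilon^2$, and the standing hypothesis $\varepsilon\ge C\sqrt{dr}\,M\exp(-C'K^2/\sigma^2)$ forces the Gaussian-tail contribution to be at most $\varepsilon^2$ as well, completing the argument. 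The main obstacle is the intrinsic-noise term $c_z$, which is not directly controlled by the constraint $\hat\ell^{tr}(f_\gamma)\le r$; absorbing it requires either invoking the uniform bound $|\ell(z;f_\gamma)|\le b$ so that $c_z$ remains of the same order as $r$, or rearranging Cauchy--Schwarz so that only empirical losses appear. A secondary bookkeeping issue is the careful choice of the constant $C'$ in the Gaussian-tail exponent so that the tail term is precisely dominated by $\varepsilon^2$ under the stated hypothesis on $\varepsilon$.
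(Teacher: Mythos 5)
Your overall architecture is the same as the paper's: push a sup-norm net of $\mathcal{S}_r$ on a box through the loss map, using the polarization identity and Cauchy--Schwarz under $q(\cdot|z_i)$; bound the first factor by the net resolution on $\{\|x\|_\infty\le K\}$ plus a Gaussian-tail term controlled by $\|z_i\|_\infty\le R\le K/2$ and the output bound $M$; and absorb the tail via the hypothesis $\varepsilon\ge C\sqrt{dr}M\exp(-C'K^2/\sigma^2)$. Up to that point your steps match the paper's proof essentially line by line.

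The genuine gap is the step you yourself flag. To land on the lemma's normalization $\varepsilon/\sqrt{4r}$ you must bound the second Cauchy--Schwarz factor, averaged over the sample, by $O(r)$. Your decomposition gives $\frac1n\sum_i\E_{q(x|z_i)}\|f_\gamma+g_\gamma-2\nabla\log q\|^2\le 2[\hat{\ell}^{tr}(f_\gamma)+\hat{\ell}^{tr}(g_\gamma)]+\frac4n\sum_i c_{z_i}$, and the term $\frac1n\sum_i c_{z_i}$, with $c_z=\E_{q(x|z)}\|\nabla\log\hat{\mu}-\nabla\log q(\cdot|z)\|^2$, has nothing to do with $r$: it is an intrinsic quantity of size up to order $d/\sigma^2$, whereas in the ensuing local-Rademacher argument $r$ is driven down to the fixed point $r_n^*=O(SL\log(\cdot)/n)$. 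So the proposed fix --- invoking the uniform bound $b$ so that ``$c_z$ remains of the same order as $r$'' --- cannot hold, and your alternative of rearranging Cauchy--Schwarz so that only empirical losses appear also fails, because the second factor unavoidably carries the $\nabla\log q$ scale $d/\sigma^2$. Consequently your final display $\frac1n\sum_i|\ell(z_i;f_\gamma)-\ell(z_i;g_\gamma)|^2\lesssim r[(\varepsilon')^2+dM^2e^{-C'K^2/\sigma^2}]$ does not follow as argued. The paper's proof never introduces $c_z$: it bounds $\E_{q(x|z_i)}\|s+s_j-2\nabla\log q(x|z_i)\|^2$ by $2r$ in a single stroke, in effect reading the radius constraint defining $\mathcal{S}_r$ as a bound on exactly this averaged quantity $\frac1n\sum_i\E_{q(x|z_i)}\|s-\nabla\log q\|^2$ rather than on the centered loss $\hat{\ell}^{tr}$. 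If you keep your decomposition, the honest outcomes are either (i) a net at scale proportional to $\varepsilon/\sqrt{b}$ with $b\asymp M^2+d/\sigma^2$ in place of $\varepsilon/\sqrt{4r}$ (still sufficient downstream, since the net scale enters Dudley's integral only logarithmically), or (ii) adopting the paper's reading of the localization radius; as written, the claim $c_z\lesssim r$ is a gap.
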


\begin{proof}
    Given an $(\varepsilon/\sqrt{4r})$-net $\{\{s_j\}_{j=1}^N$ in $(\mathcal{S}_r,\|\cdot\|_{L^\infty([-K,K]^d)})$, we aim to show that $\{\ell^{tr}(\cdot;s_j)\}_{j=1}^N$ is an $\varepsilon$-net in $(\mathcal{F}_r,\|\cdot\|_{L^2(\mathbb{P}_n)})$.
    In fact, for any $s\in \mathcal{S}_r$, there exists $s_j$ such that $\|s_j-s\|_{L^\infty([-K,K]^d)}\leq \varepsilon/\sqrt{4r}$. 
    Therefore, for any $z\in [-R,R]^d$,
    \begin{equation}
        \begin{aligned}
            \E_{q(x|z)}\big\|s(x)-s_j(x)\big\|^2
            &\leq \E_{q(x|z)}\big\|s(x)-s_j(x)\big\|^2\mathbbm{1}_{\{\|x-z\|_\infty\leq K/2\}} + M^2\mathbb{P}(\|x-z\|_\infty> K/2) \\
            &\leq \|s-s_j\|^2_{L^\infty([-K,K]^d)} + CdM^2\exp(-K^2/C\sigma^2) \\
            &\leq \varepsilon^2/4r + CdM^2\exp(-C'K^2/\sigma^2) \\
            &\leq \varepsilon^2/2r.
        \end{aligned}
    \end{equation}
    Note that in the second inequality we use $K\geq 2R$ and probability of Gaussian-tail. Hence,
    \begin{equation}
        \begin{aligned}
            \|\ell^{tr}(\cdot;s)-\ell^{tr}(\cdot;s_j)\|_{L^2(\mathbb{P}_n)}
            &\leq \sqrt{\frac{1}{n}\sum_{i=1}^n\left[\E_{q(x|z_i)}\big(s(x)-s_j(x)\big)^T\big(s(x)+s_j(x)-2\nabla \log q(x|z_i)\big)\right]^2} \\
            &\leq \sqrt{\frac{1}{n}\sum_{i=1}^n\E_{q(x|z_i)}\big\|s(x)-s_j(x)\big\|^2 \E_{q(x|z_i)}\big\|s(x)+s_j(x)-2\nabla \log q(x|z_i)\big\|^2} \\
            &\leq \sqrt{\varepsilon^2/2r\cdot 2r} = \varepsilon.
        \end{aligned}
    \end{equation}
    This concludes the proof.
\end{proof}

\subsubsection{Proof of Theorem \ref{thm:generalization2}}\label{appd:d_2_2}

\begin{theorem}\label{append:thm:generalization2}
    Under 
    Assumption \ref{assp:subg}, 
    for any $\delta>0$, it holds with probability no less than $1-2\delta$ that, 
    \begin{equation}
        \ell_{sm}(\hat{f_{\gamma}})
        \leq 4\inf_{f_{\gamma}\in\mathcal{S}}\ell_{sm}(f_{\gamma}) + \mathcal{O}\left(\frac{M^2}{n}\log\mathcal{N}\right).
    \end{equation}
    where 
    $\log\mathcal{N}=\log^3(nd/\sigma\delta)[\log(1/\delta)+SL\log(nLW(B\vee 1)M\log(1/\delta))]$.
\end{theorem}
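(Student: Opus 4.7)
\textbf{Proof proposal for Theorem \ref{thm:generalization2}.}

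The plan is to follow the same high-level decomposition as in the proof of Theorem \ref{thm:generalization} (truncate, bound an empirical process over the truncated loss, translate back) but to sharpen the probabilistic tools in two places that together produce the desired $n^{-1}$ rate. First I would truncate to the event $\{\|z\|\leq R\}$ exactly as in \eqref{eq:decomposition}, but now, under Assumption \ref{assp:subg}, a union bound allows the choice $R=\mathcal{O}\bigl(\sqrt{\log(n/\delta)/C'}\bigr)$: with probability at least $1-\delta$ every sample satisfies $\|z_i\|\leq R$, so $\hat f_\gamma$ coincides with the minimizer of the truncated empirical loss $\hat\ell^{tr}$, and the ``missed-mass'' term $\E\bigl[\ell(z;f)\mathbbm{1}(\|z\|>R)\bigr]\leq 2b\exp(-C'R^2)$ contributes only a $1/n$ correction. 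This logarithmic $R$ is what ultimately replaces the polynomial $(n/\delta)^{1/5}$ of the moment-based proof and, propagated through the covering bound, produces the $\log^3(nd/\sigma\delta)$ factor in the final expression.

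Second, instead of Bousquet's inequality in its slow-rate form (which generated the $\sqrt{\inf\ell/n}$ term in Theorem \ref{thm:generalization}), I would apply a local Rademacher / Talagrand-Bernstein bound for non-negative bounded losses in the Bartlett-Bousquet-Mendelson / Koltchinskii style. The required Bernstein-type condition is the trivial variance-mean inequality
\[
\mathrm{Var}\bigl(\ell^{tr}(z;f)\bigr)\leq \E\bigl[\ell^{tr}(z;f)^2\bigr]\leq b\,\E\bigl[\ell^{tr}(z;f)\bigr],\qquad b=2(M^2+d/\sigma^2),
\]
together with the increment identity $\ell^{tr}(z;f)-\ell^{tr}(z;\tilde f)=(f-\tilde f)^T(f+\tilde f-2\nabla\log q)$ that lets me control $L^2$ increments by $L^\infty$ distances between network outputs. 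A peeling/offset argument then yields a fast-rate oracle inequality of the form $\ell^{tr}(\hat f_\gamma)\leq (1+c)\inf_{f\in\mathcal{S}}\ell^{tr}(f)+\mathcal{O}\bigl(b\,r^{\star}_n+b\log(1/\delta)/n\bigr)$, where $r^{\star}_n$ is the fixed point of the localized Rademacher complexity of the star-hull of the loss class.

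Third, I would evaluate $r^{\star}_n$ by Dudley-chaining with the covering-number bound of Lemma \ref{lem:cover_number} and Lemma \ref{lem:covering_num}. With the new logarithmic truncation level one may take $K=\mathcal{O}(R)$ in Lemma \ref{lem:cover_number}, so
\[
\log\mathcal{N}\bigl(\mathcal{S}_r,\|\cdot\|_{L^\infty([-K,K]^d)},\varepsilon\bigr)\lesssim SL\log\bigl(nLW(B\vee 1)M\log(1/\delta)/\varepsilon\bigr),
\]
and solving the sub-root fixed-point equation gives $r^{\star}_n=\mathcal{O}(M^2\,SL\log(\cdot)/n)$. Multiplying by the $\log^3(nd/\sigma\delta)$ factor coming from chaining plus the probability-budget union bounds across the peeling levels recovers exactly the complexity expression in the statement, and adding the tail contribution from the truncation step produces the advertised inequality with constant $4$ in front of $\inf_{f\in\mathcal{S}}\ell_{sm}(f)$.

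The main obstacle, I expect, is handling the mis-specified setting cleanly: the minimizer of the population score-matching loss, $\nabla\log\hat\mu$, does not lie in $\mathcal{S}$, so the Bernstein condition must be applied after centering by an in-class near-minimizer $\tilde f_\gamma$, and the resulting cross-term $(f-\tilde f)^T(\tilde f-\nabla\log\hat\mu)$ has to be absorbed into the additive $c\inf_{f\in\mathcal{S}}\ell_{sm}(f)$ via AM-GM. It is this absorption that degrades the leading constant from the optimistic $1$ to $4$, and bookkeeping it together with the chaining logarithms is the technical heart of the argument.
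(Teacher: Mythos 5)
Your overall strategy is the same as the paper's: truncate at a logarithmic radius $R=\mathcal{O}(\sqrt{\log(n d/\delta)})$ using sub-Gaussianity, run a local Rademacher / fixed-point argument on the truncated loss class, and evaluate the fixed point by Dudley chaining with Lemma \ref{lem:covering_num}, which is exactly how Appendix D.2.2 proceeds (via Lemma \ref{lem:local_rademacher}). However, there is a genuine flaw in the step you call trivial. The loss here is the \emph{centered} DSM loss $\ell(z;f)=\E_{q(x|z)}\big[\|f-\nabla\log q\|^2-\|\nabla\log\hat\mu-\nabla\log q\|^2\big]$, which is \emph{not} pointwise non-negative in $z$ (only its $\mu$-average, $\ell_{sm}(f)$, is non-negative). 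Hence the inequality $\E[(\ell^{tr})^2]\le b\,\E[\ell^{tr}]$ cannot be read off from $0\le \ell^{tr}\le b$; and if you instead use the un-centered loss to make it pointwise non-negative, the Bernstein condition is relative to a quantity of order $d/\sigma^2$ rather than the excess risk, and the fast rate collapses back to $n^{-1/2}$. The paper establishes the Bernstein condition by the Cauchy--Schwarz factorization $\ell(z;f)^2\le \E_q\|f-\nabla\log\hat\mu\|^2\cdot\E_q\|f+\nabla\log\hat\mu-2\nabla\log q\|^2\le 4M_R\,\E_q\|f-\nabla\log\hat\mu\|^2$ combined with the Pythagorean identity $\E_{\mu(z)}\E_{q(x|z)}\|f-\nabla\log\hat\mu\|^2=\E_{\mu(z)}[\ell(z;f)]$, plus a small additive term for the truncation; this is the missing ingredient in your write-up. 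Relatedly, your uniform bound $b=2(M^2+d/\sigma^2)$ on $|\ell^{tr}|$ is too optimistic: the negative part involves $\|\nabla\log\hat\mu\|^2$, which on the truncated region must be controlled by a growth estimate (the paper's Lemma \ref{lem:lip_score}), giving $M_R\asymp M^2+(R^6+d^3\sigma^6)/\sigma^8$; this is precisely where the $\log^3(nd/\sigma\delta)$ factor in the statement originates, so it cannot be waved away.

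Your final paragraph also misplaces the technical heart: no centering at an in-class near-minimizer is needed, because centering at the out-of-class target $\nabla\log\hat\mu$ already yields the Bernstein condition through the identity above, so the misspecification causes no cross-term to absorb. In the paper the constant $4$ arises simply from composing the two two-sided factor-$2$ bounds of Lemma \ref{lem:local_rademacher} (population bounded by twice empirical and vice versa), not from an AM-GM absorption. Your alternative route (offset/star-hull analysis around an in-class near-minimizer with increments controlled via $\E_{\hat\mu}\|f-\tilde f\|^2\le 2\ell_{sm}(f)+2\ell_{sm}(\tilde f)$) could be made to work, but as written it adds complications the DSM structure renders unnecessary, and without the Pythagorean-identity step your variance condition is unjustified.
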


\begin{proof}
    Define the denoising score matching loss as
    \begin{equation}
        \ell(z,f_{\gamma}):=\E_{q(x|z)}\left[\|f_{\gamma}(x)-\nabla\log q(x|z)\|^2-\|\nabla\log \hat{\mu}(x)-\nabla\log q(x|z)\|^2\right].
    \end{equation}
    Consider the truncated function class defined on $\R^d$,
    \begin{equation}
        \Phi=\{z\mapsto \widetilde{\ell}(z,f_{\gamma}):=\ell(z,f_{\gamma})\cdot\mathbbm{1}_{\|z\|_\infty\leq R}:f_{\gamma}\in\mathcal{S}\},
    \end{equation}
    Here the truncation radius $R>0$ will be defined later.
    Since $\mu(z)$ is $C'$-sub-Gaussian, it is easy to show that with probability no less than $1-2dn\exp(-C'R^2)$, it holds that $\|z_i\|_\infty\leq R$ for all $1\leq i \leq n$.
    Hence by definition, the empirical minimizer also satisfies $\hat{f_{\gamma}}= \argmin_{f_{\gamma}\in\mathcal{S}} \frac{1}{n}\sum_{i=1}^n \widetilde{\ell}(z_i,f_{\gamma})$.
    Below we reason conditioned on this event and verify the conditions required in Lemma \ref{lem:local_rademacher}. 
    \begin{enumerate}[label=\textbf{Step \arabic*.}]
        \item To bound the individual loss,
        \begin{equation}
            \widetilde{\ell}(z,f_{\gamma})
            \leq \E_{q(x|z)}\|f_{\gamma}(x)-\nabla\log q(x|z)\|^2
            \lesssim M^2+d/\sigma^2. 
        \end{equation}
        And according to Lemma \ref{lem:lip_score},
        \begin{equation}
            \begin{aligned}
                -\widetilde{\ell}(z,f_{\gamma})
                &\leq \E_{q(x|z)}\|\nabla\log \hat{\mu}(x)-\nabla\log q(x|z)\|^2\cdot \mathbbm{1}_{\|z\|_\infty\leq R} \\
                &\lesssim \E_{q(x|z)}\|\nabla\log \hat{\mu}(x)\|^2\cdot \mathbbm{1}_{\|z\|_\infty\leq R}+d/\sigma^2 \\
                &\lesssim \E_{q(x|z)}\frac{\|x\|^6+1}{\sigma^8}\cdot \mathbbm{1}_{\|z\|_\infty\leq R}+d/\sigma^2 \\
                &\lesssim \frac{R^6+d^3\sigma^6}{\sigma^8}.
            \end{aligned}
        \end{equation}
        Let $M_R:=C_x'\left(M^2+\frac{R^6+d^3\sigma^6}{\sigma^8}\right)$ and thus $|\widetilde{\ell}(z,f_{\gamma})|\leq M_R$.
        \item To bound the second order moment, we have
            \begin{equation}
                \begin{aligned}
                    &\E_{\mu(z)} \left[\mathbbm{1}_{\|z\|_\infty\leq R} \ell(z,f_{\gamma})^2\right] \\
                    &= \E_{\mu(z)}\left[\mathbbm{1}_{\|z\|_\infty\leq R} \left(\E_{q(x|z)}\|f_{\gamma}(x)-\nabla q(x|z)\|^2-\|\nabla\log\hat{\mu}(x)-\nabla\log q(x|z)\|^2\right)^2\right] \\
                    &\leq \E_{\mu(z)} \left[\mathbbm{1}_{\|z\|_\infty\leq R}\left(\E_{q(x|z)}\|f_{\gamma}(x)-\nabla\log\hat{\mu}(x)\|^2\right)\right.\\
                    &\qquad\qquad\qquad \left.\cdot \left(\E_{q(x|z)}\|f_{\gamma}(x)+\nabla\log\hat{\mu}(x)-2\nabla\log q(x|z)\|^2\right)\right] \\
                    &\leq 4M_R\E_{\mu(z)} \left[\mathbbm{1}_{\|z\|_\infty\leq R}\left(\E_{q(x|z)}\|f_{\gamma}(x)-\nabla\log\hat{\mu}(x)\|^2\right)\right] \\
                    &\leq 4M_R \E_{\mu(z)}\ell(z,f_{\gamma}) \\
                    &\leq 4M_R\E_{\mu(z)} [\widetilde{\ell}(z,f_{\gamma})] + 8M_R^2\exp(-C'R^2).
                \end{aligned}
            \end{equation}
        \item To bound the local Rademacher complexity, note that
            \begin{equation}
                \Big\|\frac{1}{\sqrt{n}} \sum_{i=1}^n\sigma_i\widetilde{\ell}(z_i,f_{\gamma,1}) - \frac{1}{\sqrt{n}}\sum_{i=1}^n\sigma_i\widetilde{\ell}(z_i,f_{\gamma,2}) \Big\|_{\psi_2} \leq 4\|\widetilde{\ell}(\cdot,f_{\gamma,1})-\widetilde{\ell}(\cdot,f_{\gamma,2})\|_{L^2(\widehat{\P}_n)},
            \end{equation}
            where $\widehat{\P}_n:=\frac{1}{n}\sum_{i=1}^n\delta_{z_i}$.
            Define $\Phi_r:=\{\varphi\in\Phi:\frac{1}{m}\sum_{i=1}^m\varphi(z_i)^2\leq r\}$
            and it is easy to show that $\textbf{diam}\big(\Phi_r,\|\cdot\|_{L^2(\widehat{\P}_n)}\big)\leq 2\sqrt{r}$.
            By Dudley's bound \citep{van2014probability,wainwright2019high}, there exists an absolute constant $C_0$ such that for any $\theta>0$,
            \begin{equation}\label{eq:rademacher_bound_2}
                \mathcal{R}_n(\Phi_r)\leq C_0\left(\theta+\int_\theta^{2\sqrt{r}}\sqrt{\frac{\log\mathcal{N}(\Phi_r,\|\cdot\|_{L^2(\widehat{\P}_n)},\varepsilon)}{n}}\ d\varepsilon\right).
            \end{equation}
            Since $\|z_i\|_\infty\leq R$,
            \begin{equation}
                \begin{aligned}
                    \frac{1}{n}\sum_{i=1}^n(\widetilde{\ell}(z_i,f_{\gamma,1})-\widetilde{\ell}(z_i,f_{\gamma,2}))^2
                    &= \frac{1}{n}\sum_{i=1}^n(\ell(z_i,f_{\gamma,1})-\ell(z_i,f_{\gamma,2}))^2 \\
                    &\leq \frac{1}{n}\sum_{i=1}^n \left[\E_{q(x|z_i)}\|f_{\gamma,1}-f_{\gamma,2}\|^2\right]\cdot\left[\E_{q(x|z_i)}\|f_{\gamma,1}+f_{\gamma,2}-2\nabla_x\log q\|^2\right] \\
                    &\leq \frac{4M_R}{n}\sum_{i=1}^n \E_{q(x|z_i)}\|f_{\gamma,1}(x)-f_{\gamma,2}(x)\|^2.
                \end{aligned}
            \end{equation}
            Let $R_1=2R$. Since $x|z_i\sim\mathcal{N}(x;z_i,\sigma^2I)$, we have $\P(\|x\|_\infty\geq R_1)\leq d\P(|\mathcal{N}(0,1)|\leq R)\leq 2d\exp(-C_0'R^2)$ for some absolute constant $C_0'$.
            Therefore,
            \begin{equation}
                \begin{aligned}
                    &\E_{q(x|z_i)}\|f_{\gamma,1}(x)-f_{\gamma,2}(x)\|^2 \\
                    &\qquad \leq \E_{q(x|z_i)}[\mathbbm{1}_{\|x_t\|\leq R_1}] [\|f_{\gamma,1}(x)-f_{\gamma,2}(x)\|^2] + 8dM_R^2\exp(-C_0'R^2) \\
                    &\qquad \leq \|f_{\gamma,1}-f_{\gamma,2}\|^2_{L^\infty(\Omega_{R_1})} + 8dM^2\exp(-C_0'R^2)
                \end{aligned}
            \end{equation}
            where $\Omega_{R_1}:=[-R_1,R_1]^{d}$. Plug in the bound above,
            \begin{equation}
                \sqrt{\frac{1}{n}\sum_{i=1}^n(\widetilde{\ell}(z_i,f_{\gamma,1})-\widetilde{\ell}(z_i,f_{\gamma,2}))^2}
                \leq 4M_R^{\frac{1}{2}}\|f_{\gamma,1}-f_{\gamma,2}\|_{L^\infty(\Omega_{R_1})} + 8d^{\frac{1}{2}}M_R\exp(-C_0'R^2/2).
            \end{equation}
            For any $\varepsilon\geq 16d^{\frac{1}{2}}M_R\exp(-C_0'R^2/2)$, according to Lemma \ref{lem:covering_num},
            \begin{equation}
                \begin{aligned}
                    \log\mathcal{N}(\Phi_r,\|\cdot\|_{L^2(\widehat{\P}_n)},\varepsilon)
                    &\leq \log\mathcal{N}(\mathcal{S},\|\cdot\|_{L^\infty(\Omega_{R_1})},\varepsilon/(8M^{\frac{1}{2}})) \\
                    &\leq C_4SL\log\left(\frac{LW(B\vee 1)RM}{\varepsilon}\right).
                \end{aligned}
            \end{equation}
            Plug in \eqref{eq:rademacher_bound_2} and let $\theta=16d^{\frac{1}{2}}M_R\exp(-C_0'R^2/2)$,
            \begin{equation}
                \begin{aligned}
                    \mathcal{R}_n(\Phi_r)
                    &\leq C_0\left(\theta+\int_\theta^{2\sqrt{r}}\sqrt{\frac{C_4SL\log\left(\frac{LW(B\vee 1)RM}{\varepsilon}\right)}{n}}d\varepsilon\right) \\
                    &\leq C_0\left(16d^{\frac{1}{2}}M_R\exp(-C_0'R^2/2)+\sqrt{\frac{C_4'SL\log\left(\frac{LW(B\vee 1)RM}{r}\right)\cdot r}{n}}\right) \\
                    &=: \widetilde{\mathcal{R}}_n(r)
                \end{aligned}
            \end{equation}
    \end{enumerate}
    Combine the three steps above, by Lemma \ref{lem:local_rademacher} with $B_0=8M_R^2\exp(-C'R^2),B=4M_R,b=M_R$, it holds that with probability no less than $1-2n\exp(-C'R^2)-\delta/2$, for any $f\in\mathcal{F}$,
    \begin{equation}\label{eq:bound_erm_1}
        \begin{aligned}
            \E_{\mu(z)} [\widetilde{\ell}(z,f_{\gamma})]
            &\leq \frac{2}{n}\sum_{i=1}^n \widetilde{\ell}(z_i,f_{\gamma}) + C_5M_R\left(r_n^*+\frac{\log(\log(n)/\delta)}{n}\right) \\
            &\qquad + C_5\sqrt{\frac{M_R^2\log(\log(n)/\delta)}{n}}\exp(-C'R^2),
        \end{aligned}
    \end{equation}
    \begin{equation}\label{eq:bound_erm_2}
        \begin{aligned}
            \frac{1}{n}\sum_{i=1}^n \widetilde{\ell}(z_i,f_{\gamma})
            &\leq 2\E_{\mu(z)} [\widetilde{\ell}(z,f_{\gamma})] + C_5M_R\left(r_n^*+\frac{\log(\log(n)/\delta)}{n}\right) \\
            &\qquad + C_5\sqrt{\frac{M_R^2\log(\log(n)/\delta)}{n}}\exp(-C'R^2).
        \end{aligned}
    \end{equation}
    where $r_n^*$ is the largest fixed point of $\widetilde{\mathcal{R}}_n$, and it can be bounded as
    \begin{equation}
        r_n^*\leq C_6\left(d^{\frac{1}{2}}M_R\exp(-C'R^2/2)+\frac{SL\log\left(nLW(B\vee 1)RM_R\right)}{n}\right),
    \end{equation}
    for some absolute constant $C_6$.
    Moreover, we have
    \begin{equation}
        \left|\E_{\mu(z)}[\ell(z,f_{\gamma})]
        - \E_{\mu(z)}[\widetilde{\ell}(z,f_{\gamma})]\right| \leq 2M_R\exp(-C'R^2).
    \end{equation}
    Combine this with \eqref{eq:bound_erm_1},\eqref{eq:bound_erm_2},
    \begin{equation}\label{eq:bound_erm_3}
        \E_{\mu(z)} [\ell(z,f_{\gamma})]
        \leq \frac{2}{n}\sum_{i=1}^n \ell(z_i,f_{\gamma}) + C_5M_R\left(r_n^*+\frac{\log(\log(n)/\delta)}{n}+\exp(-C'R^2)\right),
    \end{equation}
    \begin{equation}\label{eq:bound_erm_4}
        \frac{1}{n}\sum_{i=1}^n \ell(z_i,f_{\gamma})
        \leq 2\E_{\mu(z)} [\ell(z,f_{\gamma})]
        + C_5M_R\left(r_n^*+\frac{\log(\log(n)/\delta)}{n}+\exp(-C'R^2)\right)
    \end{equation}
    
    Plug in the definition of $M_R=C_x'\left(M^2+\frac{R^6+d^3\sigma^6}{\sigma^8}\right)$ and let $R=C_6\log^{\frac{1}{2}}(ndM/\sigma\delta)$ for some large constant $C_6$. Hence \eqref{eq:bound_erm_3} and \eqref{eq:bound_erm_4} reduce to
    \begin{align}
        \E_{\mu(z)} [\ell(z,f_{\gamma})]
        \leq \frac{2}{n}\sum_{i=1}^n \ell(z_i,f_{\gamma}) + C_7\sigma^{-8}M^2\log^3(nd/\sigma\delta)\left(r_n^\dagger+\frac{\log(\log(n)/\delta)}{n}\right), \\
        \frac{1}{n}\sum_{i=1}^n \ell(z_i,f_{\gamma})
        \leq 2\E_{\mu(z)} [\ell(z,f_{\gamma})]
        + C_7\sigma^{-8}M^2\log^3(nd/\sigma\delta)\left(r_n^\dagger+\frac{\log(\log(n)/\delta)}{n}\right),
    \end{align}
    where $r_n^\dagger:=\frac{SL\log\left(nLW(B\vee 1)M\log(1/\delta)\right)}{n}$.

    Therefore, we obtain that with probability no less than $1-\delta$, the population loss of the empirical minimizer $\hat{f_{\gamma}}$ can be bounded by
    \begin{equation}
        \begin{aligned}
            &\E_{\hat{\mu}(x)}[\|\hat{f_{\gamma}}(x)-\nabla\log\hat{\mu}(x)\|^2]\\
            =& \E_{\mu(z)}[\ell(z,\hat{f_{\gamma}})] \\
            \leq& \frac{2}{n}\sum_{i=1}^n \ell(z_i,\hat{f_{\gamma}}) + 2C_7\sigma^{-8}M^2\log^3(nd/\sigma\delta)\left(r_n^\dagger+\frac{\log(1/\delta)}{n}\right) \\
            \leq& \inf_{f_{\gamma}\in\mathcal{S}}\frac{2}{n}\sum_{i=1}^n \ell(z_i,\hat{f_{\gamma}}) + 2C_7\sigma^{-8}M^2\log^3(nd/\sigma\delta)\left(r_n^\dagger+\frac{\log(1/\delta)}{n}\right) \\
            \leq& 4\inf_{f_{\gamma}\in\mathcal{S}}\E_{\mu(z)}[\ell(z,f_{\gamma})] + 6C_7\sigma^{-8}M^2\log^3(nd/\sigma\delta)\left(r_n^\dagger+\frac{\log(1/\delta)}{n}\right) \\
            =& 4\inf_{f_{\gamma}\in\mathcal{S}}\E_{\hat{\mu}(x)}[\|f_{\gamma}(x)-\nabla\log\hat{\mu}(x)\|^2] + 6C_7\sigma^{-8}M^2\log^3(nd/\sigma\delta)\left(r_n^\dagger+\frac{\log(1/\delta)}{n}\right).
        \end{aligned}
    \end{equation}
\end{proof}

\begin{lemma}\label{lem:lip_score}
    Suppose that $\mu(\cdot)$ is $C$-sub-Gaussian.
    There exists some constant $C_x$ such that the score $\nabla\log\hat{\mu}(x)$ is $\frac{C_x}{\sigma^4}(\|x\|^2+1)$-Lipschitz in $x$.
\end{lemma}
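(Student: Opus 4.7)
The plan is to compute the Hessian of $\log\hat{\mu}$ explicitly, express it in terms of a posterior covariance, and then invoke Lemma \ref{lem:convolution} (which requires only a finite second moment, supplied by sub-Gaussianity) to bound that covariance by a quadratic function of $\|x\|$.

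First I would write the score as a shifted conditional mean. Since $q_\sigma(x\mid z)\propto e^{-\|x-z\|^2/(2\sigma^2)}$, a standard Tweedie-type computation gives
\[
\nabla\log\hat{\mu}(x)=\tfrac{1}{\sigma^2}\bigl(m(x)-x\bigr),\qquad m(x):=\E_{p(z\mid x)}[z],
\]
where $p(z\mid x)\propto q_\sigma(x\mid z)\mu(z)$ is the Bayesian posterior. Differentiating again, together with the identity $\nabla_x p(z\mid x)=\sigma^{-2}(z-m(x))p(z\mid x)$, yields
\[
\nabla^2\log\hat{\mu}(x)=\tfrac{1}{\sigma^4}\,\mathrm{Cov}_{p(z\mid x)}[z]-\tfrac{1}{\sigma^2}I.
\]

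Next I would control the operator norm of this Hessian. Using $\|\mathrm{Cov}_{p(z\mid x)}[z]\|\le \E_{p(z\mid x)}\|z-m(x)\|^2 \le \E_{p(z\mid x)}\|z-x\|^2$, the problem reduces to bounding
\[
\E_{p(z\mid x)}\|z-x\|^2=\frac{\int \|x-z\|^2 k_\sigma(x-z)\,d\mu(z)}{\int k_\sigma(x-z)\,d\mu(z)}.
\]
Assumption \ref{assp:subg} implies that $\mu$ has a finite second moment, so Lemma \ref{sec:lem_convolution} applies with $\rho=\mu$, $\theta=x$, $k=2$, producing constants $\tilde C,\tilde M$ that depend only on $\sigma$ and this second moment and satisfy
\[
\E_{p(z\mid x)}\|z-x\|^2\le \tilde C\|x\|^2+\tilde M.
\]

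Combining the two steps gives
\[
\|\nabla^2\log\hat{\mu}(x)\|\le \tfrac{1}{\sigma^4}\bigl(\tilde C\|x\|^2+\tilde M\bigr)+\tfrac{1}{\sigma^2}\le \tfrac{C_x}{\sigma^4}\bigl(\|x\|^2+1\bigr),
\]
after choosing $C_x$ large enough to absorb the lower-order $\sigma^{-2}$ term (using $\sigma\le 1$ in the regime of interest, or otherwise inflating $\tilde M$). The claimed local Lipschitz bound on $\nabla\log\hat{\mu}$ then follows since the Lipschitz constant of a $C^2$ map on any ball equals the supremum of the operator norm of its Jacobian there. The only minor obstacle is verifying that Lemma \ref{sec:lem_convolution} is applied verbatim to the base measure $\mu$ (rather than the Lebesgue measure); this is immediate because that lemma was stated for arbitrary $\rho\in\mathcal{P}(\R^d)$ with a finite second moment.
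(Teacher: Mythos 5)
Your proposal is correct, and its first half coincides with the paper's argument: the paper likewise differentiates the score as a posterior expectation to get $\nabla_x^2\log\hat{\mu}(x)=-\sigma^{-2}I+\Var_{\mu(z|x)}\bigl(\tfrac{z-x}{\sigma^2}\bigr)$ and bounds the posterior variance by the posterior second moment about $x$, then reads off the Lipschitz bound from the Hessian norm (so your pointwise interpretation of the statement matches theirs). Where you diverge is in how $\E_{\mu(z|x)}\|z-x\|^2$ is controlled: the paper runs a bespoke truncation argument, splitting at radius $R=\tfrac{2\|x\|+2C_0}{\sigma}$ and using the sub-Gaussian tail of $\mu$ to lower-bound the posterior normalizer, which yields the explicit estimate $\|\nabla_x^2\log\hat{\mu}(x)\|\le\tfrac{8\|x\|^2+8C_0^2+2\sigma^2}{\sigma^4}$; you instead invoke Lemma \ref{sec:lem_convolution} with $k=2$, which the paper has already proved for an arbitrary $\rho\in\mathcal{P}(\R^d)$ with finite second moment. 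Your route is more economical (it reuses an existing lemma and shows that sub-Gaussianity is not actually needed for this particular statement, only a finite second moment), at the cost of less explicit constants, since $\tilde C,\tilde M$ are produced nonconstructively inside that lemma; the paper's direct computation buys fully explicit constants tied to the sub-Gaussian radius $C_0$. Your absorption of the $\sigma^{-2}$ term into $C_x\sigma^{-4}$ is the same bookkeeping the paper implicitly performs (its own numerator carries $2\sigma^2$), so that step raises no issue.
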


\begin{proof}
    Define the posterior density $\mu(z|x)\propto \mu(z)q(x|z)$.
    We rewrite the score function as $\nabla\log\hat{\mu}(x)=\int \nabla\log q(x|z)\frac{q(x|z)\mu(z)dz}{\int q(x|z)\mu(z)dz}$, and it yields
    \begin{equation}
        \begin{aligned}
            \nabla_x^2\log\hat{\mu}(x)
            &= \E_{\mu(z|x)} \left[\nabla_x^2\log q(x|z)\right] + \Var_{\mu(z|x)}(\nabla_x\log q(x|z)) \\
            &= -\frac{I}{\sigma^2} + \Var_{\mu(z|x)}\Big(\frac{z-x}{\sigma^2}\Big).
        \end{aligned}
    \end{equation}
    For any $R>0$, we have
    \begin{equation}
        \begin{aligned}
            \Var_{\mu(z|x)}\Big(\frac{z-x}{\sigma^2}\Big)
            &\preceq \frac{1}{\sigma^2}\int \big\|\frac{z-x}{\sigma}\big\|^2\frac{q(x|z)\mu(z)}{\int q(x|z)\mu(z)d z}d z \\
            &\leq \frac{R^2}{\sigma^2} + \frac{\int_{\|\frac{z-x}{\sigma}\|\geq R} \|\frac{z-x}{\sigma}\|^2\exp\left(-\frac{\|z-x\|^2}{2\sigma^2}\right)\mu(z)d z}{\sigma^2\int \exp\left(-\frac{\|z-x\|^2}{2\sigma^2}\right)\mu(z)d z} \\
            &\leq \frac{R^2}{\sigma^2} + \frac{\int_{\|\frac{z-x}{\sigma}\|\geq R} \exp(-\frac{R^2}{4})\mu(z)dz}{\sigma^2\int_{\|\frac{z-x}{\sigma}\|\leq R/2} \exp(-\frac{R^2}{8})\mu(z)d z}.
        \end{aligned}
    \end{equation}
    Let $R=\frac{2\|x\|+2C_0}{\sigma}$, then the domain $\Big\{z:\|\frac{z-x}{\sigma}\|\leq R/2\Big\}$ includes $\Big\{z:\|z\|\leq C_0\Big\}$, indicating
    \begin{equation}
        \begin{aligned}
            &\int_{\|\frac{z-x}{\sigma}\|\leq R/2} \mu(z)dz\geq  \int_{\|z\|\leq C_0} \mu(z)d z \geq 1-2\exp(-C'C_0^2)\geq \frac{1}{2},\\
            &\int_{\|\frac{z-x}{\sigma}\|\geq R} \mu(z)d z\leq  \int_{\|z\|\geq C_0} \mu(z)dz \leq \frac{1}{2}.
        \end{aligned}
    \end{equation}
    and
    \begin{equation}\label{eq:lip_x_large}
        \|\nabla_x^2\log\hat{\mu}(x)\|\leq \frac{1}{\sigma^2}+\big\|\Var_{\mu(z|x)}\Big(\frac{z-x}{\sigma^2}\Big)\big\|
        \leq \frac{R^2}{\sigma^2} + \frac{2}{\sigma^2} \leq \frac{8\|x\|^2+8C_0^2+2\sigma^2}{\sigma^4}.
    \end{equation}
\end{proof}

\begin{lemma}[Lemma 7, \citet{chen2022nonparametric}]\label{lem:covering_num}
    The covering number of $\mathcal{F}=\mathcal{S}(M,W,B,L,S)$ can be bounded by   
    \begin{equation}
        \log \mathcal{N}(\mathcal{F},\|\cdot\|_{L^\infty([-R,R]^{d})},\varepsilon) \lesssim SL\log\left(\frac{LW(B\vee 1)R}{\varepsilon}\right).
    \end{equation}
\end{lemma}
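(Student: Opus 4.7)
\textbf{Proof plan for Lemma \ref{lem:covering_num}.} The strategy I would use is the standard parameter-discretization approach for bounding the covering number of a sparse ReLU-network class. The rough idea is: any two networks whose weight vectors are close coordinate-wise produce close output functions on the bounded input domain, so an $\varepsilon$-cover of the parameter space induces an $\varepsilon$-cover in $L^\infty([-R,R]^d)$. The task then reduces to two sub-problems — bounding the parameter-to-function Lipschitz constant, and counting discretized sparse parameter vectors.

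First I would derive the parameter-to-function Lipschitz bound. Writing a network in $\mathcal{S}(M,W,B,L,S)$ as an alternating composition of affine maps $x\mapsto A_\ell x + b_\ell$ (with $\|A_\ell\|_\infty, \|b_\ell\|_\infty \le B$, widths $\le W$) and the 1-Lipschitz ReLU, I would propagate a perturbation $\|\theta-\theta'\|_\infty \le \eta$ layer by layer. Since the pre-activation at depth $\ell$ has $\ell_\infty$-norm at most $(W(B\vee 1))^\ell R$ (up to additive lower-order terms), a single weight perturbed by $\eta$ at layer $\ell$ changes the pre-activation at layer $\ell$ by at most $(W(B\vee 1))^\ell R\cdot \eta$; this error then propagates through the remaining $L-\ell$ layers with an amplification factor $(W(B\vee 1))^{L-\ell}$. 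Summing over the $S$ nonzero entries yields a Lipschitz constant $\Lambda \lesssim SL\bigl(W(B\vee 1)R\bigr)^{L}$.

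Next I would count the discretized networks. The total number of parameter slots is at most $L(W^2+W)$, of which at most $S$ are nonzero and lie in $[-B,B]$. Discretizing each nonzero weight on a grid of spacing $\eta := \varepsilon/\Lambda$ yields at most $\binom{LW(W+1)}{S}(2B/\eta+1)^S$ distinct networks, and every $f_\theta\in\mathcal{S}$ lies within $L^\infty$-distance $\varepsilon$ of some discretized network. Taking logarithms and using $\log\binom{n}{k}\le k\log(en/k)$ gives
\[
\log\mathcal{N}(\mathcal{F},\|\cdot\|_{L^\infty([-R,R]^d)},\varepsilon) \;\lesssim\; S\log(LW^2/S) + S\log(B\Lambda/\varepsilon).
\]
Substituting the bound on $\Lambda$ collapses the right-hand side to $SL\log(LW(B\vee 1)R/\varepsilon)$, matching the claimed bound.

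The main delicate point is that $\Lambda$ grows like $(W(B\vee 1)R)^{L}$, i.e.\ exponentially in depth; the reason this is harmless is that $\Lambda$ enters only inside a logarithm, so the exponent $L$ is precisely what supplies the extra factor of $L$ in the final $SL\log(\cdot)$ rate. A secondary subtlety is that changes in parameter values may move a ReLU activation across its kink, but since ReLU is globally 1-Lipschitz the layer-by-layer perturbation bound still applies without needing any smoothness argument. As the lemma is cited directly from Lemma 7 of \citet{chen2022nonparametric}, in the formal write-up one may either invoke that result or spell out the discretization-and-Lipschitz argument sketched above.
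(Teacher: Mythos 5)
The paper does not prove this lemma at all: it is imported verbatim as Lemma 7 of \citet{chen2022nonparametric}, and your parameter-discretization argument (Lipschitz dependence of the network output on its weights, then counting sparse discretized parameter vectors) is exactly the standard proof given in that reference. Your sketch is sound — the exponential-in-$L$ Lipschitz constant is indeed harmless inside the logarithm, and the lower-order terms $S\log(LW^2/S)$, $S\log(1/\varepsilon)$ are absorbed into $SL\log(LW(B\vee 1)R/\varepsilon)$ — so nothing further is needed beyond either citing the lemma or writing out this argument.
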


\begin{lemma}[Lemma A.11, \citet{cheng2025provable}]\label{lem:local_rademacher}
    Let $\Phi$ be a class of functions on domain $\Omega$ and $\P$ be a probability distribution over $\Omega$. 
    Suppose that for any $\varphi\in\Phi$, $\|\varphi\|_{L^\infty(\Omega)}\leq b$, $\E_\P [\varphi]\geq 0$, and $\E_\P [\varphi^2] \leq B\E_\P [\varphi]+B_0$ for some $b,B,B_0\geq 0$. 
    Let $x_1,\cdots,x_n\overset{\textit{i.i.d.}}{\sim}\P$ and $\phi_n$ be a positive, non-decreasing and sub-root function such that
    \begin{equation}
        \mathcal{R}_n(\Phi_r):=\E_{\bm{\sigma}} \sup_{\varphi\in\Phi_r}\Big|\frac{1}{n}\sum_{i=1}^n\sigma_i\varphi(x_i)\Big|\leq \phi_n(r).
    \end{equation}
    where $\Phi_r:= \Big\{\varphi\in\Phi: \frac{1}{n}\sum_{i=1}^n{(\varphi(x_i))^2}\leq r\Big\}$.
    Define the largest fixed point of $\phi_n$ as $r_n^*$.
    Then for some absolute constant $C'$, with probability no less than $1-\delta$, it holds that for any $\varphi\in\Phi$,
    \begin{align}
        &\E_\P[\varphi]\leq \frac{2}{n}\sum_{i=1}^n\varphi(x_i) + C'(B\vee b)\left(r_n^* + \frac{\log\big((\log n)/\delta\big)}{n}\right)+C'\sqrt{\frac{B_0\log\big((\log n)/\delta\big)}{n}},\\
        &\frac{1}{n}\sum_{i=1}^n\varphi(x_i)\leq 2\E_\P[\varphi] + C'(B\vee b)\left(r_n^* + \frac{\log\big((\log n)/\delta\big)}{n}\right)+C'\sqrt{\frac{B_0\log\big((\log n)/\delta\big)}{n}}. 
    \end{align}
\end{lemma}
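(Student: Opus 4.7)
My plan is to prove this as a standard local Rademacher complexity bound in the style of Bartlett--Bousquet--Mendelson, with the additive variance term $B_0$ absorbed into a careful peeling argument. The two inequalities are symmetric (in-expectation vs.\ empirical), so I will only describe the first; the second follows from the same argument applied to $-\varphi$ after a symmetrization of roles.

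First I would reduce to a sup-deviation estimate. Define $Z(r):=\sup_{\varphi\in\Phi_r}|\E_\P[\varphi]-\frac{1}{n}\sum_i\varphi(x_i)|$ where $\Phi_r:=\{\varphi\in\Phi:\E_\P[\varphi^2]\le r\}$ (the population-variance version, as opposed to the empirical one used in the statement; the two are equivalent up to constants via another concentration argument). By symmetrization and contraction, $\E Z(r)\le 2\mathcal{R}_n(\Phi_r)\le 2\phi_n(r)$ provided $\phi_n$ also bounds the population-Rademacher complexity (which follows by standard tail arguments or by modifying $\phi_n$ up to constants). Talagrand's inequality for bounded empirical processes then gives, with probability $\ge 1-\delta$,
\begin{equation*}
Z(r)\le 2\E Z(r) + C\sqrt{\frac{(Br+B_0)\log(1/\delta)}{n}} + \frac{Cb\log(1/\delta)}{n},
\end{equation*}
using $\E_\P[\varphi^2]\le Br+B_0$ on $\Phi_r$. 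Combining with the sub-root bound on $\E Z(r)$ and AM--GM to absorb $\sqrt{Br}$ into $r$, I obtain a master inequality of the form $Z(r)\le \tfrac{r}{2B}+C'(B\vee b)(\phi_n(r)+\log(1/\delta)/n)+C'\sqrt{B_0\log(1/\delta)/n}$ for any fixed $r$.

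Second, I would run a peeling (stratification) over the class $\Phi$: partition $\Phi=\bigcup_{k\ge 0}\Phi^{(k)}$ where $\Phi^{(k)}:=\{\varphi:r_n^*\cdot 2^{k-1}\le \E_\P[\varphi^2]<r_n^*\cdot 2^k\}$ for $k\ge 1$ and $\Phi^{(0)}:=\{\varphi:\E_\P[\varphi^2]\le r_n^*\}$. Applying the master inequality on each $\Phi^{(k)}$ with $r=r_n^*\cdot 2^k$ and a union bound with failure probability $\delta_k\asymp\delta/k^2$ inside the $\log$, the sub-root property $\phi_n(r_n^*\cdot 2^k)\le 2^{k/2}\phi_n(r_n^*)=2^{k/2}r_n^*$ lets the self-bounding $\sqrt{Br}$ term be absorbed. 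Using $\E_\P[\varphi^2]\le B\E_\P[\varphi]+B_0$, this converts a variance-peeled statement into one that controls $\E_\P[\varphi]$ in terms of the empirical mean, the fixed point $r_n^*$, and a residual $\sqrt{B_0\log(1/\delta)/n}$ term. The $\log\log n$ factor appears because only finitely many layers $k\le \log_2 n$ are needed (higher layers are either empty or dominated by the uniform bound $|\varphi|\le b$), so the union bound costs $\log(\log n/\delta)$ rather than $\log(n/\delta)$.

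The main obstacle I expect is handling the $B_0$ additive slack correctly without letting it propagate into the leading order. The classical BBM proof uses a quadratic-in-$r$ self-bounding step of the form $B\E_\P[\varphi]\ge \E_\P[\varphi^2]$, which closes cleanly; with the $+B_0$ term, the analogous step gives $\E_\P[\varphi^2]\le Br+B_0$ on each layer, and I must keep $B_0$ on the additive side throughout the AM--GM and peeling rather than letting it multiply $r_n^*$. The gymnastics needed here are mild but delicate: choosing the peeling radii to scale with $r_n^*$ (not with $B_0$) and invoking the $\log(1/\delta)/n$ Talagrand term to dominate any $B_0/n$ contributions from variance. A secondary technical nuisance is converting the empirical-variance $\Phi_r$ in the hypothesis to the population-variance version used above, which requires a one-shot concentration of $\frac{1}{n}\sum_i\varphi(x_i)^2$ around $\E_\P[\varphi^2]$ uniformly on $\Phi$, standard via another Talagrand application.
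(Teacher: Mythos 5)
The paper does not actually prove this statement: it is imported verbatim as Lemma A.11 of \citet{cheng2025provable} and used as a black box in the proof of Theorem \ref{thm:generalization2}, so there is no in-paper argument to compare your route against. Judged on its own, your plan follows the standard Bartlett--Bousquet--Mendelson template (symmetrization, Talagrand's inequality, peeling at radii $r_n^*2^k$, sub-root absorption), and the empirical-versus-population radius conversion you defer to a second concentration step is indeed routine.

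The genuine gap is exactly the point you describe as ``mild but delicate'': the additive $B_0$. Carry out your own peeling step: on layer $k$ you only know $2^{k-1}r_n^*\le\E_\P[\varphi^2]\le B\,\E_\P[\varphi]+B_0$, hence $2^{k/2}r_n^*\le\sqrt{2r_n^*B\,\E_\P[\varphi]}+\sqrt{2r_n^*B_0}$. The first piece is absorbed by AM--GM into $\tfrac14\E_\P[\varphi]+CBr_n^*$, but the second leaves a residual of order $\sqrt{B_0r_n^*}$ (equivalently $\phi_n(B_0)$, contributed by the layers with $\E_\P[\varphi^2]\lesssim B_0$), and this is \emph{not} bounded by $C'(B\vee b)\bigl(r_n^*+\log(\log n/\delta)/n\bigr)+C'\sqrt{B_0\log(\log n/\delta)/n}$: with $B_0\asymp b^2\asymp 1$ and $r_n^*\asymp n^{-1/2}$ it is of order $n^{-1/4}$, while the allowed budget is of order $\sqrt{\log\log n/n}$. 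No choice of peeling radii or AM--GM weights repairs this, because the obstruction is intrinsic rather than an artifact of the method: take $\Omega=\{-1,1\}^N$ with the uniform product law and $\Phi=\{x\mapsto x_j:\ j\le N\}$ with $N=e^{\sqrt n}$; then $b=1$, $\E_\P[\varphi_j]=0$, $\E_\P[\varphi_j^2]=1$, so the hypotheses hold with $B=0$, $B_0=1$, and Massart's finite-class bound gives the deterministic sub-root majorant $\phi_n(r)=2\sqrt r\,n^{-1/4}$, whose fixed point is $r_n^*\asymp n^{-1/2}$; yet $\min_j\frac1n\sum_i\varphi_j(x_i)\approx-\sqrt{2\log N/n}=-\sqrt2\,n^{-1/4}$ with probability tending to one, whereas the first displayed inequality would force $\frac1n\sum_i\varphi_j(x_i)\gtrsim-\sqrt{\log(\log n/\delta)/n}$ for every $j$. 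So either the statement as transcribed here is missing a term of the form $\sqrt{B_0r_n^*}$ (or an additional hypothesis controlling $B_0$), or it must be established from structure beyond what is quoted; in either case your sketch cannot be completed as written. (For the paper's actual application the point is immaterial, since there $B_0=8M_R^2\exp(-C'R^2)$ is exponentially small, so a version carrying an extra $\sqrt{B_0r_n^*}$ or $\phi_n(B_0)$ term would serve equally well.)
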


\subsection{Approximation Bounds: Proof of Theorem \ref{thm::nn approx} and Theorem \ref{thm::nn approx2}}  \label{appd:proof3}

\subsubsection{Main Lemmas}

Lemma \ref{lem:nnproduct} and Lemma \ref{lem:nnreciprocal}
from \cite{fu2024unveil} focus on the complexity of the neural network to approximate the basic operations.

\begin{lemma}\label{lem:nnproduct}
    (Approximating the product) Let $d\geq 2$, $C \geq 1$.
    For any $\epsilon_{\text{product}}>0$, there exists $f_{\text{mult}}(x_1,x_2,\cdots,x_d)\in \mathcal{S}( W, B, L, S)$ with
   $L = \mathcal{O}(\log d(\log \epsilon_{\text{product}}^{-1}+ d \log C)), W = 48d, S = \mathcal{O}(d \log \epsilon_{\text{product}}^{-1} + d\log C)), B=C^d$
    such that
    \begin{align}
     &   \left|f_{\text{mult}}(x_1',x_2',\cdots,x_d') - \prod_{i=1}^d x_{i}\right| \leq \epsilon_{\text{product}} + d C^{d-1} \epsilon_1.
    \end{align}
    for all $x\in [-C,C]^d\text{ and } x'\in \R$ with $\|x-x'\|_\infty \leq \epsilon_1$. 
    $|f_{\text{mult}}(x)|\leq C^d$ for all $x\in \R^d$, and $f_{\text{mult}}(x_1',x_2',\cdots,x_d')=0$ if at least one of $x_i'$ is $0$.
\end{lemma}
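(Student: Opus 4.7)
The plan is to follow the standard Yarotsky-style construction: reduce the $d$-fold product to compositions of a pairwise multiplication gadget, which is in turn reduced to a scalar squaring gadget via polarization, and then propagate everything through a balanced binary tree of depth $\lceil\log_2 d\rceil$.

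First I would construct a ReLU approximation $\widehat{\mathrm{sq}}$ of the scalar map $x\mapsto x^2$ on a symmetric interval, using the classical sawtooth/triangle-wave trick. Composing $k$ copies of the triangle wave and subtracting produces the piecewise-linear interpolant of $x^2$ at $2^k+1$ equispaced nodes, yielding error $\lesssim 2^{-2k}$ with depth $O(k)$, width $O(1)$, and $O(k)$ nonzero weights. Crucially this construction satisfies $\widehat{\mathrm{sq}}(0)=0$ exactly. Rescaling the input/output to handle inputs of magnitude up to $C^{d/2}$ then absorbs a factor $C^d$ into the weight-bound $B$ and contributes the $d\log C$ term to $L$ and $S$. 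From here the pairwise gadget is $f_2(x,y):=\tfrac14\bigl(\widehat{\mathrm{sq}}(x+y)-\widehat{\mathrm{sq}}(x-y)\bigr)$, which satisfies $|f_2(x,y)-xy|\le \tfrac12\|\widehat{\mathrm{sq}}-(\cdot)^2\|_\infty$ and, by the polarization identity together with zero-preservation of $\widehat{\mathrm{sq}}$, satisfies $f_2(0,\cdot)=f_2(\cdot,0)=0$ exactly.

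Next I would assemble $f_{\text{mult}}$ by arranging the $d$ inputs at the leaves of a balanced binary tree of depth $\lceil\log_2 d\rceil$ and instantiating $f_2$ at each internal node. Since at level $\ell$ intermediate values can reach magnitude $C^{2^\ell}$, each gadget at that level is the rescaled squaring module described above; this gives the $\log d$ factor in $L$ and the $48d$ width by running $d$ leaves in parallel. A telescoping triangle-inequality argument shows that if every pairwise gadget has precision $\eta$, then the full tree has precision at most $(d-1)\eta\cdot C^{d-2}$, so choosing $\eta = \epsilon_{\text{product}}/(dC^{d})$ yields the $L=\mathcal{O}(\log d(\log\epsilon_{\text{product}}^{-1}+d\log C))$ and $S=\mathcal{O}(d\log\epsilon_{\text{product}}^{-1}+d^2\log C)$ bounds after combining with the per-gadget complexity. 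The zero-preservation of $f_{\text{mult}}$ follows by induction on tree depth because $f_2$ vanishes whenever either argument is $0$. For the input-perturbation bound, I use that the exact product is coordinatewise Lipschitz with constant $C^{d-1}$ on $[-C,C]^d$, giving $|\prod x_i - \prod x_i'|\le dC^{d-1}\epsilon_1$, and add this to the approximation error from the tree.

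The main obstacle will be the bookkeeping in the tree: I must track how intermediate magnitudes blow up level by level (reaching $C^d$ at the root), accordingly scale each squaring gadget, and verify that the accumulated error after $\log_2 d$ compositions of approximate pairwise multiplications stays within $\epsilon_{\text{product}}$ once per-gadget precision is set appropriately. The zero-preservation invariant is fragile and must be maintained through every rescaling and polarization step, so I need to ensure no bias terms are introduced that would destroy $f_{\text{mult}}(x')=0$ when some $x'_i=0$.
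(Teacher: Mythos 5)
The paper does not actually prove this lemma: it is imported verbatim from \citet{fu2024unveil} (see the sentence introducing Lemmas \ref{lem:nnproduct} and \ref{lem:nnreciprocal}), so there is no in-paper argument to compare against. Your sketch is the standard Yarotsky-style route — sawtooth approximation of $x\mapsto x^2$, polarization to get a pairwise product gadget, and a balanced binary tree of depth $\lceil\log_2 d\rceil$ — which is essentially how the cited source (and its predecessors) establishes this type of result, so in spirit you are reconstructing the external proof rather than diverging from it. Three details need tightening. First, zero-preservation does not follow from $\widehat{\mathrm{sq}}(0)=0$ alone: with $f_2(x,y)=\tfrac14\bigl(\widehat{\mathrm{sq}}(x+y)-\widehat{\mathrm{sq}}(x-y)\bigr)$ one gets $f_2(0,y)=\tfrac14\bigl(\widehat{\mathrm{sq}}(y)-\widehat{\mathrm{sq}}(-y)\bigr)$, which vanishes only if $\widehat{\mathrm{sq}}$ is even; the usual fix is to feed $|x+y|$ and $|x-y|$ (implementable exactly in ReLU), so that the two arguments coincide when either input is zero and the cancellation is exact — this also gives the global bound $|f_{\text{mult}}|\leq C^d$ since the rescaled sawtooth unit is bounded. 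Second, your own sparsity count $S=\mathcal{O}(d\log\epsilon_{\text{product}}^{-1}+d^2\log C)$ is what the tree construction genuinely yields (the level-$\ell$ gadgets must resolve magnitudes $C^{2^{\ell}}$), and it does not match the $S=\mathcal{O}(d\log\epsilon_{\text{product}}^{-1}+d\log C)$ displayed in the lemma; the displayed bound appears to be a transcription of the source's $\mathcal{O}\bigl(d(\log\epsilon_{\text{product}}^{-1}+d\log C)\bigr)$, so you should not expect to do better than your count. Third, the perturbation bound as you argue it passes through $\prod_i x_i'$ with $x'$ possibly outside $[-C,C]^d$, where the tree was not calibrated; either enlarge the approximation domain to $[-C-\epsilon_1,C+\epsilon_1]^d$ (absorbing constants) or instead bound $|f_{\text{mult}}(x')-f_{\text{mult}}(x)|$ via the coordinatewise Lipschitz constant $\mathcal{O}(C^{d-1})$ of the piecewise-linear gadgets, then add the on-domain approximation error $\epsilon_{\text{product}}$.
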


\begin{lemma}\label{lem:nnreciprocal}
    (Approximating the reciprocal function) For any $0<\epsilon_{\text{inv}} <1$, there exists $f_{-1} \in \mathcal{S}( W, B, L, S)$ with $L= \mathcal{O}(\log^2 \epsilon_{\text{inv}}^{-1}), W = \mathcal{O}(\log^3 \epsilon_{\text{inv}}^{-1}), S = \mathcal{O}(\log^4 \epsilon_{\text{inv}}^{-1})$, and $B= \mathcal{O}(\epsilon_{\text{inv}}^{-2})$ such that
    \begin{align}
        \left|f_{-1}(x') - \frac{1}{x}\right| \leq \epsilon_{\text{inv}} + \frac{|x'-x|}{\epsilon_{\text{inv}}^2}, \quad \text{for all }x\in [\epsilon_{\text{inv}},\epsilon_{\text{inv}}^{-1}] \text{ and }x'\in \R.
    \end{align}
\end{lemma}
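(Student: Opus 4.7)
The plan is to approximate $1/x$ on $[\epsilon_{\text{inv}},\epsilon_{\text{inv}}^{-1}]$ by a short polynomial on a bounded rescaled interval, and then realise that polynomial through the product gadget of Lemma~\ref{lem:nnproduct}. Because $1/x$ has a singularity at $0$, a direct Chebyshev approximation on the full interval $[\epsilon_{\text{inv}},\epsilon_{\text{inv}}^{-1}]$ would require a polynomial of degree polynomial in $\epsilon_{\text{inv}}^{-1}$ rather than $\mathcal{O}(\log\epsilon_{\text{inv}}^{-1})$. To avoid this, I would first apply the dyadic factorisation $x=2^k y$ with $y\in[1,2]$ and $k\in\{-\lceil\log_2\epsilon_{\text{inv}}^{-1}\rceil,\ldots,\lceil\log_2\epsilon_{\text{inv}}^{-1}\rceil\}$, so that $1/x=2^{-k}/y$. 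The selector $x\mapsto(2^{-k},y)$ can be realised by $\mathcal{O}(\log\epsilon_{\text{inv}}^{-1})$ ReLU comparators of depth $\mathcal{O}(\log\log\epsilon_{\text{inv}}^{-1})$, and storing the $2^{-k}$ values forces weight magnitudes up to $\epsilon_{\text{inv}}^{-1}$; after the final multiplication with the approximation of $1/y$ this pins the weight bound at $B=\mathcal{O}(\epsilon_{\text{inv}}^{-2})$.

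On the reduced interval $[1,2]$ the function $1/y$ extends analytically to the disk $|y-3/2|<3/2$, so the truncated Taylor polynomial $P_N(y)=\sum_{j=0}^N(-1)^j(y-3/2)^j/(3/2)^{j+1}$ satisfies $\sup_{y\in[1,2]}|P_N(y)-1/y|\le 3^{-N}$. Choosing $N=\mathcal{O}(\log\epsilon_{\text{inv}}^{-1})$ makes this uniform error at most $\epsilon_{\text{inv}}/2$. I would compute the powers $(y-3/2)^j$ for $j=1,\ldots,N$ in parallel via Lemma~\ref{lem:nnproduct} with $d\le N$ and $C=\mathcal{O}(1)$; each product sub-network contributes depth $\mathcal{O}(\log N\cdot\log\epsilon_{\text{inv}}^{-1})$ and size $\mathcal{O}(N\log\epsilon_{\text{inv}}^{-1})$. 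Weighted summation with the Taylor coefficients, followed by one further multiplication by $2^{-k}$, produces $f_{-1}$; summing the per-term contributions yields the stated totals $L=\mathcal{O}(\log^2\epsilon_{\text{inv}}^{-1})$, $W=\mathcal{O}(\log^3\epsilon_{\text{inv}}^{-1})$ and $S=\mathcal{O}(\log^4\epsilon_{\text{inv}}^{-1})$.

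For the Lipschitz part of the conclusion, within each dyadic slab the network agrees with a polynomial of degree $\mathcal{O}(\log\epsilon_{\text{inv}}^{-1})$ in $x$ whose derivative matches that of $1/x$ up to $\mathcal{O}(\epsilon_{\text{inv}})$, and is therefore bounded by $\sup_{x\in[\epsilon_{\text{inv}},\epsilon_{\text{inv}}^{-1}]}x^{-2}=\epsilon_{\text{inv}}^{-2}$. The target inequality then follows by adding the approximation error at the reference point $x$ to the Lipschitz excursion from $x$ to $x'$.

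The main obstacle is keeping the dyadic selector compatible with the Lipschitz bound, because a hard-thresholded selector introduces small discontinuities at each dyadic boundary that would inflate the effective Lipschitz constant of $f_{-1}$ well beyond $\epsilon_{\text{inv}}^{-2}$. I would smooth each boundary with an overlapping triangular partition-of-unity whose window width is small compared to $\epsilon_{\text{inv}}^2$, so that the extra Lipschitz contribution stays inside the $\epsilon_{\text{inv}}^{-2}$ budget while the smoothing layer adds only $\mathcal{O}(\log\epsilon_{\text{inv}}^{-1})$ further units. Verifying that the weight magnitudes, depth, and sparsity budgets are all preserved after this smoothing is the delicate bookkeeping step, and it is essentially the reason Lemma~\ref{lem:nnreciprocal} is more subtle than the product gadget of Lemma~\ref{lem:nnproduct}.
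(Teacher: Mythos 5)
First, note that the paper does not prove this lemma at all: it is imported verbatim from \cite{fu2024unveil} (see the sentence introducing Lemmas \ref{lem:nnproduct} and \ref{lem:nnreciprocal}), so any self-contained construction like yours is by definition a different route. Your architecture (dyadic range reduction $x=2^k y$, a degree-$\mathcal{O}(\log\epsilon_{\text{inv}}^{-1})$ Taylor polynomial for $1/y$ on $[1,2]$, powers built from the product gadget, and a blended selector) is a sensible way to get polylogarithmic depth, width and sparsity, and your bookkeeping for $L,W,S$ lands in the right ballpark.

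However, there is a genuine gap in the part of the conclusion that actually matters downstream: the inequality must hold for \emph{every} $x'\in\mathbb{R}$ with the exact coefficient $1$ on $|x'-x|/\epsilon_{\text{inv}}^{2}$. Your argument derives this from a global $\epsilon_{\text{inv}}^{-2}$-Lipschitz bound on the network itself, but that bound is never established and your construction does not obviously deliver it. Lemma \ref{lem:nnproduct} controls only the sup-norm error of the product sub-networks, not their Lipschitz constants, so "the network agrees with a polynomial whose derivative matches that of $1/x$" conflates the exact polynomial with its approximate ReLU realization; the behavior of the blended selector for $x'$ outside the dyadic range (where the value must drop from roughly $\epsilon_{\text{inv}}^{-1}$ to whatever the network outputs off-range) is unaddressed; and your window-width condition points the wrong way: blending two branches that differ by $\Theta(\epsilon_{\text{inv}})$ over a window of width $w$ adds a Lipschitz term of order $\epsilon_{\text{inv}}/w$ and hat-function slopes of order $1/w$, so taking $w$ "small compared to $\epsilon_{\text{inv}}^{2}$" \emph{inflates} both the Lipschitz constant and the weight bound $B=\mathcal{O}(\epsilon_{\text{inv}}^{-2})$ rather than protecting them; you need $w$ bounded \emph{below} (e.g.\ $w\gtrsim\epsilon_{\text{inv}}^{2}$). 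Even with all of this repaired you would only get a constant times $|x'-x|/\epsilon_{\text{inv}}^{2}$. The standard way to close the gap is to pre-clip the input with two ReLU layers and compare $f_{-1}(x')$ to $g(x'):=1/\mathrm{clip}(x';\epsilon_{\text{inv}},\epsilon_{\text{inv}}^{-1})$, which is exactly $\epsilon_{\text{inv}}^{-2}$-Lipschitz on $\mathbb{R}$ and equals $1/x$ at $x\in[\epsilon_{\text{inv}},\epsilon_{\text{inv}}^{-1}]$; then a uniform bound $\sup_{x'}|f_{-1}(x')-g(x')|\le\epsilon_{\text{inv}}$ plus the triangle inequality gives the stated estimate with no Lipschitz control on the network needed at all.
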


Additionally, we state the smoothness of $\hat{\mu}$ due to Gaussian convolution by the following lemma.

\begin{lemma}\label{lem:holder}
    (Smoothness of Gaussian convolution) Given Gaussian transition kernel $k_\sigma(x)=Z_0 e^{-\frac{\|x\|^2}{2\sigma^2}}$, $\sigma>0$, where $Z_0$ satisfies $\int k_\sigma(x) dx=1$. For any probability distribution $\mu(z)$, define $\hat{\mu}(x)=\int k_\sigma(x-z)\mu(z)dz$. For any $\beta\in\mathbb{N}$, the norm of $\beta$-th derivative of $\hat{\mu}$ is bounded by 
    $\mathcal{O}(\sigma^{-(\beta+d)}\beta^{\frac{\beta}{2}})$, where $\mathcal{O}$ hides the other polynomial factors.
\end{lemma}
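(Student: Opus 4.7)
The plan is to exchange differentiation and integration so that $\nabla^\beta \hat{\mu}(x) = \int \nabla^\beta k_\sigma(x-z)\,\mu(z)\,dz$, which reduces the problem to a uniform bound on the operator norm $\|\nabla^\beta k_\sigma(y)\|_{op}$ for $y\in\R^d$. Since $\nabla^\beta k_\sigma(y)$ is a symmetric tensor of order $\beta$, I would use the classical fact that its operator norm is attained on the diagonal,
\[
\|\nabla^\beta k_\sigma(y)\|_{op}=\sup_{\|v\|=1}\bigl|(v\cdot\nabla)^\beta k_\sigma(y)\bigr|,
\]
which collapses the multi-index combinatorics to a one-dimensional directional derivative.

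For each unit $v\in\R^d$, I would compute the directional derivative by writing $\|y+tv\|^2=\|y\|^2+2t(v\cdot y)+t^2$ and completing the square in $t$. Differentiating $\beta$ times at $t=0$ yields
\[
(v\cdot\nabla)^\beta e^{-\|y\|^2/(2\sigma^2)} = (-\sigma)^{-\beta}\,He_\beta(\sigma^{-1}v\cdot y)\,e^{-\|y\|^2/(2\sigma^2)},
\]
where $He_\beta$ is the probabilist's Hermite polynomial of degree $\beta$. Since $(v\cdot y)^2\le \|y\|^2$ when $\|v\|=1$, the Gaussian factor dominates the exponential part of the Hermite weight, giving the pointwise bound
\[
\bigl|(v\cdot\nabla)^\beta k_\sigma(y)\bigr| \le Z_0\,\sigma^{-\beta}\,\sup_{s\in\R}\bigl|He_\beta(s)\,e^{-s^2/2}\bigr|.
\]
Combined with $Z_0=(2\pi\sigma^2)^{-d/2}=\mathcal{O}(\sigma^{-d})$, the problem reduces entirely to a classical one-variable Hermite estimate.

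The main analytic ingredient is the uniform bound $\sup_{s\in\R}|He_\beta(s)\,e^{-s^2/2}|\lesssim \sqrt{\beta!}$, which I would derive either from the generating function $\sum_{n\ge 0} He_n(s)\,t^n/n! = e^{ts-t^2/2}$ and a saddle-point argument, or from the Cram\'er--Plancherel estimate for Hermite functions. Stirling's formula then gives $\sqrt{\beta!}\lesssim \beta^{\beta/2}$, and assembling all factors yields $\|\nabla^\beta k_\sigma(y)\|_{op}\lesssim \sigma^{-(\beta+d)}\beta^{\beta/2}$ uniformly in $y$. Since $\mu$ is a probability measure, the triangle inequality $\|\nabla^\beta\hat{\mu}(x)\|_{op}\le\int\|\nabla^\beta k_\sigma(x-z)\|_{op}\,d\mu(z)$ delivers the stated bound.

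The step I expect to be the most delicate is the reduction of the tensor operator norm to a supremum over unit directional derivatives, which relies on Banach's theorem for symmetric multilinear forms (equivalently, on the polarization identity for symmetric tensors); stated loosely this can hide a $\beta$-dependent constant, so I would be careful to verify that only an absolute constant appears. The Hermite estimate and the Stirling step are standard, and the rest is routine manipulation of exponentials.
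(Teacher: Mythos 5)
Your proof is correct, and it rests on the same analytic core as the paper's: the one-dimensional Hermite estimate $\sup_s|He_\beta(s)e^{-s^2/2}|\lesssim\sqrt{\beta!}$ followed by Stirling, applied after differentiating under the integral and using that $\mu$ is a probability measure. The difference is in how the multidimensional derivative is reduced to one dimension. The paper exploits the separability of the Gaussian: it writes each mixed partial $\partial^{n_1}_{x_1}\cdots\partial^{n_d}_{x_d}\hat{\mu}$ as an integral of a product of one-dimensional Gaussian derivatives, bounds each factor by $\mathcal{O}(\sigma^{-n_i}(n_i!)^{1/2})$ (citing the Hermite bound), and uses $\prod_i(n_i!)^{1/2}\le(\beta!)^{1/2}$; this controls the tensor entrywise. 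You instead bound the operator norm of the symmetric tensor $\nabla^\beta k_\sigma(y)$ directly, reducing it to diagonal directional derivatives and then identifying $(v\cdot\nabla)^\beta k_\sigma$ with a single Hermite polynomial via the generating function, using $(v\cdot y)^2\le\|y\|^2$ to absorb the Gaussian weight. Your worry about the reduction step is resolvable: for symmetric multilinear forms on a Euclidean (Hilbert) space, Banach's theorem gives equality of the operator norm and the supremum on the diagonal with constant exactly $1$, so no $\beta$-dependent factor enters (the polarization constant $\beta^\beta/\beta!$ only appears for general Banach spaces). What your route buys is that it bounds the operator norm—the norm the lemma's statement actually refers to—without passing through entrywise bounds, and with a single one-dimensional Hermite estimate rather than a product of them; the paper's route is slightly more elementary in that it needs no fact about symmetric tensors, at the cost of controlling entries rather than the operator norm directly.
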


\begin{proof}
For $\sum_{i=1}^d n_i=\beta$, by the derivative property of Gaussian convolution, we have
\begin{equation}
\begin{aligned}
\frac{\partial^{\beta}}{\partial x_1^{n_1}\cdots x_d^{n_d}}\hat{\mu}=Z_0\int \mu(z) \prod_{i=1}^d \frac{\partial^{n_i} }{\partial x_i^{n_i}}e^{\frac{-(x_i-z_i)^2}{2\sigma^2}}d z
\end{aligned}
\end{equation}

The derivative of Gaussian distribution is related to the Hermite polynomials. From \cite{kra2004hermite}, we have $|\frac{\partial^{n_i} }{\partial x_i^{n_i}}e^{\frac{-(x_i-z_i)^2}{2\sigma^2}}|\le \mathcal{O}(\sigma^{-n_i}(n_i!)^{\frac{1}{2}})$ for any $x_i, z_i$. Hence, $|\prod_{i=1}^d \frac{\partial^{n_i} }{\partial x_i^{n_i}}e^{\frac{-(x_i-z_i)^2}{2\sigma^2}}|\le \mathcal{O}(\sigma^{-\beta}\prod_{i=1}^d (n_i !)^{\frac{1}{2}})\le \mathcal{O}(\sigma^{-\beta}(\beta !)^{\frac{1}{2}})\le \mathcal{O}(\sigma^{-\beta}\beta^{\frac{\beta}{2}})$. Since $Z_0\le \mathcal{O}(\sigma^{-d})$, then $|\frac{\partial^{\beta}}{\partial x_1^{n_1}\cdots x_d^{n_d}}\hat{\mu}|\le \mathcal{O}(\sigma^{-(\beta+d)}\beta^{\frac{\beta}{2}})$.

\end{proof}

\subsubsection{Proof of Theorem \ref{thm::nn approx} and Theorem \ref{thm::nn approx2}}\label{appd:d_3_2}

We prove a general version of Theorem \ref{thm::nn approx} by extending the exponent $\alpha$ in Assumption \ref{assp:moment} to $\alpha\le p$, where the constant $p> 2\in \mathbb{N}_{+}$. Theorem \ref{thm::nn approx} is the special case when $p=5$.

\begin{theorem}\label{appendix:thm::nn approx}
Under generalized version of Assumption \ref{assp:moment} (where $\alpha\le 5$ is replaced with $\alpha\le p$, the constant $p> 2\in \mathbb{N}_{+}$), for sufficiently large $N$,
there exists $f_{\gamma} \in \mathcal{S}(M,W,B,L,S)$ such that 
\begin{align*}
    \ell_{sm}(f_{\gamma})=\mathcal{O}(\sigma^{-(p+2-o(1))}N^{-\frac{(p-2-o(1))}{d}}).
\end{align*}
The hyperparameters in the ReLU neural network class $\mathcal{S}$ satisfy
\begin{align*}
    &M = \mathcal{O}(\sigma^{-(1-o(1))}N^{\frac{(1-o(1))}{d}}),~
     W = \mathcal{O}(N),~
     B =\mathcal{O}(\sigma^{(2d+3p-o(1))}N^{\frac{(2d+3p-o(1))}{d}}),~\\
    &L = \mathcal{O}(\log^2 (\sigma^{-1})\log^2 N),~
     S= \mathcal{O}\left(\log(\sigma^{-1})N\right).
\end{align*}
where $\mathcal{O}$ hides the other factors that are not related to $N$ or $\sigma$.
\end{theorem}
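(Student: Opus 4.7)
The plan is to write the score as a quotient $\nabla \log \hat{\mu}(x) = \nabla \hat{\mu}(x)/\hat{\mu}(x)$, approximate numerator and denominator separately on a bounded box by ReLU subnetworks whose rate exploits the infinite smoothness of the Gaussian convolution, and then compose them through the reciprocal and product constructions of Lemmas \ref{lem:nnreciprocal} and \ref{lem:nnproduct}. Everything outside the box is handled purely by moments, not by approximation.

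First, I fix a truncation radius $R>0$ and write $\Omega_R=[-R,R]^d$. Splitting
$\ell_{sm}(f_\gamma)=\int_{\Omega_R}\|f_\gamma-\nabla\log\hat\mu\|^2 d\hat\mu+\int_{\Omega_R^c}\|f_\gamma-\nabla\log\hat\mu\|^2 d\hat\mu$,
I control the tail using Lemma \ref{sec:lem_convolution} with $k=1$, which gives $\|\nabla\log\hat\mu(x)\|^2\lesssim \sigma^{-4}(\|x\|^2+1)$. Since $\hat\mu$ inherits the $p$-th moment bound of $\mu$ up to Gaussian smoothing, Markov's inequality yields $\int_{\Omega_R^c}(\|x\|^2+1)\,d\hat\mu=\mathcal{O}(R^{-(p-2)})$. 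Choosing $f_\gamma$ to be bounded by the clipping level $M$ of the network class on the complement then forces the tail contribution to be $\mathcal{O}\bigl((M^2+\sigma^{-4})\,R^{-(p-2)}\bigr)$.

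On $\Omega_R$ I need a pointwise lower bound for the denominator. The estimate \eqref{appendix:denominator} from the proof of Lemma \ref{sec:lem_convolution} yields $\hat\mu(x)\geq c\sigma^{-d}\exp\bigl(-c'(R+R_0)^2/\sigma^2\bigr)=:\epsilon_{\text{inv}}$ uniformly on $\Omega_R$, where $R_0$ depends only on the second moment. Lemma \ref{lem:holder} bounds all $\beta$-th order partial derivatives of $\hat\mu$ and each component of $\nabla\hat\mu$ by $\mathcal{O}(\sigma^{-(\beta+d)}\beta^{\beta/2})$, so a standard Yarotsky-type ReLU approximation for $C^\beta$ functions on a cube (with freely chosen $\beta$) gives subnetworks $f_{\text{den}}$ and $f_{\text{num}}$, each of size $\mathcal{O}(N)$, with $L^\infty(\Omega_R)$ error of order $R^\beta\sigma^{-(\beta+d)}\beta^{\beta/2}N^{-\beta/d}$. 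Feeding $f_{\text{den}}$ into the reciprocal network of Lemma \ref{lem:nnreciprocal} with tolerance $\epsilon_{\text{inv}}$ and then combining with $f_{\text{num}}$ via the product network of Lemma \ref{lem:nnproduct} produces the target $f_\gamma$, with its output bounded on the tail by the same clipping.

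The main obstacle is balancing three competing effects: the tail term $R^{-(p-2)}$, the blow-up factor $\epsilon_{\text{inv}}^{-2}\asymp \sigma^{2d}\exp(2c'R^2/\sigma^2)$ from the reciprocal bound, and the factor $\beta^{\beta/2}$ from Lemma \ref{lem:holder}. I take $R$ so that $\exp(R^2/\sigma^2)$ is at most polynomial in $N$, and $\beta=\Theta(\log N/\log\log N)$ so that $\beta^{\beta/2}=N^{o(1)}$; then matching the remaining polynomial approximation term against $R^{-(p-2)}$ produces the advertised rate $\mathcal{O}(\sigma^{-(p+2-o(1))}N^{-(p-2-o(1))/d})$. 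The final hyperparameters are read off from the three constituent subnetworks: the Yarotsky blocks contribute the width $W=\mathcal{O}(N)$ and the sparsity $S=\mathcal{O}(\log(\sigma^{-1})N)$, the reciprocal construction dominates the depth $L=\mathcal{O}(\log^2(\sigma^{-1})\log^2 N)$, and the weight bound $B$ comes from combining the $C^d$ factor of Lemma \ref{lem:nnproduct} with the $\epsilon_{\text{inv}}^{-2}$ factor of Lemma \ref{lem:nnreciprocal}, while $M$ is the $L^\infty(\Omega_R)$ bound on $\nabla\log\hat\mu$ inherited from Lemma \ref{sec:lem_convolution}.
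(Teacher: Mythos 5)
There is a genuine gap in how you handle the denominator on the box, and it breaks the claimed rate. You lower-bound $\hat{\mu}$ uniformly on $\Omega_R$ by $\epsilon_{\text{inv}}\asymp \sigma^{-d}\exp(-c'(R+R_0)^2/\sigma^2)$ and feed this tolerance into Lemma \ref{lem:nnreciprocal}. That lemma amplifies the approximation error of the denominator by $\epsilon_{\text{inv}}^{-2}$ and requires weights of size $B=\mathcal{O}(\epsilon_{\text{inv}}^{-2})$, i.e.\ a factor $\exp(2c'R^2/\sigma^2)$. To achieve the advertised error $\mathcal{O}(\sigma^{-(p+2-o(1))}N^{-(p-2-o(1))/d})$ the tail term $R^{-(p-2)}$ forces $R$ to grow polynomially in $N$ (indeed $R\asymp \sigma^{1+o(1)}N^{(1-o(1))/d}$, consistent with $M=\mathcal{O}(\sigma^{-2}R)$ in the statement), and then $\exp(R^2/\sigma^2)$ is stretched-exponential in $N$: no choice of $\beta$ (your $\beta=\Theta(\log N/\log\log N)$ only gives quasi-polynomial decay $N^{-\beta/d}$) can beat that amplification, and $B$ would far exceed the polynomial bound $\mathcal{O}(\sigma^{(2d+3p-o(1))}N^{(2d+3p-o(1))/d})$ claimed in the theorem. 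Conversely, your proposed balancing, ``take $R$ so that $\exp(R^2/\sigma^2)$ is at most polynomial in $N$,'' means $R=\mathcal{O}(\sigma\sqrt{\log N})$, and then the tail contribution $R^{-(p-2)}$ decays only polylogarithmically in $N$, not like $N^{-(p-2-o(1))/d}$. So the two requirements you try to reconcile are incompatible, and the stated rate does not follow from your construction.

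The missing idea, which is how the paper's proof proceeds, is to decouple the reciprocal's tolerance from the true (exponentially small) minimum of $\hat{\mu}$ on the box by introducing a third region: inside $\Omega_R$ one separately discards the set where $\hat{\mu}(x)<\epsilon_{\text{low}}$, whose contribution is bounded by $(dM^2R^{d}+\sigma^{-4}R^{d+2})\epsilon_{\text{low}}$ with no approximation needed, and on the remaining high-density set one clips the polynomial approximation of the denominator at $\epsilon_{\text{low}}$ (i.e.\ works with $\max\{f_1,\epsilon_{\text{low}}\}$) and runs the reciprocal construction with tolerance tied to $\epsilon_{\text{low}}$. Choosing $\epsilon_{\text{low}}$ proportional to the Taylor/ReLU approximation error of $\hat{\mu}$ (which is only polynomially small in $N$ for the constant smoothness level $\beta=\mathcal{O}((d+p)^2)$ used in the paper) keeps the reciprocal's amplification and the weight bound $B$ polynomial in $N$ and $\sigma^{-1}$, while the low-density term is absorbed into the final rate. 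Your tail treatment, smoothness bound via Lemma \ref{lem:holder}, and the product/reciprocal composition are otherwise in the same spirit as the paper, but without this low-density cut the argument as written does not deliver the theorem.
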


\begin{proof}

For $R>0$, we have
\begin{equation}\label{app: three_terms}
\begin{aligned}
&\int_{\R^d}\|f_{\gamma}(x)-\nabla\log\hat{\mu}_t(x)\|_2^2 \hat{\mu}_t(x)d x \\
=& \int_{\|x\|_{2}> R}\|f_{\gamma}(x)-\nabla\log\hat{\mu}_t(x)\|_2^2 \hat{\mu}_t(x)d x  \\
+&\int_{\|x\|_{2}\le R} \textbf{1}\{|\hat{\mu}_t(x)|<\epsilon_{\rm low}\} \|f_{\gamma}(x)-\nabla\log\hat{\mu}_t(x)\|_2^2 \hat{\mu}_t(x)d x  \\
+&\int_{\|x\|_{2}\le R} \textbf{1}\{|\hat{\mu}_t(x)|\ge \epsilon_{\rm low}\} \|f_{\gamma}(x)-\nabla\log\hat{\mu}_t(x)\|_2^2 \hat{\mu}_t(x)d x  \\
:=&A_1+A_2+A_3
\end{aligned}
\end{equation}

\begin{enumerate}[label=\textbf{Step \arabic*.}]

\item 
For term $A_1$, we use the assumption that the $p$-th moment of $\mu_t$ is finite:
\begin{equation}
\int \|z\|^p d\mu_t(z)\le m_{p}<\infty.
\end{equation}

Denote the fifth moment of the Gaussian distribution $\E_{w\sim \mathcal{N}(0,\sigma^2\mathbf{I})} \|w\|^{p}:= \sigma_{p}$.

We have 
\begin{equation}
\int \|x\|^{p} d\hat{\mu}_t(x)\le 16(m_{p}+\sigma_{p})<\infty.
\end{equation}

Then by Chebyshev inequality:
\begin{equation}
    \int_{\|x\|_2\ge R} d \hat{\mu}_t(x) \lesssim R^{-p}
\end{equation}

Since $\int \|z\|^2 d\mu_t(z)\le m_2<\infty$, note that
\begin{equation}
    \nabla\log \hat{\mu}_t(x)=-\frac{1}{\sigma^2}\frac{\int (x-y) k_\sigma (x-y) d \mu_t(y)}{\int k_{\sigma}(x-y)d\mu_t(y)}
\end{equation}

From Lemma \ref{lem:convolution}, we have $\|\nabla\log \hat{\mu}_t(x)\|\le \sigma^{-2}(C_0\|x\|+M_0)$.

Then by Chebyshev inequality:
\begin{equation}
    \int_{\|x\|_{2}\ge R} \|\nabla \log \hat{\mu}_t(x)\|_2^2 \hat{\mu}_t(x) d x\lesssim \int_{\|x\|_{2}\ge R} (\sigma^{-4}\|x\|^2_{2}+1) \hat{\mu}_t(x) d x\lesssim  \sigma^{-4}R^{-(p-2)}
\end{equation}

Summing up the above, and plugging in the upper bound of the neural network $M$, we obtain the upper bound for $A_1$:
\begin{equation}
\begin{aligned}
    A_1&\le 2\int_{\|x\|_{2}> R}\|f_{\gamma}(x)\|_2^2 \hat{\mu}_t(x)d x +2\int_{\|x\|_{2}> R} \|\nabla\log\hat{\mu}_t(x)\|_2^2 \hat{\mu}_t(x)d x \\
    &\lesssim d M^2 \int_{\|x\|_{2}> R} \hat{\mu}_t(x)d x  + \sigma^{-4}R^{-(p-2)}\\
    &\lesssim d M^2 R^{-p} + \sigma^{-4}R^{-(p-2)}
\end{aligned}
\end{equation}

\item 
As for $A_2$, using $\|\nabla\log \hat{\mu}_t(x)\|\le C_0\|x\|+M_0$, we obtain
\begin{equation}
\begin{aligned} &\int_{\|x\|_{2}\le R} \textbf{1}\{|\hat{\mu}_t(x)|<\epsilon_{\rm low}\} \hat{\mu}_t(x)dx \lesssim R^{d}\epsilon_{\rm low},
    \\ &\int_{\|x\|_{2}\le R} \textbf{1}\{|\hat{\mu}_t(x)|<\epsilon_{\rm low}\}\|\nabla\log\hat{\mu}_t(x)\|^2\hat{\mu}_t(x)dx \lesssim \sigma^{-4}R^{d+2}\epsilon_{\rm low},
    \end{aligned}
\end{equation}

Then 
\begin{equation}
\begin{aligned}
    A_2&\le 2\int_{\|x\|_{2}\le R} \textbf{1}\{|\hat{\mu}_t(x)|<\epsilon_{\rm low}\} (dM^2+\|\nabla\log\hat{\mu}_t(x)\|^2) \hat{\mu}_t(x)d x \\
    &\lesssim (d M^2 R^{d}+ \sigma^{-4}R^{d+2})\epsilon_{\rm low}
\end{aligned}
\end{equation}

\item  
Finally, for $A_3$, since $\|x\|_2\le R$ induces $\|x\|_{\infty}\le R$, we first propose the local Taylor polynomial approximation for $\|x\|_{\infty}\le R$, then calculate the complexity of the neural network to approximate the above Taylor polynomial approximation.

By reparameterization, we denote $f(x)=\hat{\mu}_t(2R(x-\frac{1}{2})), x\in[0,1]^{d}$, then $\hat{\mu}_t(x)=f(\frac{x}{2R}+\frac{1}{2}) $.

Then from Lemma \ref{lem:holder}, $\|f\|_{\mathcal{H}^{\beta}([0,1]^d)}\le B_0 (2R)^{\beta}$, where $B_0= \mathcal{O}(\sigma^{-(\beta+d)}\beta^{\frac{\beta}{2}})$.

We use Taylor expansion $q$ to approximate $f$:
\begin{equation}
    q(x)=\sum_{v\in[N]^d}\sum_{\|n\|_1\le \beta}\frac{1}{n!}\frac{\partial^n f}{\partial x^n}\Bigg | _{x= \frac{v}{N}} \textbf{1}\{x\in(\frac{v-1}{N},\frac{v}{N}]\}\prod_{i=1}^{d} (x_i-\frac{v_i}{N})^{n_i}
\end{equation}

Since for any $x\in(\frac{v-1}{N},\frac{v}{N}]$, $\theta\in[0,1]$:
\begin{equation}
\begin{aligned}
    &\Bigg |\sum_{\|n\|_1= \beta}\frac{1}{n!}\frac{\partial^n f}{\partial x^n}\Bigg | _{\frac{v}{N}} \prod_{i=1}^{d} (x_i-\frac{v_i}{N})^{n_i}-\sum_{\|n\|_1= \beta}\frac{1}{n!}\frac{\partial^n f}{\partial x^n}\Bigg | _{(1-\theta)\frac{v}{N}+\theta x} \prod_{i=1}^{d} (x_i-\frac{v_i}{N})^{n_i} \Bigg|\\
    \le& 2B_0 (2R)^{\beta} N^{-\beta} \frac{d^{\beta}}{\beta !}
\end{aligned}    
\end{equation}

Then by Lagrangian mean-value theorem, we have
\begin{equation}
    |f(x)-q(x)|\lesssim B_0 (2R)^{\beta} N^{-\beta} \frac{d^{\beta}}{\beta !}
\end{equation}

We denote the approximation polynomial for $\hat{\mu}_t(x)$ is $f_1(x)=q(\frac{x}{2R}+\frac{1}{2})$, and its corresponding RELU neural network is $f_1^{RELU}$. 
For any $\beta_0\in [1,\beta]$, consider the case $\|n\|_1=\beta_0$. Assume the approximation error of a single term $ \prod_{i=1}^{d} (x_i-\frac{v_i}{N})^{n_i}$ is $\epsilon_{\text{product}}$ and the corresponding complexity is $\kappa$ (here $\kappa$ denotes the complexity like $S$ that add up in neural network summation like Lemma F.3 in \cite{fu2024unveil}), then the error of $\sum_{v\in[N]^d}\sum_{\|n\|_1= \beta_0}\frac{1}{n!}\frac{\partial^n f}{\partial x^n}\Bigg | _{x= \frac{v}{N}} \textbf{1}\{x\in(\frac{v-1}{N},\frac{v}{N}]\}\prod_{i=1}^{d} (x_i-\frac{v_i}{N})^{n_i}$ is less than $\beta_0 ! d^{\beta_0}R^{\beta_0}\sigma^{-(d+\beta_0)}\beta_0^{\frac{\beta_0}{2}}N^{d}\epsilon_{\text{product}}$. The total polynomial error is less than $\sum_{\beta_0=1}^{\beta}\beta_0! d^{\beta_0}R^{\beta_0}\sigma^{-(d+\beta_0)}\beta_0^{\frac{\beta_0}{2}}N^{d}\epsilon_{\text{product}}\le (\beta+1)! d^{\beta}R^{\beta}\sigma^{-(d+\beta)}\beta^{\frac{\beta}{2}}N^{d}\epsilon_{\text{product}}\le \epsilon_{\text{poly}}$ and the total complexity is
\begin{equation}\label{append:taylorsumcomplexity}    N^d\sum_{\beta_0=1}^{\beta}\binom{d+\beta_0-1}{d-1} \kappa\le N^d \binom{d+\beta}{d}\kappa\le \mathcal{O}(N^d\beta^d\kappa).
\end{equation}
Since $\|n\|_1\le \beta$ and $|x_i-\frac{v_i}{N}|\le \frac{1}{N}$. From Lemma \ref{lem:nnproduct}, we have the complexity of approximating the single term $ \prod_{i=1}^{d} (x_i-\frac{v_i}{N})^{n_i}$ is not greater than:
\begin{equation}\label{append:taylorsinglecomplexity}
    L = \mathcal{O}(\log \beta(\log \epsilon_{\text{product}}^{-1}-\beta\log N)), W = 48\beta, S = \mathcal{O}(\beta\log \epsilon_{\text{product}}^{-1}-\beta\log N), B=\mathcal{O}(1)
\end{equation}

The structure is the same for approximating $\sigma \nabla \hat{\mu}_t$ by separately approximating each dimension $[\sigma \nabla \hat{\mu}_t]_i$, the only difference is that $\beta$ changes to $\beta-1$, which does not affect the order of the complexity for $\beta>1$. Denote the approximation polynomial for $\sigma \nabla \hat{\mu}_t$ is $f_2$, and its corresponding RELU neural network is $f_2^{RELU}$.

From Lemma \ref{lem:nnreciprocal}, we have the neural network complexity to approximate the reciprocal of $f_1^{RELU}$. There exists the inverse $f_{-1} \in \mathcal{S}( W, B, L, S)$ with $L= \mathcal{O}(\log^2 \epsilon_{\text{inv}}^{-1}), W = \mathcal{O}(\log^3 \epsilon_{\text{inv}}^{-1}), S = \mathcal{O}(\log^4 \epsilon_{\text{inv}}^{-1})$, and $B= \mathcal{O}(\epsilon_{\text{inv}}^{-2})$ such that
\begin{equation}\label{append:inversecomplexity}
        \left|f_{-1}(f_1^{RELU}) - \frac{1}{f_1}\right| \leq \epsilon_{\text{inv}} + \frac{|f_1^{RELU}-f_1|}{\epsilon_{\text{inv}}^2}, \quad \text{for all }f_1\in [\epsilon_{\text{inv}},\epsilon_{\text{inv}}^{-1}] \text{ and }f_1^{RELU}\in \R.
    \end{equation}

Since the deviation between $\hat{\mu}_t$ and $f_1$ is upper bounded
by $B_0 (2R)^{\beta} N^{-\beta} \frac{d^{\beta}}{\beta !}$, then
there exists a constant $C_{\text{low}}>1$ that when $\epsilon_{\text{low}}:= C_{\text{low}} B_0 (2R)^{\beta} N^{-\beta} \frac{d^{\beta}}{\beta !}$,
we have $f_1(x)>\frac{1}{2}\hat{\mu}_t(x)$.

Denote clipped $f_{1,\text{clip}}=\max\{f_1,\epsilon_{\text{low}}\}$ (note that this does not affect the approximation effect for $ \hat{\mu}_t>\epsilon_{\text{low}}$), and the score estimator $f_3:=\min \{\frac{f_2}{\sigma f_{1,\text{clip}}}, \frac{1}{\sigma^2}\mathcal{O}(R)\}$, note that $\|\nabla\log\hat{\mu}_t\|_{\infty}\le \mathcal{O}(\sigma^{-2}R)$, hence $M=\mathcal{O}(\sigma^{-2}R)$.

We have
\begin{align}
    |\nabla\log\hat{\mu}_{t,1}-f_3 |&\le |\nabla\log\hat{\mu}_{t,1}-\frac{f_2}{\sigma f_{1,\text{clip}}} |\\
    &=|\frac{[\nabla\hat{\mu}_t]_1}{\hat{\mu}_t}-\frac{[\nabla\hat{\mu}_t]_1}{f_{1,\text{clip}}}+\frac{[\nabla\hat{\mu}_t]_1}{f_{1,\text{clip}}}-\frac{f_2}{\sigma f_{1,\text{clip}}}|\\
    &\le |[\nabla\hat{\mu}_t]_1||\frac{1}{\hat{\mu}_t}-\frac{1}{f_{1,\text{clip}}}|+\frac{|\sigma [\nabla\hat{\mu}_t]_1-f_2|}{\sigma f_{1,\text{clip}}}
\end{align}

By $\|\nabla\log\hat{\mu}_t\|_{\infty}\le \mathcal{O}(\sigma^{-2}R)$ we have $|[\nabla\hat{\mu}_t]_1|\le \mathcal{O}(\sigma^{-2}R)\hat{\mu}_t$. 

Then
\begin{align}
    |\nabla\log\hat{\mu}_{t,1}-f_3 |&\le \mathcal{O}(\sigma^{-2}R)\hat{\mu}_t|\frac{1}{\hat{\mu}_t}-\frac{1}{f_{1,\text{clip}}}|+\frac{|\sigma [\nabla\hat{\mu}_t]_1-f_2|}{\sigma f_{1,\text{clip}}}\\
    &=\frac{1}{f_{1,\text{clip}}} [\mathcal{O}(\sigma^{-2}R)|\hat{\mu}_t-f_{1,\text{clip}}|+\frac{|\sigma [\nabla\hat{\mu}_t]_1-f_2|}{\sigma}]\\
    &\lesssim \frac{1}{\hat{\mu}_t} [\mathcal{O}(\sigma^{-2}R)|\hat{\mu}_t-f_{1,\text{clip}}|+\frac{|\sigma [\nabla\hat{\mu}_t]_1-f_2|}{\sigma}]
\end{align}

We have
\begin{align}
    |\nabla\log\hat{\mu}_{t,1}-f_3 |\hat{\mu}_t \lesssim  \sigma^{-2} B_0 (2R)^{\beta+1} N^{-\beta} \frac{d^{\beta}}{\beta !}
\end{align}

We set
\begin{align}
    |f_3 -f_3^{RELU} |\lesssim  (B_0 (2R)^{\beta} N^{-\beta} \frac{d^{\beta}}{\beta !}R^d)^{\frac{1}{2}}
\end{align}

Let $f_{\gamma}=f_3^{RELU}$ on $\|x\|_{2}\le R$, then $A_3$:
\begin{equation}
    \begin{aligned}
        &\int_{\|x\|_{2}\le R} \textbf{1}\{|\hat{\mu}_t(x)|\ge \epsilon_{\rm low}\} \|f_{\gamma}(x)-\nabla\log\hat{\mu}_t(x)\|_2^2 \hat{\mu}_t(x)d x  \\
        \lesssim&\int_{\|x\|_{2}\le R} \textbf{1}\{|\hat{\mu}_t(x)|\ge \epsilon_{\rm low}\} \|\nabla\log\hat{\mu}_t(x)-f_3(x)\|_2^2 \hat{\mu}_t(x)d x  \\
        &+\int_{\|x\|_{2}\le R} \textbf{1}\{|\hat{\mu}_t(x)|\ge \epsilon_{\rm low}\} \|f_3(x)-f_{\gamma}(x)\|_2^2 \hat{\mu}_t(x)d x  \\
        \lesssim &\frac{1}{\epsilon_{\text{low}}} \sigma^{-4}B_0^2 (2R)^{2\beta+2} N^{-2\beta} \frac{d^{2\beta}}{(\beta !)^2} R^{d}+B_0 (2R)^{\beta} N^{-\beta} \frac{d^{\beta}}{\beta !} R^{d}\\
        \lesssim &\sigma^{-4}B_0 (2R)^{\beta+2} N^{-\beta} \frac{d^{\beta}}{\beta !} R^{d}
    \end{aligned}
\end{equation}

The last $\lesssim$ is obtained by plugging in $\epsilon_{\text{low}}:= C_{\text{low}} B_0 (2R)^{\beta} N^{-\beta} \frac{d^{\beta}}{\beta !}$.

\item 
\textbf{(summing up error terms)}

We have $M=\mathcal{O}(\sigma^{-2}R)$,
the overall approximation error is $\mathcal{O}(d\sigma^{-4}(R^{-(p-2)}+R^{d+\beta+2}(\frac{N}{2})^{-\beta}B_0\frac{d^{\beta}}{\beta !}))$, the optimal $R=\mathcal{O}((\frac{p-2}{d+\beta+2})^{\frac{1}{d+\beta+p}}(\frac{N}{2})^{\frac{\beta}{d+\beta+p}}(B_0\frac{d^{\beta}}{\beta !})^{-\frac{1}{d+\beta+p}})$, the optimal error is 
$\\ \mathcal{O}(\sigma^{-\frac{(d+\beta)(p+2)+4p}{d+\beta+p}} N^{-\frac{(p-2)\beta}{d+\beta+p}})$.
We set $\beta=\mathcal{O}((d+p)^2)$.
The ReLU approximation error $|f_3 -f_3^{RELU} |\lesssim  (B_0 (2R)^{\beta} N^{-\beta} \frac{d^{\beta}}{\beta !}R^d)^{\frac{1}{2}}=\mathcal{O}(\sigma^{-\frac{(d+\beta)p}{2(d+\beta+p)}}N^{-\frac{p\beta}{2(d+\beta+p)}})$.
$\epsilon_{\text{low}}=\mathcal{O}(\sigma^{-\frac{(d+\beta)(d+p)}{d+\beta+p}}N^{-\frac{(d+p)\beta}{d+\beta+p}})$.

Now we calculate how much complexity needed for 
\begin{align}
    |f_3 -f_3^{RELU} |\lesssim \epsilon_{\text{total}}:= \sigma^{-\frac{(d+\beta)p}{2(d+\beta+p)}}N^{-\frac{p\beta}{2(d+\beta+p)}}
\end{align}

The estimator of $\frac{f_2}{\sigma f_{1,\text{clip}}}$ is $f_2^{RELU}\cdot  (\sigma f^{RELU}_{1,\text{clip}})^{-1}$. Since $|f_2^{RELU}-f_2|\le \epsilon_{\text{poly}}, |(\sigma f^{RELU}_{1,\text{clip}})^{-1}-\sigma f_{1,\text{clip}}^{-1}|\le \epsilon_{\text{inv}}+\frac{\epsilon_{\text{poly}}}{\epsilon^2_\text{inv}}$, the upper bound of $f_2$ and $(\sigma f_{1,\text{clip}})^{-1}$ is $ u_{\text{upper}}:=\max\{\mathcal{O}(R)\epsilon_{\text{low}}, \frac{1}{\sigma \epsilon_{\text{low}}}\}$. Denote $\epsilon_\text{all}:=\max\{\epsilon_{\text{poly}}, \epsilon_{\text{inv}}+\frac{\epsilon_{\text{poly}}}{\epsilon_\text{inv}^2} \}=2(\epsilon_{\text{poly}})^{\frac{1}{3}}$  (with $\epsilon_{\text{inv}}=\mathcal{O}(\epsilon_{\text{poly}})^{\frac{1}{3}}$), then by Lemma \ref{lem:nnreciprocal},
there exists $f_{\text{mult}}(x_1,x_2)\in \mathcal{S}( W, B, L, S)$ with
   $L = \mathcal{O}(\log 2(\log \epsilon_{\text{product}}^{-1}+ 2 \log u_{\text{upper}})), W = 96, S = \mathcal{O}(2 \log \epsilon_{\text{product}}^{-1} + 2\log u_{\text{upper}})), B=u_{\text{upper}}^2$
    such that
    \begin{align}
     &   \left|f_{\text{mult}}(f_2^{RELU},\sigma f^{RELU}_{1,\text{clip}})^{-1}) - \frac{f_2}{\sigma f_{1,\text{clip}}}\right| \leq \epsilon_{\text{product}} + 2 u_{\text{upper}} \epsilon_{\text{all}}.
    \end{align}
    for all $x\in [-u_{\text{upper}},u_{\text{upper}}]^2\text{ and } x'\in \R$ with $\|x-x'\|_\infty \leq \epsilon_{\text{all}}$.

We set 
 $ 2 u_{\text{upper}} \epsilon_{\text{all}}\lesssim \frac{1}{\epsilon_{\text{low}}}(\epsilon_{\text{poly}})^{\frac{1}{3}}\lesssim \epsilon_{\text{total}} $, plugging in 
$\epsilon_{\text{low}}=\mathcal{O}(\sigma^{-\frac{(d+\beta)(d+p)}{d+\beta+p}}N^{-\frac{(d+p)\beta}{d+\beta+p}})$,
we obtain 
$\epsilon_{\text{poly}}=\mathcal{O}(\sigma^{-\frac{3(d+\beta)(d+\frac{3p}{2})}{d+\beta+p}}N^{-\frac{3(d+\frac{3p}{2})\beta}{d+\beta+p}})$, hence $\epsilon_{\text{inv}}=\mathcal{O}(\epsilon_{\text{poly}})^{\frac{1}{3}}=\mathcal{O}(\sigma^{-\frac{(d+\beta)(d+\frac{3p}{2})}{d+\beta+p}}N^{-\frac{(d+\frac{3p}{2})\beta}{d+\beta+p}})$.

Note that Lemma F.3 in \cite{fu2024unveil} indicates that $W,S$ are complexities that add up in neural network summation, then $W=\mathcal{O}(N^d)$ and $S$ is determined by Equation (\ref{append:taylorsumcomplexity}) and (\ref{append:taylorsinglecomplexity}). On the other hand, $B,L$ are maximum complexities in the Taylor polynomial approximation part according to Lemma F.3 in \cite{fu2024unveil}, and $L$ is dominated by the largest $L_{i}$ during concatenation when $k=2$ is a constant in Lemma F.1 of \cite{fu2024unveil}. Hence, $B,L$ are determined by the largest possible in the neural network structure, which is the inverse part (\ref{append:inversecomplexity}).
Using Lemma F.1-F.4 in \cite{fu2024unveil} and hiding factors that are not related to 
$N$ and $\sigma$,
the final complexity is:
\begin{equation}
\begin{aligned}    
    &M=\mathcal{O}(\sigma^{-2}R)=\mathcal{O}(\sigma^{-\frac{d+\beta+2p}{d+\beta+p}}N^{\frac{\beta}{d+\beta+p}})=\mathcal{O}(\sigma^{-(1-o(1))}N^{(1-o(1))})\\
    &W=\mathcal{O}(N^d)\\
    &B=\mathcal{O}(\epsilon_{\text{inv}}^{-2})=\mathcal{O}(\sigma^{\frac{(d+\beta)(2d+3p)}{d+\beta+p}}N^{\frac{(2d+3p)\beta}{d+\beta+p}})=\mathcal{O}(\sigma^{(2d+3p-o(1))}N^{(2d+3p-o(1))})\\
    &L=\mathcal{O}(\log^2\epsilon_{\text{inv}}^{-1})=\mathcal{O}(\log^2 (\sigma^{-1})\log^2 N)\\
    &S=\mathcal{O}(\log (\sigma^{-1})N^d\log N)=\mathcal{O}(\log (\sigma^{-1})N^d)
\end{aligned}
\end{equation}

Substituting $N$ with $N^{\frac{1}{d}}$ obtains the desired result.

\end{enumerate}

\end{proof}

If replacing the $p$-th moment finite Assumption \ref{assp:moment} by the sub-Gaussian Assumption \ref{assp:subg}, the main difference is at the $\textbf{Step 1.}$ in the proof of Theorem \ref{appendix:thm::nn approx}. The following is the proof for the sub-Gaussian case Theorem \ref{thm::nn approx2}.

\begin{theorem}\label{appendix:thm::nn approx2}
Under Assumption \ref{assp:subg}, for sufficiently large $N$, there exists $f_{\gamma} \in \mathcal{S}(M,W,B,L,S)$ such that 
\begin{align*}
    \ell_{sm}(f_{\gamma})=\mathcal{O}(\sigma^{-4}\exp(-C'\sigma^{\frac{d-2}{d+1}} N^{\frac{d-2}{d(d+1)}})).
\end{align*}
The hyperparameters in the ReLU neural network class $\mathcal{S}$ satisfy
\begin{align*}
    &M = \mathcal{O}(\sigma^{-\frac{3(d+2)}{2(d+1)}}N^{\frac{d-2}{2d(d+1)}}),~
     W = \mathcal{O}(N),~
     B =\mathcal{O}(\sigma^{\frac{3\sigma^{\frac{d}{d+1}}N^{\frac{1}{d+1}}+3d^2}{d}}N^{\frac{3\sigma^{\frac{d}{d+1}}N^{\frac{1}{d+1}}}{d(d+1)}}),~
    L = \mathcal{O}(\sigma^{\frac{2d}{d+1}}N^{\frac{2}{d+1}}),\\
    ~
     &S= \mathcal{O}(\sigma^{\frac{d}{d+1}}N^{\frac{d+2}{d+1}}).
\end{align*}
where $\mathcal{O}$ hides the other factors that are not related to $N$ or $\sigma$.
\end{theorem}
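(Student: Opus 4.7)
\textbf{Proof plan for Theorem \ref{appendix:thm::nn approx2}.}
The plan is to mirror the three-region decomposition used in the proof of Theorem \ref{appendix:thm::nn approx}, but replace each polynomial tail bound by its sub-Gaussian counterpart so that the final error decays exponentially rather than polynomially in the truncation radius. Concretely, I would again write
\begin{equation*}
\int_{\R^d}\|f_{\gamma}-\nabla\log\hat{\mu}\|_2^2\,\hat{\mu}\,dx = A_1 + A_2 + A_3,
\end{equation*}
with $A_1$ the integral over $\|x\|_2>R$, $A_2$ the integral over $\{\|x\|_2\le R,\ \hat{\mu}(x)<\epsilon_{\mathrm{low}}\}$, and $A_3$ the integral over $\{\|x\|_2\le R,\ \hat{\mu}(x)\ge \epsilon_{\mathrm{low}}\}$. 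Lemma \ref{lem:convolution} still gives the linear growth $\|\nabla\log\hat{\mu}(x)\|\lesssim \sigma^{-2}(\|x\|+1)$, since the finite-second-moment hypothesis used there is implied by Assumption \ref{assp:subg}.

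For $A_1$ the key improvement is that Assumption \ref{assp:subg} and the Gaussian convolution yield $\P_{\hat{\mu}}(\|x\|_2\ge R)\lesssim \exp(-C'R^2)$ for an absolute $C'>0$, so that
\begin{equation*}
A_1 \lesssim dM^2\exp(-C'R^2) + \sigma^{-4}R^2\exp(-C'R^2).
\end{equation*}
For $A_2$ and $A_3$, I reuse the Taylor-polynomial construction: after rescaling to $[0,1]^d$, Lemma \ref{lem:holder} bounds $\|f\|_{\mathcal{H}^\beta}$ in terms of $\sigma^{-(\beta+d)}\beta^{\beta/2}R^\beta$, and the same piecewise-Taylor polynomial $q$ on an $N^d$-grid gives an approximation of $\hat{\mu}$ inside $\{\|x\|_2\le R\}$ with error controlled by $B_0(2R)^\beta N^{-\beta}d^\beta/\beta!$, where $B_0=\mathcal{O}(\sigma^{-(\beta+d)}\beta^{\beta/2})$. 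Taking $\epsilon_{\mathrm{low}}$ proportional to this error and using Lemmas \ref{lem:nnproduct} and \ref{lem:nnreciprocal} to assemble $f_3^{\mathrm{RELU}}\approx f_2/(\sigma f_{1,\mathrm{clip}})$ gives the same structural bound $A_2+A_3 \lesssim \sigma^{-4}B_0(2R)^{\beta+2}N^{-\beta}d^\beta R^d/\beta!$ as in the bounded-moment case.

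The novel step, and the main technical obstacle, is the optimization: instead of balancing a polynomial tail $R^{-(p-2)}$ against the Taylor error, I must balance $\exp(-C'R^2)$ against $R^{d+\beta+2}N^{-\beta}B_0d^\beta/\beta!$. Taking logarithms, the optimal truncation radius satisfies $R^2 \asymp \beta\log N$ up to lower order terms, but to match the stated rate I need to choose $\beta$ itself as a growing function of $N$, roughly $\beta \asymp \sigma^{d/(d+1)}N^{1/(d+1)}$, so that Stirling's estimate of $\beta^\beta/\beta!\sim e^\beta$ contributes an extra $\exp(\beta)$ factor on the log scale. With this pairing, the Taylor error acquires an exponential-in-$\beta$ decay that, combined with $R^2\asymp \beta\log N$, produces the target rate $\sigma^{-4}\exp(-C'\sigma^{(d-2)/(d+1)}N^{(d-2)/(d(d+1))})$. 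The hyperparameters $(M,W,B,L,S)$ are then read off by plugging these choices of $R$ and $\beta$ into the per-operation complexities of Lemmas \ref{lem:nnproduct} and \ref{lem:nnreciprocal} (with $M=\mathcal{O}(\sigma^{-2}R)$, $\epsilon_{\mathrm{inv}}\asymp \epsilon_{\mathrm{poly}}^{1/3}$ and $W$ controlled by the $N^d$ grid size as in Step 4 of the previous proof), which I expect will yield precisely the claimed expressions after a careful bookkeeping of $\sigma$- and $N$-dependencies. The delicate part is tracking how the Stirling factor and the factor $B_0$ combine with the chosen $\beta = \beta(N,\sigma)$ to recover the exact exponents in the statement.
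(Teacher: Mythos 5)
Your decomposition into $A_1$ (sub-Gaussian tail beyond radius $R$), $A_2$ (low-density region), and $A_3$ (Taylor-polynomial plus product/reciprocal ReLU assembly) is exactly the paper's route, and your bounds $A_1\lesssim dM^2\exp(-C'R^2)+\sigma^{-4}R^2\exp(-C'R^2)$ and $A_2+A_3\lesssim \sigma^{-4}B_0(2R)^{\beta+2}N^{-\beta}d^{\beta}R^{d}/\beta!$ match Steps 1--3 of the paper's argument. The gap is in the final parameter selection, which is precisely where the claimed rate and hyperparameters are produced. You keep the Taylor grid at $N^d$ cells with the theorem's $N$ and couple the radius to it via $R^2\asymp \beta\log N$ with $\beta\asymp \sigma^{d/(d+1)}N^{1/(d+1)}$. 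This does not "produce the target rate": with those choices the dominant terms scale like $\exp(-c\,\beta\log N)$, whose exponent $\beta\log N$ is of a different (larger) order than the claimed $\sigma^{\frac{d-2}{d+1}}N^{\frac{d-2}{d(d+1)}}=\beta^{\frac{d-2}{d}}$, so the error exponent you would obtain is not the one in the statement; and, more fatally, an $N^d$-cell grid with $\binom{d+\beta}{d}$ Taylor terms per cell forces $W,S\gtrsim N^{d}\beta^{d}$, which is incompatible with the claimed $W=\mathcal{O}(N)$ and $S=\mathcal{O}(\sigma^{\frac{d}{d+1}}N^{\frac{d+2}{d+1}})$ for $d\ge 2$ (the substitution $N\mapsto N^{1/d}$ used in the bounded-moment case does not repair this once $\beta$ also grows with $N$).

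The idea you are missing is the paper's decoupling of the two resolution parameters in the sub-Gaussian regime: the per-dimension grid count is frozen at $\mathcal{O}(\sigma^{-1})$ (so the grid contributes only $\sigma^{-d}$ cells, and $N_{\mathrm{grid}}^{-\beta}=\sigma^{\beta}$ exactly cancels the $\sigma^{-\beta}$ in $B_0$), and the Taylor order $\beta$ becomes the sole growing parameter, with radius $R=\beta^{\frac12-\frac1d}$. Then $A_2+A_3\lesssim \sigma^{-(d+4)}\beta^{-\frac{2\beta+4-d^2}{2d}}$ decays super-exponentially in $\beta$ (here the factor $\beta^{\beta/2}/\beta!\approx \beta^{-\beta/2}e^{\beta}$ is a help, not an obstruction as your Stirling remark suggests), so the tail term $\sigma^{-4}R^2\exp(-C'R^2)=\sigma^{-4}\exp(-C'\beta^{1-\frac2d}(1-o(1)))$ dominates. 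The network size then scales as $\sigma^{-d}\beta^{d+1}$ (width), $\sigma^{-d}\beta^{d+2}$ (sparsity), $\beta^{2}\log^{2}(\sigma^{-1})$ (depth), and only at the very end is $\beta$ re-expressed as $\mathcal{O}(\sigma^{\frac{d}{d+1}}N^{\frac1{d+1}})$ so that $W=\mathcal{O}(N)$, which is what yields both the stated error $\sigma^{-4}\exp(-C'\sigma^{\frac{d-2}{d+1}}N^{\frac{d-2}{d(d+1)}})$ and the stated $B,L,S$ exponents. Without this decoupling your bookkeeping cannot recover the theorem as stated.
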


\begin{proof}
    The proof of Theorem \ref{appendix:thm::nn approx2} is similar to the proof of Theorem \ref{appendix:thm::nn approx}. Only some details need modification.

    The main focus is $\textbf{Step 1.}$ of the proof of Theorem \ref{appendix:thm::nn approx}.
    By sub-Gaussian properties and integration by parts, we have:
\begin{equation}
    \int_{\|x\|_2\ge R} d \hat{\mu}_t(x) \lesssim d\exp(-C'R^2)
\end{equation}

Since sub-Gaussian implies second moment finiteness, we still have $\|\nabla\log \hat{\mu}_t(x)\|\le C_0\|x\|+M_0$ from Lemma \ref{lem:convolution}.
Then by sub-Gaussian property:
\begin{equation}
    \int_{\|x\|_{2}\ge R} \|\nabla \log \hat{\mu}_t(x)\|_2^2 \hat{\mu}_t(x) d x\lesssim \sigma^{-4}\int_{\|x\|_{2}\ge R} (\|x\|^2_{2}+1) \hat{\mu}_t(x) d x\lesssim  d\sigma^{-4}R^2 \exp(-C'R^2)
\end{equation}

Summing up the above, and plugging in the upper bound of the neural network $M$, we obtain the new upper bound for $A_1$:
\begin{equation}
\begin{aligned}
    A_1&\le 2\int_{\|x\|_{2}> R}\|f_{\gamma}(x)\|_2^2 \hat{\mu}_t(x)d x +2\int_{\|x\|_{2}> R} \|\nabla\log\hat{\mu}_t(x)\|_2^2 \hat{\mu}_t(x)d x \\
    &\lesssim d M^2 \int_{\|x\|_{2}> R} \hat{\mu}_t(x)d x  + d\sigma^{-4}R^2 \exp(-C'R^2)\\
    &\lesssim d M^2 \exp(-C'R^2) + d\sigma^{-4}R^2 \exp(-C'R^2)
\end{aligned}
\end{equation}

For $M=\mathcal{O}(\sigma^{-2}R)$, we have $A_1\le \mathcal{O}(\sigma^{-4}R^2 \exp(-C'R^2))$.

 $\textbf{Step 2.}$ and $\textbf{Step 3.}$ of the proof of Theorem \ref{appendix:thm::nn approx} do not related to moment or sub-Gaussian assumptions. From $\textbf{Step 2.}$ and $\textbf{Step 3.}$, we obtain $A_2+A_3\le\mathcal{O}(\sigma^{-4}B_0R^{d+2}(2R)^{\beta}N^{-\beta}\frac{d^{\beta}}{\beta!})$.

 Set $N=\mathcal{O}(\sigma^{-1})$,
 $\beta\gg d^2$ and $R= \beta^{(\frac{1}{2}-\frac{1}{d})}$,
 then $A_2+A_3\le\mathcal{O}(\sigma^{-(d+4)}\beta^{-\frac{2\beta+4-d^2}{2d}})$,
 $A_1\le \mathcal{O}(\sigma^{-4}\exp(-C'\beta^{(1-\frac{2}{d})}))$. 
 The overall error is $\mathcal{O}(\sigma^{-4}\exp(-C'\beta^{(1-\frac{2}{d})}))$. 
 The ReLU approximation error $|f_3 -f_3^{RELU} |\lesssim  (B_0 (2R)^{\beta} N^{-\beta} \frac{d^{\beta}}{\beta !}R^d)^{\frac{1}{2}}=\mathcal{O}(\sigma^{-\frac{d}{2}}\beta^{(-\frac{\beta}{2d}+\frac{1}{4}d-\frac{1}{2})})$. $\epsilon_{\text{low}}=\mathcal{O}(\sigma^{-d}\beta^{-\frac{\beta}{d}})$.

Now we calculate how much complexity needed for 
\begin{align}
    |f_3 -f_3^{RELU} |\lesssim \epsilon_{\text{total}}:= \sigma^{-\frac{d}{2}}\beta^{(-\frac{\beta}{2d}+\frac{1}{4}d-\frac{1}{2})}
\end{align}

From here we follow the same procedure as $\textbf{Step 4.}$ of the proof of Theorem \ref{appendix:thm::nn approx}. We set 
 $ 2 u_{\text{upper}} \epsilon_{\text{all}}\lesssim \frac{1}{\epsilon_{\text{low}}}(\epsilon_{\text{poly}})^{\frac{1}{3}}\lesssim \epsilon_{\text{total}} $  and $\epsilon_{\text{inv}}=\mathcal{O}(\epsilon_{\text{poly}})^{\frac{1}{3}}$. Plugging in 
$\epsilon_{\text{low}}=\mathcal{O}(\sigma^{-d}\beta^{-\frac{\beta}{d}})$
, we obtain 
$\epsilon_{\text{poly}}=\mathcal{O}(\sigma^{-\frac{9d}{2}}\beta^{(-\frac{9\beta}{2d}+\frac{3}{4}d-\frac{3}{2})})$, 
$\epsilon_{\text{inv}}=\mathcal{O}(\sigma^{-\frac{3d}{2}}\beta^{(-\frac{3\beta}{2d}+\frac{1}{4}d-\frac{1}{2})})$.

Using Lemma F.1-F.4 in \cite{fu2024unveil} and hiding factors that do not depend on lower order terms of $\sigma$ and $\beta$, the final complexity is:
\begin{equation}
\begin{aligned}    
    &M=\mathcal{O}(\sigma^{-2} R)=\mathcal{O}(\sigma^{-2}\beta^{(\frac{1}{2}-\frac{1}{d})})\\
    &W=\mathcal{O}(N^d\beta^{d+1})=\mathcal{O}(\sigma^{-d}\beta^{d+1})\\
    &B=\mathcal{O}(\epsilon_{\text{inv}}^{-2})=\mathcal{O}(\sigma^{3d}\beta^{\frac{3\beta}{d}})\\
    &L=\mathcal{O}(\log^2\epsilon_{\text{inv}}^{-1})=\mathcal{O}((\log \sigma^{-1})^2\beta^2)\\
    &S=\mathcal{O}(\log(\sigma^{-1})N^d\beta^{d+2})=\mathcal{O}(\sigma^{-d}\beta^{d+2})
\end{aligned}
\end{equation}

Denoting $\beta$ as $\mathcal{O}(N^{\frac{1}{d+1}}\sigma^{\frac{d}{d+1}})$ (slight abuse of notation $N$) gives the desired result.

\end{proof}

\subsection{Proof of Corollary \ref{coro1} and Corollary \ref{coro2}}  \label{appd:proof4}

\begin{corollary}\label{appendix:coro1}
    To guarantee that Assumption \ref{assp:approx} holds with $\ell_{sm}(\hat{f_{\gamma}})\le \epsilon_k=\mathcal{O}(\epsilon)$, under Assumption \ref{assp:moment}, it suffices to require that the neural network complexity $N=\mathcal{O}(\sigma^{-\frac{7d}{3-o(1)}}\epsilon^{-\frac{d}{3-o(1)}})$, and the sample size $n=\mathcal{O}(\sigma^{-\frac{7d+20}{3-o(1)}}\epsilon^{-\frac{d+8}{3-o(1)}})$, where $\mathcal{O}$ hides the other terms that are not related to $\epsilon$ or $\sigma$.
\end{corollary}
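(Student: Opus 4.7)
The plan is to derive the two complexity bounds by combining the approximation bound in Theorem \ref{thm::nn approx} with the generalization bound in Theorem \ref{thm:generalization}. The score matching error of the ERM estimator $\hat{f}_\gamma$ is split into (i) the approximation error $\inf_{f_\gamma \in \mathcal{S}} \ell_{sm}(f_\gamma)$, controlled by enlarging the ReLU network family, and (ii) the sample (generalization) error, controlled by collecting enough samples. To guarantee $\ell_{sm}(\hat{f}_\gamma) \le \mathcal{O}(\epsilon)$, I will force each of these two contributions to be $\mathcal{O}(\epsilon)$ separately and read off the required $(N, n)$.

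First, by Theorem \ref{thm::nn approx} the approximation error satisfies $\inf_{f_\gamma \in \mathcal{S}} \ell_{sm}(f_\gamma) = \mathcal{O}(\sigma^{-7+o(1)} N^{-(3-o(1))/d})$. Demanding this to be at most $\epsilon/2$ immediately yields
\[
N = \mathcal{O}\!\left(\sigma^{-\frac{7d}{3-o(1)}}\, \epsilon^{-\frac{d}{3-o(1)}}\right),
\]
which is exactly the announced network complexity. For this $N$, Theorem \ref{thm::nn approx} also prescribes the other network hyperparameters $(M,W,B,L,S)$, and I will simply plug those into Theorem \ref{thm:generalization}. Using the orders $M^2 = \mathcal{O}(\sigma^{-2+o(1)} N^{(2-o(1))/d})$, $S = \mathcal{O}(\log(\sigma^{-1})\,N)$, $L = \mathcal{O}(\log^2(\sigma^{-1})\log^2 N)$ and the fact that $\log \mathcal{N}$ in Theorem \ref{thm:generalization} is only polylogarithmic in $N, B, W, \sigma^{-1}, n$, the dominant term in the generalization bound scales like
\[
\sqrt{\tfrac{(M^2 + d/\sigma^{2})\, SL \cdot \mathrm{polylog}}{n}} \;=\; \tilde{\mathcal{O}}\!\left(\sqrt{\tfrac{\sigma^{-2} N^{(d+2)/d}}{n}}\right).
\]

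Setting this at most $\epsilon$, solving for $n$, and substituting the value of $N$ obtained above gives
\[
n \;\gtrsim\; \frac{\sigma^{-2}\, N^{(d+2)/d}}{\epsilon^{2}}
\;=\; \tilde{\mathcal{O}}\!\left(\sigma^{-2 - \frac{7(d+2)}{3-o(1)}}\, \epsilon^{-2 - \frac{d+2}{3-o(1)}}\right)
\;=\; \mathcal{O}\!\left(\sigma^{-\frac{7d+20}{3-o(1)}}\, \epsilon^{-\frac{d+8}{3-o(1)}}\right),
\]
which matches the stated sample complexity once the $2(3-o(1))$ terms in the numerators are absorbed into the $o(1)$ factors. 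The first inequality $\ell_{sm}(\hat{f}_\gamma) \le 2\inf_\mathcal{S} \ell_{sm}(f_\gamma) + (\text{generalization})$ in Theorem \ref{thm:generalization} then gives $\ell_{sm}(\hat{f}_\gamma) = \mathcal{O}(\epsilon)$, completing the derivation.

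The only mildly delicate point is bookkeeping: the hyperparameters in Theorem \ref{thm::nn approx} carry $o(1)$ exponents and several polylogarithmic factors in $N$ and $\sigma^{-1}$, and these must be tracked through $M^2 \cdot S L$ without letting them blow up the polynomial rates. In particular, one should verify that the $\log \mathcal{N}$ factor, which contains $\log(W L (B\vee 1)\sigma(n/\delta)\log d)$, contributes only $\mathrm{polylog}$ corrections absorbed into the $\mathcal{O}(\cdot)$, and that the $d/\sigma^{2}$ term is dominated by $M^{2}$ under the chosen scalings. All other steps are direct substitutions.
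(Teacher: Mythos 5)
Your proposal is correct and follows essentially the same route as the paper: choose $N$ so that the approximation error in Theorem \ref{thm::nn approx} is $\mathcal{O}(\epsilon)$, then substitute the resulting hyperparameters $M,S,L$ into the generalization bound of Theorem \ref{thm:generalization} and solve for $n$, with the $o(1)$ and polylogarithmic factors (including the $\log n$ inside $\log\mathcal{N}$ and the $d/\sigma^2$ term dominated by $M^2$) absorbed exactly as you indicate. Your exponent bookkeeping reproduces the stated rates $N=\mathcal{O}(\sigma^{-\frac{7d}{3-o(1)}}\epsilon^{-\frac{d}{3-o(1)}})$ and $n=\mathcal{O}(\sigma^{-\frac{7d+20}{3-o(1)}}\epsilon^{-\frac{d+8}{3-o(1)}})$, matching the paper's (more terse) argument.
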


\begin{proof}
    To guarantee that Assumption \ref{assp:approx} holds that $\ell_{sm}(\hat{f_{\gamma}})\le \epsilon_k=\mathcal{O}(\epsilon)$, we need $\mathcal{O}\left(\sqrt{\frac{M^2+\frac{d}{\sigma^2}}{n}\log\frac{\mathcal{N}}{\delta}}\right)=\mathcal{O}(\epsilon)$ and $\ell_{sm}(f_{\gamma})=\mathcal{O}(\epsilon)$. Theorem \ref{thm::nn approx} shows that the neural network complexity $N=\mathcal{O}(\sigma^{-\frac{7d}{3-o(1)}}\epsilon^{-\frac{d}{3-o(1)}})$, with $M=\mathcal{O}(\sigma^{-\frac{10}{3-o(1)}}\epsilon^{-\frac{1}{3-o(1)}})$. Plugging $M, S, L$ into Theorem \ref{thm:generalization} gives that the sample complexity $n=\mathcal{O}(\sigma^{-\frac{7d+20}{3-o(1)}}\epsilon^{-\frac{d+8}{3-o(1)}})$.
\end{proof}

\begin{corollary}\label{appendix:coro2}
    To guarantee that Assumption \ref{assp:approx} holds with $\ell_{sm}(\hat{f_{\gamma}})\le \epsilon_k=\mathcal{O}(\epsilon)$, under Assumption \ref{assp:subg}, it suffices to require that the neural network complexity $N=\mathcal{O}(\sigma^{-d}\text{polylog}(\epsilon^{-1}))$, and the sample complexity $n=\mathcal{O}(\sigma^{-d}\text{polylog}(\epsilon^{-1}) \epsilon^{-1})$, where $\mathcal{O}$ hides the other terms that are not related to $\epsilon$ or $\sigma$.
\end{corollary}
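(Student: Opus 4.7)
The plan is to combine Theorem \ref{thm:generalization2} (generalization bound under the sub-Gaussian assumption) with Theorem \ref{thm::nn approx2} (approximation bound for ReLU networks) in the standard bias--variance decomposition. Specifically, Theorem \ref{thm:generalization2} gives
\[
\ell_{sm}(\hat{f_{\gamma}})\ \le\ 4\inf_{f_{\gamma}\in\mathcal{S}}\ell_{sm}(f_{\gamma})\ +\ \mathcal{O}\!\left(\frac{M^2}{n}\log\mathcal{N}\right),
\]
so I would choose the network class $\mathcal{S}(M,W,B,L,S)$ large enough that the approximation error is $\mathcal{O}(\epsilon)$, then pick $n$ large enough that the generalization term is also $\mathcal{O}(\epsilon)$.

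First, to make the approximation error $\inf_{f_{\gamma}\in\mathcal{S}}\ell_{sm}(f_{\gamma})$ of order $\epsilon$, invert the exponential bound in Theorem \ref{thm::nn approx2}:
\[
\sigma^{-4}\exp\!\bigl(-C'\sigma^{\frac{d-2}{d+1}}N^{\frac{d-2}{d(d+1)}}\bigr)\ =\ \mathcal{O}(\epsilon),
\]
which, after taking logarithms and solving for $N$, yields $N=\mathcal{O}(\sigma^{-d}\,\mathrm{polylog}(\epsilon^{-1}))$, giving the first half of the corollary. Substituting this choice of $N$ back into the hyperparameter formulas of Theorem \ref{thm::nn approx2}, I would check that $M^2=\mathcal{O}(\sigma^{-\mathcal{O}(1)}\mathrm{polylog}(\epsilon^{-1}))$ and that the product $SL$, which controls the dominant term in $\log\mathcal{N}$, scales as $\mathcal{O}(\sigma^{-d}\mathrm{polylog}(\epsilon^{-1}))$. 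The remaining logarithmic factors $\log(nLW(B\vee 1)M\log(1/\delta))$ and $\log^3(nd/\sigma\delta)$ contribute only polylogarithmic corrections in $\epsilon^{-1}$ and $\sigma^{-1}$.

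Finally, I would enforce the generalization term to be $\mathcal{O}(\epsilon)$ by solving
\[
\frac{M^2\log\mathcal{N}}{n}\ =\ \mathcal{O}(\epsilon)\qquad\Longrightarrow\qquad n\ =\ \mathcal{O}\!\left(\frac{M^2\log\mathcal{N}}{\epsilon}\right),
\]
and plugging in the bounds from the previous step to get $n=\mathcal{O}(\sigma^{-d}\mathrm{polylog}(\epsilon^{-1})\,\epsilon^{-1})$, as claimed (absorbing any fixed polynomial factors in $\sigma^{-1}$ of lower order into the polylogarithmic and $\sigma^{-d}$ factors, in the spirit of the hidden $\mathcal{O}$ notation declared in the statement).

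The main technical obstacle is bookkeeping: the hyperparameters $M, W, B, L, S$ all depend non-trivially on both $N$ and $\sigma$, and they appear in nested fashion inside $\log\mathcal{N}$. One must be careful that (i) the inversion of the exponential approximation rate indeed produces only polylogarithmic (not polynomial) dependence on $\epsilon^{-1}$, which hinges on the exponent $\tfrac{d-2}{d(d+1)}$ of $N$ in the exponential being strictly positive (so $d\geq 3$ implicitly), and (ii) none of the $\sigma$-exponents in $M^2\cdot SL$ overpower the leading $\sigma^{-d}$ from $S$. Apart from this tracking exercise, the argument is a routine composition of the two theorems already established.
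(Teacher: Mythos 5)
Your proposal matches the paper's own proof: the paper likewise inverts the exponential approximation bound of Theorem \ref{thm::nn approx2} to obtain $N=\mathcal{O}(\sigma^{-d}\mathrm{polylog}(\epsilon^{-1}))$, substitutes the resulting $M,\log B,L,S$ into Theorem \ref{thm:generalization2}, and solves $M^2\log\mathcal{N}/n=\mathcal{O}(\epsilon)$ for $n$, absorbing the residual low-order $\sigma$ and logarithmic factors exactly as you describe (the paper's intermediate bound is in fact $n=\mathcal{O}(\epsilon^{-1}\sigma^{-(d+4)}\mathrm{polylog}(\epsilon^{-1}))$ before these are omitted). Your bookkeeping remarks, including the implicit $d\ge 3$ needed for the exponent $\tfrac{d-2}{d(d+1)}$ to be positive, are consistent with what the paper does.
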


\begin{proof}
    To guarantee that Assumption \ref{assp:approx} holds that $\ell_{sm}(\hat{f_{\gamma}})\le \epsilon_k=\mathcal{O}(\epsilon)$, we need $\mathcal{O}\left(\frac{M^2}{n}\mathcal{N}\right)=\mathcal{O}(\epsilon)$ and $\ell_{sm}(f_{\gamma})=\mathcal{O}(\epsilon)$. By Theorem \ref{thm::nn approx2}, $\sigma^{-4}\exp(-C'\sigma^{\frac{d-2}{d+1}}N^{\frac{d-2}{d(d+1)}})\le \epsilon$, hence $N=\\ \mathcal{O}((C')^{-\frac{d(d+1)}{d-2}}\sigma^{-d}(\log\epsilon^{-1})^{\frac{d(d+1)}{d-2}})$. Then 
\begin{equation}
\begin{aligned}    
    &M=\mathcal{O}(\sigma^{-\frac{3(d+2)}{2(d+1)}}N^{\frac{d-2}{2d(d+1)}})=\mathcal{O}((C')^{-\frac{1}{2}}\sigma^{-2}(\log\epsilon^{-1})^{\frac{1}{2}})\\
    &\log B=\mathcal{O}(\sigma^{\frac{d}{d+1}} N^{\frac{1}{d+1}})=\mathcal{O}((C')^{-\frac{d}{d-2}}(\log\epsilon^{-1})^{\frac{d}{d-2}})\\
    &L=\mathcal{O}(\sigma^{\frac{2d}{d+1}}N^{\frac{2}{d+1}})=\mathcal{O}((C')^{-\frac{2d}{d-2}}(\log\epsilon^{-1})^{\frac{2d}{d-2}})\\
    &S=\mathcal{O}(\sigma^{\frac{d}{d+1}}N^{\frac{d+2}{d+1}})=\mathcal{O}((C')^{-\frac{d(d+2)}{d-2}}\sigma^{-d}(\log\epsilon^{-1})^{\frac{d(d+2)}{d-2}}).
\end{aligned}
\end{equation}
    Plugging $M, \log B, L,S$ into Theorem \ref{thm:generalization2} and ignoring logarithm terms gives the sample complexity $n=\mathcal{O}(\epsilon^{-1}[(C')^{-\frac{d^2+6d-2}{d-2}}\sigma^{-(d+4)}(\log\epsilon^{-1})^{\frac{d^2+6d-2}{d-2}}])$. Omitting the constants leads to $N=\mathcal{O}(\sigma^{-d}\text{polylog}(\epsilon^{-1}) )$, $n=\mathcal{O}(\sigma^{-d}\text{polylog}(\epsilon^{-1}) \epsilon^{-1})$.
\end{proof}

\section{Additional Details of Experiments}\label{appd:exp_setup}

\subsection{Gaussian Mixture}

For the 2D experiment, the target marginal probability
of each cluster is 1/5, the mean of each cluster is randomly sampled from the standard Gaussian distribution, the standard deviations of the clusters are 0.1, 0.2, 0.3, 0.4, 0.5. The initial distribution is the Gaussian distribution $\mathcal{N}((3,0),0.25\mathbf{I})$. The number of particles is 1000. 

For the neural network structure, we follow the setting of \citet{cheng2023gwg}. For $L_2$-GF, Ada-GWG and Ada-SIFG, we parameterize $f_{\gamma}$ as 3-layer neural networks with $\textit{tanh}$ activation function. Each hidden layer has $32$ neurons. The inner loop iteration is 5 and we use SGD optimizer with Nesterov momentum (momentum 0.9) to train $f_{\gamma}$ with learning rate $\eta$=1e-3. For $L_2$-GF and Ada-GWG, the particle step size is $0.1$. For Ada-SIFG, the particle step size is $0.01$.

For Ada-GWG, we set the initial exponent $p_0=2$ and the learning rate $\Tilde{\eta}= 5$e-7. For Ada-SIFG, we set the initial noise magnitude $\sigma_0=0.12$ and the learning rate $\hat{\eta}= 1$e-9.

For the 10D experiment, the target marginal probability
of each cluster is 1/5, the mean of each cluster is randomly sampled from the standard Gaussian distribution, the standard deviations of the clusters are 0.1, 0.2, 0.3, 0.4, 0.5. The initial distribution is the standard Gaussian distribution. The number of particles is 1000. The neural network structure is the same as which in the 2D Gaussian mixture experiment for $L_2$-GF, Ada-GWG, SIFG and Ada-SIFG. 
For $L_2$-GF and Ada-GWG, the particle step size is $0.1$. For SIFG and Ada-SIFG, the particle step size is $0.01$.

For Ada-GWG, we set the initial exponent $p_0=2$ and the learning rate $\Tilde{\eta}= 5$e-7. For SIFG, we set the initial noise magnitude $\sigma_0=0.1$. For Ada-SIFG, we align with the initial noise magnitude $\sigma_0=0.1$ of SIFG, and choose the learning rate $\hat{\eta}= 1$e-9.

We run the experiment on 5 random seeds. The average results and the variances are represented in the figure using lines and shades.

\subsection{Monomial Gamma}

We follow the same setting as \citet{cheng2023gwg}. The number of particles is 1000. For $L_2$-GF and Ada-GWG, the neural network structure and its training optimizer are the same as which in the Gaussian mixture experiment. For all methods, the inner loop iteration is 5. For $L_2$-GF and Ada-GWG, we use Adam optimizer with learning rate $\eta$=1e-3 to update the particles for better stability. For SIFG and Ada-SIFG, we use SGD optimizer with learning rate $\eta$=1e-2.

For Ada-GWG, we set the initial exponent $p_0=2$ and learning rate $\Tilde{\eta}= 1$.  For SIFG, we set the initial noise magnitude $\sigma_0=0.1$. For Ada-SIFG, we align with the initial noise magnitude $\sigma_0=0.1$ of SIFG, and choose the learning rate $\hat{\eta}= 1$e-10.

We run the experiment on 5 random seeds. The average results and the variances are represented in the figure using lines and shades.

\subsection{Independent Component Analysis}

We follow the setting of \citet{korba2021ksdd}. For all methods, the total number of iterations is 2000, which is sufficient for all methods to converge. For $L_2$-GF, Ada-GWG, SIFG and Ada-SIFG, we parameterize $f_{\gamma}$ as 3-layer neural networks with $\textit{tanh}$ nonlinearities. Each hidden layer has $120$ neurons. The inner loop $N'$ is 20. We use SGD optimizer with learning rate $\eta$=1e-3 to train $f_{\gamma}$. The particle step size is 
chosen from \{3e-4, 1e-3, 3e-3\} to obtain the best performance. 

For Ada-GWG, we set the initial exponent $p_0=2$ and the learning rate $\Tilde{\eta}$ chosen from $\{0.002, 0.004, 0.008\}$ for the best performance. We clip the exponent $p$ within $[1.1, 6]$. The gradient of $A(p)$ is also clipped within $[-0.1, 0.1]$. 

For SIFG and Ada-SIFG, we set the initial noise magnitude $\sigma_0=0.03$. For Ada-SIFG, the learning rate for $\sigma$ is chosen from \{1e-6, 2e-6, 4e-6\} to obtain the best performance.

For SVGD, we use RBF kernel $\exp(-\frac{\|x-y\|^2}{h})$, where $h$ is the heuristic bandwidth \citep{Liu2016SVGD}. 
The particle step size is
chosen from \{1e-2, 3e-2, 1e-1\} to obtain the best performance.

Additionally, we run standard HMC with step size 5e-4 and leap-frog number 40 for $100000$ iterations as the ground truth for the posterior distribution.

\subsection{Bayesian Neural Networks}\label{sec:detail-bnn}
We follow the settings of \citet{cheng2023gwg}. 
For the UCI datasets, the datasets are randomly partitioned into 90\% for training and 10\% for testing. 
Then, we further split the training dataset by 10\% to create a validation set for hyperparameter selection as done in \citet{cheng2023gwg}. We select the step size of particle updates from \{1e-4, 2e-4, 5e-4, 1e-3\} for the best performance.
For $L_2$-GF, Ada-GWG and Ada-SIFG, we parameterize $f_{\gamma}$ as 3-layer neural networks. Each hidden layer has $300$ neurons, and we use LeakyReLU as the activation function with a negative slope of 0.1. The inner loop $N'$ is 10. We use the Adam optimizer and choose the learning rate from \{1e-3, 1e-4\} to train $f_{\gamma}$. 

For Ada-GWG, we choose the initial exponent $p_0$ to be 2 and set the learning rate $\Tilde{\eta}$ to be 1e-4. The gradient of $A(p)$ is clipped within [-0.2, 0.2]. For Ada-SIFG, we choose the initial noise magnitude $\sigma$ from $\{0.1,0.01\}$. The learning rate for $\sigma$ is chosen from \{2e-5, 4e-5, 1e-4\} for the best performance. For SVGD, we use the RBF kernel as done in~\citep{Liu2016SVGD}.
For methods except SGLD, the iteration number is chosen to be 2000 to converge. 
For SGLD, the iteration number is set to 10000 to converge.

We additionally verify the fifth moment assumption on the BNN experiments. Based on 10 independent runs, Figure \ref{figure: bnn_moments} displays the fifth moments of the particle distributions along training iterations. We can see that the moments are uniformly bounded for all datasets during training. 

\begin{figure}[!t]
    \centering
    \includegraphics[width=0.90\linewidth]{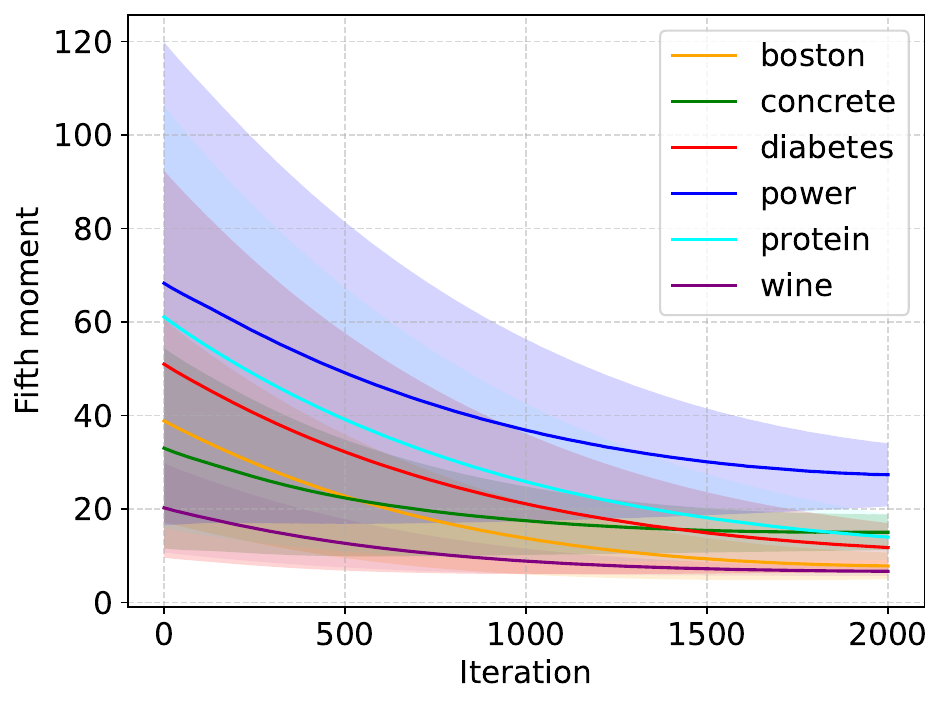}
    \caption{The fifth moments of SIFG for BNN experiments on the six datasets.
    }
    \label{figure: bnn_moments}
\end{figure}

\end{document}